
\documentclass{article}

\usepackage{microtype}
\usepackage{graphicx}
\usepackage{booktabs} 
\usepackage{xcolor}
\usepackage{amsmath,amssymb,amsthm,amsfonts}
\usepackage{dsfont}
\usepackage{subcaption}

\newtheorem{theorem}{Theorem}
\newtheorem{lemma}{Lemma}
\newtheorem{definition}{Definition}
\newtheorem{corollary}{Corollary}
\newtheorem{proposition}{Proposition}
\usepackage{algorithm}
\usepackage{hyperref}


\usepackage[accepted]{icml2020}
\icmltitlerunning{Adversarial Learning Guarantees for Linear Hypotheses and Neural Networks}

\usepackage[utf8]{inputenc}
\usepackage{graphicx}
\usepackage{float}
\usepackage{caption}
\usepackage{subcaption}
\usepackage{multirow, multicol}

\usepackage{amsmath}
\usepackage{amsfonts}
\usepackage[nointegrals]{wasysym}
\usepackage{hyperref}
\usepackage{microtype}
\usepackage{nccmath}

\usepackage{cases}
\usepackage{mathtools}
\usepackage{amsmath}
\usepackage{amsthm}
\usepackage{amsfonts}
\usepackage{amssymb}
\usepackage{wrapfig}
\usepackage{graphicx}
\usepackage[mathscr]{euscript}

\usepackage{MnSymbol}
\DeclareMathAlphabet\mathbb{U}{msb}{m}{n}
\usepackage{xpatch}

\makeatletter
\newtheorem*{rep@theorem}{\rep@title}
\newcommand{\newreptheorem}[2]{%
\newenvironment{rep#1}[1]{%
\def\rep@title{#2 \ref{##1}}%
\begin{rep@theorem}}%
{\end{rep@theorem}}}
\makeatother

\def\Rset{\mathbb{R}}

\DeclareMathOperator{\spn}{span}
\DeclareMathOperator{\interior}{int}
\DeclareMathOperator{\dual}{dual}
\DeclareMathOperator*{\E}{\mathbb E}

\DeclareMathOperator*{\argmin}{argmin}

\DeclareMathOperator*{\sgn}{sgn}

\newcommand{\cA}{\mathcal{A}}
\newcommand{\cC}{\mathcal{C}}
\newcommand{\cD}{\mathcal{D}}
\newcommand{\cF}{\mathcal{F}}
\newcommand{\cG}{\mathcal{G}}

\newcommand{\cS}{\mathcal{S}}

\newcommand{\cZ}{\mathcal{Z}}

\newcommand{\bA}{{\mathbf A}}

\newcommand{\bI}{{\mathbf I}}

\newcommand{\bM}{{\mathbf M}}

\newcommand{\bU}{{\mathbf U}}

\newcommand{\bW}{{\mathbf W}}
\newcommand{\bX}{{\mathbf X}}

\newcommand{\bZ}{{\mathbf Z}}

\newcommand{\bb}{{\mathbf b}}
\newcommand{\bc}{{\mathbf c}}
\newcommand{\be}{{\mathbf e}}

\newcommand{\bs}{{\mathbf s}}
\newcommand{\bt}{{\mathbf t}}
\newcommand{\bu}{{\mathbf u}}
\newcommand{\bv}{{\mathbf v}}
\newcommand{\bw}{{\mathbf w}}
\newcommand{\bx}{{\mathbf x}}
\newcommand{\by}{{\mathbf y}}
\newcommand{\bz}{{\mathbf z}}
\newcommand{\bsig}{{\boldsymbol \sigma}}
\newcommand{\bsigma}{{\boldsymbol \sigma}}

\newcommand{\R}{\mathfrak R}

\newcommand{\h}{\widehat}

\newcommand{\wt}{\widetilde}
\newcommand{\e}{\epsilon}
\newcommand{\set}[2][]{#1 \{ #2 #1 \} }
\newcommand{\ignore}[1]{}

\hypersetup{
	colorlinks   = true,
	urlcolor = blue,
	linkcolor= blue,
	citecolor= blue
}

\newcommand{\eps}{\e}
\newcommand{\Ex}[2][]{\E_{#1}\left[ #2\right]}
\newcommand{\du}[1]{ {#1^*}}

\newcommand{\one}{\mathbf{1}}
\newcommand{\Esymb}{\mathbb{E}}

\newcommand{\norm}[1]{{\left|\left| #1 \right|\right|}}
\newcommand{\la}{\left \langle}
\newcommand{\ra}{\right \rangle}

\newif\ifnotes\notestrue

\newreptheorem{theorem}{Theorem}
\newreptheorem{lemma}{Lemma}

\ifnotes
\usepackage{color}
\definecolor{mygrey}{gray}{0.50}
\newcommand{\notename}[2]{{\textcolor{blue}{\footnotesize{\bf (#1:} {#2}{\bf ) }}}}

\else
\newcommand{\notename}[2]{{}}

\fi

\makeatletter
\newcommand*{\addFileDependency}[1]{
  \typeout{(#1)}
  \@addtofilelist{#1}
  \IfFileExists{#1}{}{\typeout{No file #1.}}
}
\makeatother



\begin{document}

\twocolumn[
\icmltitle{Adversarial Learning Guarantees for\\ 
Linear Hypotheses and Neural Networks}

\icmlsetsymbol{equal}{*}

\begin{icmlauthorlist}
\icmlauthor{Pranjal Awasthi}{google}
\icmlauthor{Natalie Frank}{nyu}
\icmlauthor{Mehryar Mohri}{googlenyu}
\end{icmlauthorlist}

\icmlaffiliation{google}{Google Research and Rutgers University}
\icmlaffiliation{nyu}{Courant Institute of Math. Sciences}
\icmlaffiliation{googlenyu}{Google Research and Courant Institute of Math. Sciences}

\icmlcorrespondingauthor{Natalie Frank}{nf1066@nyu.edu}

\vskip 0.3in
]
\printAffiliationsAndNotice{}

\begin{abstract}

  Adversarial or test time robustness measures the susceptibility of a
  classifier to perturbations to the test input. While there has been
  a flurry of recent work on designing defenses against such
  perturbations, the theory of adversarial robustness is not well
  understood. In order to make progress on this, we focus on the
  problem of understanding generalization in adversarial settings, via
  the lens of Rademacher complexity.

  We give upper and lower bounds for the adversarial empirical
  Rademacher complexity of linear hypotheses with adversarial
  perturbations measured in $l_r$-norm for an arbitrary $r \geq
  1$. This generalizes the recent result of Yin et
  al.~\cite{YinRamchandranBartlett2019} that studies the case of
  $r = \infty$, and provides a finer analysis of the dependence on the
  input dimensionality as compared to the recent work of Khim and
  Loh~\cite{khim2018adversarial} on linear hypothesis classes.  We
  then extend our analysis to provide Rademacher complexity lower and
  upper bounds for a single ReLU unit. Finally, we give adversarial
  Rademacher complexity bounds for feed-forward neural networks with
  one hidden layer. Unlike previous works we directly provide bounds
  on the adversarial Rademacher complexity of the given network, as
  opposed to a bound on a surrogate. A by-product of our analysis also
  leads to tighter bounds for the Rademacher complexity of linear
  hypotheses, for which we give a detailed analysis and present a
  comparison with existing bounds.
\end{abstract}

\section{Introduction}

Robustness is a key requirement when designing machine learning models
and comes in various forms such as robustness to training set
corruptions, missing feature values, and model
misspecification.
In recent years, requiring robustness to {\em adversarial} or {\em
  test time} perturbations has become a key requirement. Starting with
the work of \citet{SzegedyZaremba2014} it has now been well
established that deep neural networks trained via standard gradient
descent based algorithms are highly susceptible to imperceptible
corruptions to the input at test time \citep{goodfellow2014explaining,
  chen2017targeted, EykholtEvtimov2018, carlini2018audio}. This has
led to a proliferation of work aimed at designing classifiers robust
to such perturbations \citep{madry2017towards, gowal2018effectiveness,
  gowal2019alternative, schott2018towards} and works aimed at
designing more sophisticated attacks to break such classifiers
\citep{athalye2018obfuscated, carlini2017towards, sharma2017breaking}

While the above works have made significant progress in designing
practical defenses, theoretical aspects of adversarial robustness are
currently poorly understood unlike other notions of
training set corruptions that have been widely studied in both the
statistics and the computer science communities
\citep{huber2011robust, kearns1993learning,
  kearns1994toward}. Theoretical understanding of adversarial
robustness presents three main challenges. The first is a
computational one since even checking the robustness of a given model
at a given test input is an NP-hard problem \citep{AwasthiDV19}. This
has been explored in recent works that construct specific instances of
learning problems where standard non-robust learning can be done
efficiently, but learning a robust classifier becomes computationally
hard \citep{bubeck2018adversarial, bubeck2018adversarial2,
  nakkiran2019adversarial, degwekar2019computational}. The second
challenge concerns whether achieving adversarial robustness requires
one to compromise on standard accuracy. Recent works have shown
specific instance where this tradeoff is inherent
\citep{tsipras2018robustness, raghunathan2019adversarial}.

Finally, the third challenge, the main focus of this work, is the
question of what quantity governs generalization in adversarial
settings, and how generalization in adversarial settings compares to
its non-adversarial counterpart. The recent work of
\citet{schmidt2018adversarially} has shown, via specific
constructions, that in some scenarios achieving adversarial
generalization requires more data as compared to adversarial
generalization. Furthermore, the work of \citet{montasser2019vc} casts
a shadow of doubt on the use of classical quantities such as the
VC-dimension of explain generalization in adversarial settings.

However, generalization of function classes of infinite VC dimension (like SVMs with a Gaussian kernel) can be explained via margin based bounds. Characterizing the Rademacher complexity of the function class is essential in these estimates. In a similar vein, we believe that providing non-trivial bounds on the adversarial Rademacher complexity can help shed light on when generalization is possible in adversarial settings via similar margin based bounds. The difficulty is that current bounds on adversarial Rademacher complexity are too loose and vacuous in many settings. This is the barrier that we aim to overcome in this work.

In order to make progress on the mystery of adversarial
generalization, a recent line of work \citep{khim2018adversarial,
  YinRamchandranBartlett2019} aims to study the notion of Rademacher
complexity for various function classes in
the adversarial settings. Focusing mainly on the case of linear
models, these works aim to quantify the additional overhead in sample
complexity that is incurred when requiring adversarial
generalization. Extending the ideas to the case of more general neural
networks becomes more challenging and as a result there works instead
bound the Rademacher complexity in terms of the Rademacher complexity
of an appropriate surrogate. In this work we extend this line of work
along several directions.

\noindent \textbf{Our Contributions}.  We provide a general analysis
of the adversarial Rademacher complexity of linear models that holds
for perturbations measured in any $\ell_p$ norm. This extends the
prior work of Yin et al. \yrcite{YinRamchandranBartlett2019} that
applies only to $\ell_\infty$ adversarial perturbation and provided a
finer analysis of linear models as compared to the work of
\citet{khim2018adversarial}. 

As a consequence of our analysis, we
provide a sharp characterization of when the adversarial Rademacher
complexity suffers from an additional dimension dependent term as
compared to its non-adversarial counterpart. This has algorithmic
implications for designing appropriate regularizers for adversarial
learning of linear models. As an additional byproduct, we are able to
provide improved Rademacher complexity bounds for linear classifiers,
even in non-adversarial scenarios!

As a next step towards understanding neural networks, we then extend
our analysis to provide data dependent upper and lower bounds on the
adversarial Rademacher complexity of a single ReLU unit.

Finally, we provide upper bounds on the adversarial Rademacher
complexity of one hidden layer neural networks. As opposed to prior
works \citep{YinRamchandranBartlett2019, khim2018adversarial}, our
bounds directly apply to the original network as opposed to a
surrogate. Our bounds for neural networks come in two forms. We first
provide a general upper bound that applies to any neural network with
Lipschitz activations. This bound as a dependence on the underlying
dimensionality of the input data. Next, we provide a finer data
dependent upper bound that is related to the {$\epsilon$-adversarial
  growth function} of the data, a quantity we introduce in this work.

\noindent \textbf{Comparison with Prior Work} The works of Yin et
al.~\yrcite{YinRamchandranBartlett2019} and Khim and
Loh~\yrcite{khim2018adversarial} previously studied the adversarial
Rademacher complexity of linear classifiers and neural networks. Our
work adds to this line of research in multiple
ways. In~\cite{YinRamchandranBartlett2019} the authors analyze the
adversarial Rademacher complexity of linear models when perturbations
are measured in $\ell_\infty$ norm. They show that in this case the
adversarial Rademacher complexity of the loss class is bounded by the
sum of its non-adversarial counterpart and a dimension dependent
term. Our result is a strict generalization
of~\cite{YinRamchandranBartlett2019} because we provide the analysis
of adversarial Rademacher complexity when the perturbations are
measured in any general $\ell_r$ norm. 

The recent work of Khim and
Loh~\yrcite{khim2018adversarial} also studies the adversarial
Rademacher complexity of linear models under general
perturbations. While the bounds are qualitatively similar, our
analysis explicitly identifies the dimension dependent term in the
general case and as a result can be used to perform better model
selection when optimizing the adversarial loss. In addition, we
provide a matching lower bound on the adversarial Rademacher
complexity of linear models. In the process, we also improve upon the
existing classical analysis of (non-adversarial) Rademacher complexity
of linear models, which is of independent interest.

For the case of neural networks, both the works of Yin et
al.~\yrcite{YinRamchandranBartlett2019} and Khim and
Loh~\yrcite{khim2018adversarial} replace the adversarial loss defined
as $\min_{x': \|x'-x\| \leq \epsilon} \phi(yf(x'))$, by a surrogate
upper bound and analyze the resulting Rademacher complexity of the
surrogate. In the work of Yin et
al.~\yrcite{YinRamchandranBartlett2019} the surrogate is chosen to be
an upper bound on the adversarial loss based on a semi-definite
programming~(SDP) based relaxation. In the work
of~\cite{khim2018adversarial} the surrogate is based on the
adversarial loss of another neural network that is derived from the
original one via a tree based decomposition. In general, these bounds
on the surrogate might not lead to meaningful generalization bounds on
the original adversarial loss. We instead directly analyze the
Rademacher complexity of the adversarial loss.

\section{Notation and Preliminaries}

We will denote vectors as lowercase bold letters (e.g., $\bx$) and
matrices are uppercase bold (e.g., $\bX$). The all ones vector is
$\one$. H\"older conjugates are denoted by a star (e.g.,
$\du{r}$). For a matrix $\bM$, the $(p, q)$-\emph{group norm} is defined as
the
$\norm\bM_{p, q} =
\|(\|\bM_1\|_1,\ldots ,\|\bM_d\|_p )\|_q$, where the $\bM_i$s are
the columns of $\bM$.  We focus on binary classification over examples
in $\Rset^d$ and adversarial perturbations measured in $\ell_r$-norm
for $r \geq 1$.  Given a loss function $\ell\colon \Rset \to
[0,c]$, we define the loss of a hypothesis $f\colon
\Rset^d \to \Rset$ on a pair $(\bx, y) \in \Rset^d \times
\{+1,-1\}$ as $\ell_f(\bx, y) = \ell(yf(\bx))$.
As in the standard setting of classification, given a sample $\cS =
\{(\bx_1, y_1), (\bx_2, y_2), \dots, (\bx_m, y_m)\}$ drawn
i.i.d.\ from a distribution $\cD$ over $\Rset^d \times \{+1,-1\}$, we
define the empirical risk and the expected risk of a hypothesis $f$ as
\begin{align*}
R_\cS(f) &= \frac{1}{m} \sum_{i=1}^m \ell_f(\bx_i, y_i)\\
R(f) &= \E_\cS[R_\cS(f)] 
\end{align*}
Given an instance space $\cZ$, let $\cF$ be a class of functions from
$\cZ \to \Rset$. Given $\cS = (z_1, z_2, \dots, z_m) \subset
\cZ^m$, the empirical Rademacher complexity of the class $\cF$ is
defined to be
\begin{align}
\R_\cS(\cF)=\E_\bsig \left[ \sup_{f\in \cF}\frac 1 m \sum_{i=1}^m \sigma_i f(z_i) \right] \label{eq:erc_def}
\end{align}
where $\bsig$ is a vector of i.i.d.\ Rademacher random
variables. A tight characterization of the
uniform convergence of empirical risk to its expected value is given in terms of Rademacher complexity. Good bounds tend to come from \emph{margin bounds:}

\begin{theorem}\cite{MohriRostamizadehTalwalkar2018}\label{th:erc}
    Let $\cF$ be a family of functions and let $\ell $ be a loss function. Define $\ell_\cF=\{(\bx,y)\mapsto \ell(y f(\bx))\}$. Further, let $\cS$ be a sample, and let $\rho>0,\delta>0$.
    Define $\Phi_\rho(x)$ to be the $\rho$-margin loss:
    \[\Phi_\rho(x)=\min(1,\max(0,1-\frac x\rho))\] 
    and set
    \[\h R_{\cS,\rho}(f)=\frac 1m \sum_{i=1}^m \Phi_\rho (y_if(x_i))\]
    Then
    \[R(f)\leq \h R_{\cS,\rho}(f)+\frac 2 \rho \R_\cS(\ell_\cF)+3\sqrt{\frac{\log \frac 2\delta}{2m}}\]
    holds with probability st least $1-\delta$.

\end{theorem}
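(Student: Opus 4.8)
The plan is to follow the standard three-step route to a margin bound: reduce the binary loss to the $\rho$-margin loss, invoke the generic Rademacher uniform-convergence bound for loss classes valued in a bounded interval, and then remove the margin loss by a contraction inequality. \textbf{Step 1 (reduction).} Since $\one[u\le 0]\le\Phi_\rho(u)$ for every $u\in\Rset$, for each $f\in\cF$ we have $R(f)=\Pr_{(\bx,y)\sim\cD}[\,yf(\bx)\le 0\,]\le\E_{(\bx,y)\sim\cD}[\Phi_\rho(yf(\bx))]$, so it suffices to control $\E[\Phi_\rho(yf(\bx))]$ uniformly over $f\in\cF$ by its empirical average $\h R_{\cS,\rho}(f)$ plus complexity terms.

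\textbf{Step 2 (uniform convergence).} I would apply the standard Rademacher-based generalization bound for families of functions valued in $[0,1]$ to the class $\wt\cF=\{(\bx,y)\mapsto\Phi_\rho(yf(\bx)):f\in\cF\}$, noting that $\Phi_\rho$ indeed takes values in $[0,1]$. This yields, with probability at least $1-\delta$, simultaneously for all $f\in\cF$,
\[
\E\big[\Phi_\rho(yf(\bx))\big]\le\h R_{\cS,\rho}(f)+2\,\R_\cS(\wt\cF)+3\sqrt{\tfrac{\log(2/\delta)}{2m}}.
\]
Internally this is proved by: (i) a bounded-differences (McDiarmid) estimate for $\Psi(\cS)=\sup_{g\in\wt\cF}\big(\E[g]-\tfrac1m\sum_{i}g(z_i)\big)$, whose per-coordinate differences are at most $1/m$ because $\Phi_\rho\in[0,1]$; (ii) the symmetrization argument bounding $\E_\cS[\Psi(\cS)]$ by $2\,\E_\cS[\R_\cS(\wt\cF)]$; and (iii) a second McDiarmid application to pass from $\E_\cS[\R_\cS(\wt\cF)]$ to the empirical quantity $\R_\cS(\wt\cF)$, which is what turns the deviation term into $3\sqrt{\log(2/\delta)/(2m)}$.

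\textbf{Step 3 (contraction).} Since $\Phi_\rho$ is $\tfrac1\rho$-Lipschitz (its additive constant being harmless inside the Rademacher supremum), the contraction (Talagrand) lemma gives $\R_\cS(\wt\cF)\le\tfrac1\rho\,\R_\cS\big(\{(\bx,y)\mapsto yf(\bx):f\in\cF\}\big)$; and because each $\sigma_iy_i$ is again a Rademacher variable, the labels $y_i$ are absorbed into the $\sigma_i$, leaving $\R_\cS(\wt\cF)\le\tfrac1\rho\,\R_\cS(\cF)$ — the quantity $\tfrac1\rho\,\R_\cS(\ell_\cF)$ in the statement. Chaining Steps 1--3 gives $R(f)\le\h R_{\cS,\rho}(f)+\tfrac2\rho\,\R_\cS(\ell_\cF)+3\sqrt{\log(2/\delta)/(2m)}$, as claimed.

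The step I expect to require the most care is Step 2: making the symmetrization rigorous and tracking constants so that the bounded-differences parameter is exactly $1/m$ (the range of $\Phi_\rho$ is $[0,1]$) and the two McDiarmid applications combine into the single deviation term $3\sqrt{\log(2/\delta)/(2m)}$ rather than $\sqrt{\log(1/\delta)/(2m)}$. Steps 1 and 3 are each essentially one line, modulo care with the composition $\Phi_\rho\circ(yf)$ and with absorbing the sign variables $y_i$ into the Rademacher variables $\sigma_i$.
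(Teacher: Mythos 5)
The paper does not actually prove this theorem; it is imported from \cite{MohriRostamizadehTalwalkar2018}, so there is no in-paper proof to compare against. Your three-step route --- reduction via $\one[u\le 0]\le\Phi_\rho(u)$, uniform convergence over the bounded class $\wt\cF=\{(\bx,y)\mapsto\Phi_\rho(yf(\bx))\}$, and Talagrand contraction using the $\tfrac1\rho$-Lipschitzness of $\Phi_\rho$ --- is exactly the standard proof in that reference, and the quantitative bookkeeping you describe (per-coordinate McDiarmid deviation $1/m$, two concentration applications combining into $3\sqrt{\log(2/\delta)/(2m)}$, the factor $2$ from symmetrization) is correct.

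One point should not be elided. What your contraction step produces is $\tfrac1\rho\R_\cS(\cF)$, whereas the statement as written has $\tfrac1\rho\R_\cS(\ell_\cF)$ with $\ell_\cF=\{(\bx,y)\mapsto\ell(yf(\bx))\}$ for an unspecified loss $\ell$. Your closing parenthetical treats these as the same quantity, but for a general $\ell$ they are not: contraction together with absorbing the labels into the Rademacher signs hands you the complexity of the raw predictor class $\cF$, not of the loss-composed class $\ell_\cF$. Likewise, Step 1 reads $R(f)$ as the misclassification probability $\Pr[yf(\bx)\le 0]$, which is what the cited result controls, while the paper's own earlier notation sets $R(f)=\E[\ell(yf(\bx))]$; these agree only when $\ell$ is the 0-1 loss. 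In short, you have given a correct proof of the margin bound as it appears in \cite{MohriRostamizadehTalwalkar2018} (misclassification error, complexity term $\R_\cS(\cF)$); the appearance of $\ell_\cF$ in the paper's transcription is best regarded as a notational slip, and the right move is to flag that mismatch rather than assert the identification in passing.
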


This theorem is significant because margin bounds can yield meaningful guarantees for rich classes even with infinite VC-dimension.

\noindent \textbf{Robust Classification.} We now extend the
definitions above to their adversarial counterparts. In the setting of
adversarially robust classification, the loss at $(\bx, y)$ is
measured in terms of the worst loss incurred over an adversarial
perturbation of $\bx$ within an $\|\cdot\|_r$ ball of a certain
radius. We will denote by $\epsilon$ the magnitude of the allowed
perturbations. Given $\epsilon > 0$, $r \geq 1$, a data point
$(\bx,y)$, a function $f\colon \Rset^d \to \Rset$, and a loss function $l\colon
\Rset \to [0,c]$ we define the adversarial loss of $f$ at $(\bx,y)$ as
\begin{align*}
\tilde{\ell}_f(\bx, y) = \sup_{\|\bx - \bx'\|_r \leq \epsilon}\ell(yf(\bx'))
\end{align*}
Similarly, we define the adversarial empirical risk
and the adversarial expected risk of a hypothesis $f$ for a sample $\cS$ as follows:
\begin{align*}
\wt{R}_\cS(f) &= \frac{1}{m} \sum_{i=1}^m \tilde{\ell}_f(\bx_i, y_i)\\
\wt{R}(f) &= \Esymb_\cS[\wt{R}_\cS(f)]
\end{align*}
With the above definitions, the following is an immediate application of Theorem~\ref{th:erc} above.
\begin{theorem}[Robust margin bounds]
\label{th:robust-erc}
Let $\epsilon \geq 0$, $\rho$ and $r \geq 1$. Let $\cF$ be a family of
functions from $\Rset^d$ to $\Rset$ and $\ell$ be a loss function taking
values in $[0, c]$. For any distribution $\cD$ over $\Rset^d \times
\{+1, -1\}$, given $\delta > 0$, and a sample $\cS=((\bx_1,y_1)\cdots (\bx_m,y_m))$ drawn i.i.d.\ from $\cD$, the
following holds with probability at least $1-\delta$: $\forall f \in
\cF$
\begin{align}
\label{eq:robust-uniform-convergence}
\wt{R}(f)\leq  \wt{R}_\cS(f)+\frac 2\rho \wt\R_\cS(\ell_{\cF})+3c\sqrt{\frac {\log \frac 2\delta}{2m}}.
\end{align}
\end{theorem}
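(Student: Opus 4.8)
The plan is to obtain Theorem~\ref{th:robust-erc} as a direct specialization of Theorem~\ref{th:erc}, applied not to the loss class $\ell_\cF$ but to the \emph{adversarial} loss class
\[
\wt\ell_\cF \;:=\; \bigl\{ (\bx,y)\mapsto \wt\ell_f(\bx,y) \;:\; f\in\cF \bigr\},
\qquad
\wt\ell_f(\bx,y) = \sup_{\|\bx'-\bx\|_r\le\epsilon}\ell(yf(\bx')).
\]
The first observation is that $\wt\ell_\cF$ is again a legitimate object of the kind Theorem~\ref{th:erc} expects: each $\wt\ell_f$ is a supremum of $[0,c]$-valued quantities, hence itself a $[0,c]$-valued function on $\cZ=\Rset^d\times\{+1,-1\}$. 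Moreover, comparing with the definitions of Section~2, $\wt R_\cS(f)=\frac1m\sum_{i=1}^m\wt\ell_f(\bx_i,y_i)$ is precisely the empirical risk associated with this loss, and $\wt R(f)=\E_{(\bx,y)\sim\cD}\bigl[\wt\ell_f(\bx,y)\bigr]$ the corresponding expected risk. So $(\wt R,\wt R_\cS)$ stands in relation to $\wt\ell_\cF$ exactly as $(R,R_\cS)$ stands in relation to $\ell_\cF$ in Theorem~\ref{th:erc}.

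Given this, I would simply run the argument behind Theorem~\ref{th:erc} verbatim with $\ell_\cF$ replaced by $\wt\ell_\cF$. Every step of that argument -- McDiarmid's inequality to pass from $\wt R(f)-\wt R_\cS(f)$ to its expectation, symmetrization to introduce the Rademacher variables, and the contraction/margin step producing the $\frac2\rho$ factor together with the margin loss $\Phi_\rho$ -- treats the loss class purely as an abstract family of real-valued functions on $\cZ$; none of it uses the particular form $\ell(yf(\bx))$ or any structural property of $\cF$ that $\wt\ell_\cF$ does not already share. Consequently the same chain of inequalities gives $\wt R(f)\le\wt R_\cS(f)+\frac2\rho\,\wt\R_\cS(\ell_\cF)+(\text{deviation term})$ for all $f\in\cF$ simultaneously, with probability at least $1-\delta$.

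The only quantity that changes is the deviation term, and the change is purely in a constant: altering one example $(\bx_i,y_i)$ of $\cS$ changes $\wt R_\cS(f)$ by at most $c/m$, since $\wt\ell_f$ ranges in $[0,c]$, so the bounded-differences inequality contributes $3c\sqrt{\frac{\log(2/\delta)}{2m}}$ in place of the $3\sqrt{\frac{\log(2/\delta)}{2m}}$ one gets for the unit-range margin loss. Collecting the pieces yields exactly \eqref{eq:robust-uniform-convergence}; in the special case $\ell=\Phi_\rho$, the term $\wt R_\cS(f)$ is the adversarial empirical $\rho$-margin loss and the inequality reads as a genuine robust margin bound.

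The one place that actually needs care -- and the only potential obstacle -- is that $\wt\ell_f$ must be well defined and measurable for $\wt R(f)$ and the concentration step to make sense, i.e. the supremum over the perturbation ball should be attained in a measurable way; it suffices that $\bx'\mapsto\ell(yf(\bx'))$ be upper semicontinuous, so the sup is attained on the compact set $\{\bx':\|\bx'-\bx\|_r\le\epsilon\}$, which holds for the continuous losses and hypothesis classes considered throughout the paper. I would also stress why the reduction must go through the loss class rather than through $\cF$: $\wt\ell_f(\bx,y)=\sup_{\bx'}\ell(yf(\bx'))$ is in general \emph{not} of the form $\ell(y\,g_f(\bx))$ for any single real-valued function $g_f$, so one cannot replace $\cF$ by an ``adversarial hypothesis class'' and reuse the margin-of-$f$ form of Theorem~\ref{th:erc}; working at the level of the loss class, as Theorem~\ref{th:erc} is already phrased, is exactly what makes this application immediate.
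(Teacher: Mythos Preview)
Your proposal is correct and matches the paper's own treatment: the paper simply states that Theorem~\ref{th:robust-erc} ``is an immediate application of Theorem~\ref{th:erc} above,'' i.e., exactly your reduction of replacing the loss class $\ell_\cF$ by the adversarial loss class $\wt\ell_\cF$ and rerunning the same argument. Your write-up is in fact more careful than the paper's one-line remark (discussing the bounded-differences constant $c$, measurability, and why one must work at the level of the loss class), so there is nothing to correct.
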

Here, $\wt\R_\cS(\ell_{\cF})$ is the adversarial Rademacher complexity of the class $\ell_{\cF}$, and is defined by
\begin{align}
\label{eq:adversarial-Rademacher-complexity-loss-class}
\wt\R_\cS(\ell_{\cF}) = \E_\bsig \left[ \sup_{f\in \cF}\frac 1 m \sum_{i=1}^m \sigma_i \sup_{\|{\bx_i-\bx'_i}\|_r \leq \epsilon}\ell(y_i f(\bx'_i)) \right].
\end{align}
Throughout the paper, we will assume that the loss function $\ell$ is
non-increasing, a property satisfied by many common loss functions
including the hinge loss, logistic loss and the exponential loss. In
that case, as pointed out in~\cite{YinRamchandranBartlett2019}, the following equality holds:
\[
\sup_{\|{\bx_i-\bx'_i}\|_r \leq \epsilon}\ell(y_i f(\bx'_i)) 
= \ell \Big(\inf_{\|{\bx_i-\bx_i'}\|_r \leq \epsilon} y_i f(\bx'_i)\Big).
\]
Furthermore, when $\ell(\cdot)$ is $L$-Lipschitz, 
by Talagrand's contraction Lemma~\cite{LedouxTalagrand91}, we have
$\wt\R_\cS(\ell_{\cF}) \leq L\R_\cS(\wt \cF)$,
where $\wt \cF$ is the class defined as
\begin{align*}
\wt \cF 
= \big\{(\bx,y) \mapsto \inf_{\|{\bx-\bx'}\|_r \leq \epsilon} y f(\bx')\colon f \in \cF\big\}.
\end{align*}
Hence we get that
\begin{align}
\wt\R_\cS(\ell_{\cF})
& \leq L\R_\cS(\wt \cF) \nonumber\\
& = L \E_\bsig \left[ \sup_{f\in \cF}\frac 1 m \sum_{i=1}^m \sigma_i \inf_{\|{\bx_i-\bx'_i}\|_r \leq \epsilon}y_i f(\bx'_i) \right] \label{eq:adversarial-Rademacher-expr}.
\end{align}
Providing sharp bounds for \eqref{eq:adversarial-Rademacher-expr} for various function classes $\cF$ will be the central focus of this work.

\section{Adversarial Rademacher Complexity of Linear Hypotheses}
\label{sec:lin_rc}

In this section we provide a sharp characterization of the adversarial Rademacher complexity, as defined in \eqref{eq:adversarial-Rademacher-expr}, for linear function classes with bounded $p$-norm and with perturbations measured in any $r$-norm. Prior work~\cite{YinRamchandranBartlett2019} studied the case when the perturbations are measured in the $\ell_\infty$-norm. Our general analysis leads to a deeper understanding of the interplay between the complexity of the hypothesis classes~(measured in $p$-norm) and the perturbation set~(measured in $r$-norm), and how this dictates whether one can expect an additional dimension dependent penalty in the adversarial case over its non-adversarial counterpart. Furthermore, our analysis explicitly characterizes the dimension dependent term on which the adversarial Rademacher complexity depends on. This provides a finer analysis than the work of \citet{khim2018adversarial} and also has algorithmic implications.
Formally, we study the case when
\begin{align}
\cF_p = \{\bx \mapsto \la \bw, \bx \ra \colon \|{\bw}\|_p \leq W\}\label{eq:linear_function_class}
\end{align}

\subsection{Rademacher Complexity of Linear Hypotheses}

A crucial aspect of our analysis in the linear case is a more general
upper bound on the Rademacher complexity of $p$-norm bounded linear
function classes, in the non-adversarial case. We first state this
general bound as it will play an important role in later sections when
analyzing the adversarial Rademacher complexity of ReLU functions and
more general neural networks. 

\begin{theorem}
\label{th:linear_rc}
Let $\cF_p$ be the class of functions defined in
\eqref{eq:linear_function_class}. Then, given a sample
$\cS = \{(\bx_1, y_1), \dots, (\bx_m, y_m)\}$ we have

\begin{align}
\R_\cS(\cF_p)\leq
\begin{cases}
\frac W m\sqrt{{2\log(2d)}}\| {\bX^T}\|_{2,\infty} & \text{if $p=1$} \nonumber \\
\frac{\sqrt{2}W}{m} \bigg[\frac{\Gamma( \tfrac{\du p + 1}{2} )}{\sqrt{\pi}} \bigg]^{\frac{1}{\du p}} \|{\bX^T}\|_{2,\du p} & \text{if $1 <p \le 2$} \\ 
\frac{W}{m}\| \bX^T \|_{2, \du p} & \text{if  $p \ge 2$} \nonumber
\end{cases}
\end{align}
\end{theorem}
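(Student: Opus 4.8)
The plan is to reduce the whole statement to a single estimate on the expected $\du p$-norm of a Rademacher sum, and then split into the three regimes according to where $\du p$ sits relative to $2$.

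\emph{Step 1 (dual reduction).} Unwinding \eqref{eq:erc_def} and using that the supremum of a linear functional over a $p$-norm ball of radius $W$ equals $W$ times the $\du p$-norm of the functional (Hölder's inequality),
\[
\R_\cS(\cF_p) = \frac{1}{m}\,\E_\bsig\Bigl[\sup_{\|\bw\|_p\le W}\bigl\langle \bw,\, \textstyle\sum_{i=1}^m\sigma_i\bx_i\bigr\rangle\Bigr] = \frac{W}{m}\,\E_\bsig\Bigl[\bigl\|\textstyle\sum_{i=1}^m \sigma_i \bx_i\bigr\|_{\du p}\Bigr].
\]
Writing $x_{i,j}$ for coordinate $j$ of $\bx_i$, coordinate $j$ of $\bZ:=\sum_i\sigma_i\bx_i$ is a weighted Rademacher sum $Z_j=\sum_i\sigma_i x_{i,j}$ with $\E[Z_j^2]=\sum_i x_{i,j}^2$, i.e.\ the squared $2$-norm of the $j$-th column of $\bX^\top$. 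It remains to bound $\E_\bsig\|\bZ\|_{\du p}$ and recombine.

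\emph{Step 2 (the case $p=1$).} Here $\du p=\infty$ and $\|\bZ\|_\infty=\max_{j\in[d],\,s\in\{\pm1\}} sZ_j$ is a maximum of $2d$ centered sub-Gaussian variables with variance proxy at most $\|\bX^\top\|_{2,\infty}^2$; the standard maximal inequality (Massart's finite-class lemma) gives $\E\|\bZ\|_\infty\le\sqrt{2\log(2d)}\,\|\bX^\top\|_{2,\infty}$, which is the first case. \emph{Step 3 (the cases $p>1$).} Now $\du p<\infty$, so by convexity of $t\mapsto t^{\du p}$ and Jensen, $\E\|\bZ\|_{\du p}\le(\E\|\bZ\|_{\du p}^{\du p})^{1/\du p}=(\sum_{j=1}^d \E|Z_j|^{\du p})^{1/\du p}$, which reduces the problem to the single-coordinate moments. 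If $1<p\le 2$, then $\du p\ge 2$ and Khintchine's inequality with the optimal constant (Haagerup) gives $\E|Z_j|^{\du p}\le\bigl(\sqrt2\,[\Gamma(\tfrac{\du p+1}{2})/\sqrt\pi]^{1/\du p}\bigr)^{\du p}(\sum_i x_{i,j}^2)^{\du p/2}$; summing over $j$, taking the $\du p$-th root, and recognizing $\bigl(\sum_j(\sum_i x_{i,j}^2)^{\du p/2}\bigr)^{1/\du p}=\|\bX^\top\|_{2,\du p}$ yields the second case. If $p\ge 2$, then $\du p\le 2$ and $t\mapsto t^{\du p/2}$ is concave, so Jensen gives $\E|Z_j|^{\du p}=\E[(Z_j^2)^{\du p/2}]\le(\E Z_j^2)^{\du p/2}=(\sum_i x_{i,j}^2)^{\du p/2}$ with constant $1$, which is the third case; at $p=2$ the Khintchine constant above also equals $1$, so the bound is continuous there.

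\emph{Main obstacle.} The only nonroutine ingredient is the sharp $L_{\du p}$-Khintchine constant in the regime $\du p\ge 2$ (extremal distribution Gaussian, hence the $\Gamma$-expression), for which I would cite Haagerup's theorem rather than reprove it; everything else is Hölder duality, Jensen, and a textbook sub-Gaussian maximal bound. The one bookkeeping point to watch is that the estimate is naturally phrased in terms of the $2$-norms of the coordinate slices $(x_{1,j},\dots,x_{m,j})$, $j\in[d]$, and one must check this equals $\|\bX^\top\|_{2,\du p}$ under the paper's group-norm convention.
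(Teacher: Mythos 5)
Your proposal is correct and follows essentially the same route as the paper: reduce to $\E_\bsigma\|\sum_i\sigma_i\bx_i\|_{\du p}$ via Hölder duality, handle $p=1$ with Massart's finite-class lemma on $2d$ vectors, and for $p>1$ apply Jensen to pass to the $\du p$-th moment and then Khintchine/Haagerup coordinatewise, recovering $\|\bX^T\|_{2,\du p}$. The only cosmetic difference is that for $p\ge 2$ you derive the constant $1$ directly from concavity of $t\mapsto t^{\du p/2}$ rather than quoting Khintchine with $B_{\du p}=1$, which is the same observation.
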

Here $\bX$ is the $d \times m$ matrix with the data points $\bx_i$ as columns. We make a few remarks about the theorem above and defer its proof to Appendix~\ref{app:new_linear_rc_proof}. 
Some well-known bounds on the Rademacher complexity of $\cF_p$ are

\ignore{
\begin{fleqn}
\begin{align*}
\R_\cS(\cF_p)\leq
\end{align*}
\end{fleqn}
}
\begin{align}
\R_\cS(\cF_p)\leq
\begin{cases}
W\sqrt{\frac{2\log(2d)}m}\| \bX\|_{\max} & \text{ if  $p=1$} \\ 
\frac W m \sqrt{\du p-1} \|{\bX}\|_{\du p,2} & \text{ if $1< p\leq 2$} \label{eq:previous_linear_rc}
\end{cases}
\end{align}
\ignore{
Recall that the $\| \cdot\|_{p_1,p_2}$ is the group norm defined as the $p_2$-norm of the $p_1$-norm of the columns of $\bX$:
\[\| \bX\|_{p_1,p_2}=\|(\|\bx_1\|_{p_1},\ldots,\|\bx_m\|_{p_1})\|_{p_2}\]
and
}

\begin{figure}[t]
\centering
\begin{tabular}{cc}
\includegraphics[scale=.25]{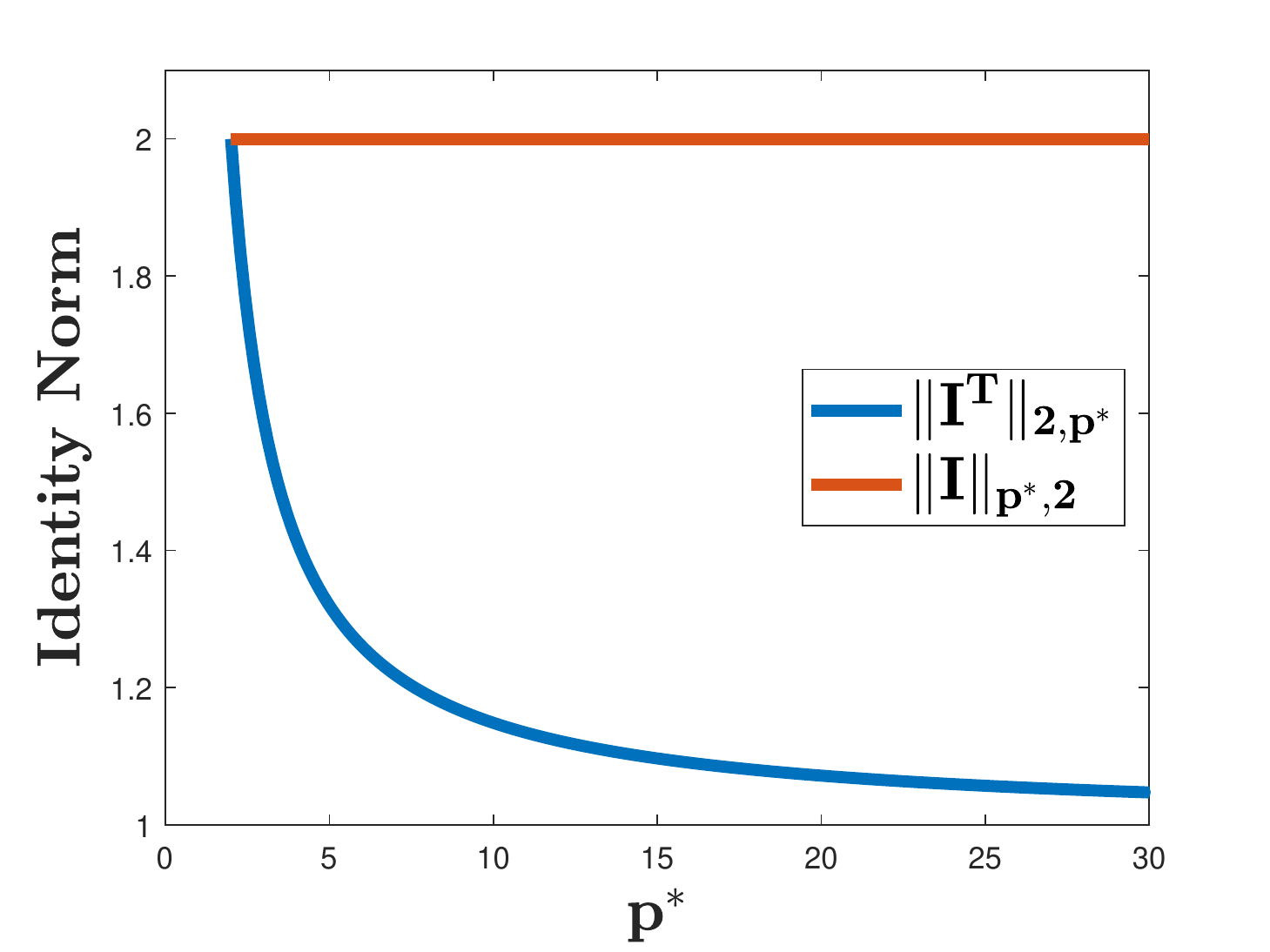}
& \includegraphics[scale=.25]{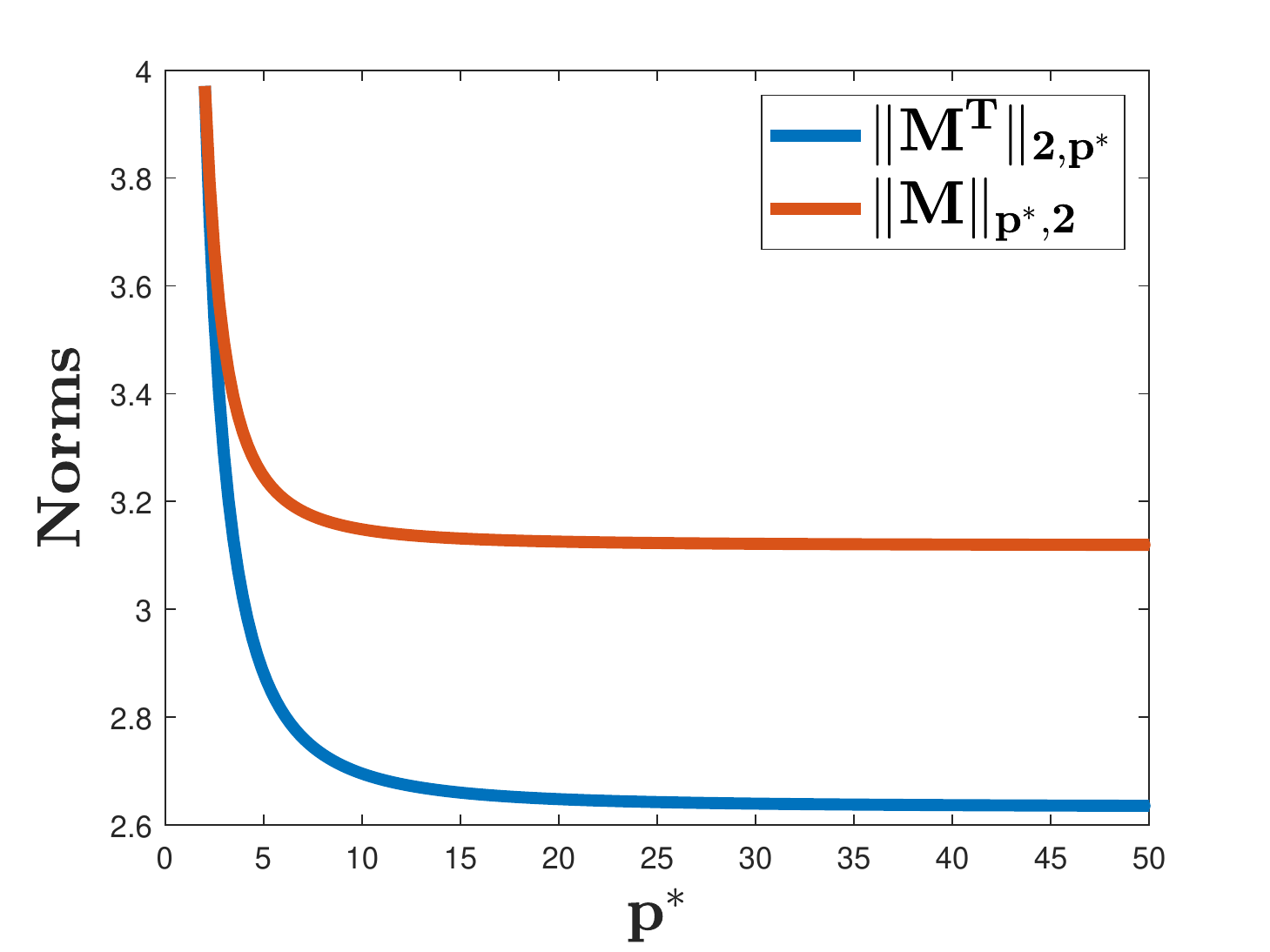}\\
(a) & (b)
\end{tabular}
\vskip -.1in
\caption{\label{fig:norms}(a) A plot comparing two norms of the
  $4 \times 4$ identity matrix, $\|\bI^T\|_{2,\du p}$ and
  $\|\bI\|_{\du p,2}$; the lower bound on the ratio of the two norms
  (\ref{eq:q_leq_p}) in Proposition~\ref{prop:norm_ratio} holds for
  this matrix. (b) Same as (a), but for Gaussian matrices.}
\vskip -.15in
\end{figure}

Although the case $p \in [1,2]$ in the theorem above is known
\cite{KakadeSridharantTewari2008, MohriRostamizadehTalwalkar2018}, we
provide a simpler proof of in
Appendix~\ref{app:prev_known_linear_rc_proof}. The inequality for
$p=1$ is further reproduced for completeness. Our new bound coincides
with (\ref{eq:previous_linear_rc}) when $p=2$ and is strictly better
otherwise. Readers familiar with Rademacher complexity bounds for
linear functions will notice that our bound in this case depends on
the norm $\|\bX^T\|_{2,p^*}$.  In contrast, standard bounds on the
Rademacher complexity of linear classes depend on $\|\bX\|_{p^*,
  2}$. In fact one can show that the $\|\bX^T\|_{2,p^*}$ is always
smaller than $\|\bX\|_{p^*, 2}$ for $p \in (1,2]$, that is
$p^* \geq 2$, as shown by the last inequality of \eqref{eq:q_leq_p} in
the following proposition.

\begin{proposition}
\label{prop:norm_ratio} 
Let $\bM$ be a $d\times m$ matrix.
If $q\leq p$, then 
\begin{equation}
\label{eq:q_leq_p}
\min(m,d)^{\frac 1p-\frac 1q} \|\bM^T\|_{p,q}\leq \|\bM\|_{q,p}\leq \|\bM^T\|_{p,q}\end{equation}
If $q\geq p$, then 
\begin{equation}\label{eq:p_leq_q}\min(m,d)^{\frac 1p-\frac 1q} \|\bM^T\|_{p,q}\geq \|\bM\|_{q,p}\geq \|\bM^T\|_{p,q}\end{equation}
These bounds are tight.

\end{proposition}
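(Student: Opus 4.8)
The plan is to reduce the whole statement to two classical facts applied with careful index bookkeeping: the discrete Minkowski (mixed-norm interchange) inequality, and the finite-dimensional nesting of $\ell_s$ norms. I would begin by writing $\bM = (M_{ij})$ with $i \in \{1,\dots,d\}$ indexing rows and $j \in \{1,\dots,m\}$ indexing columns, setting $A := \|\bM^T\|_{p,q}$ and $B := \|\bM\|_{q,p}$, and unwinding the definitions to $A = \big(\sum_{i}(\sum_{j}|M_{ij}|^{p})^{q/p}\big)^{1/q}$ (an inner $\ell_p$ over $j$, then an outer $\ell_q$ over $i$) and $B = \big(\sum_{j}(\sum_{i}|M_{ij}|^{q})^{p/q}\big)^{1/p}$ (an inner $\ell_q$ over $i$, then an outer $\ell_p$ over $j$). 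Thus $A$ and $B$ differ only in the order in which the two single-index norms are taken and in which exponent is paired with which index.

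\emph{Interchange inequalities.} For $0 < r \le s$ and any nonnegative array $(a_{xy})$ one has $\big(\sum_{x}(\sum_{y}a_{xy}^{r})^{s/r}\big)^{1/s} \le \big(\sum_{y}(\sum_{x}a_{xy}^{s})^{r/s}\big)^{1/r}$ — putting the smaller exponent on the outside gives the larger value — which is just the ordinary Minkowski inequality applied to $a_{xy}^{r}$ with exponent $s/r\ge 1$. Applying this with the correct identification of $\{x,y\}$ with $\{i,j\}$ and $\{r,s\}$ with $\{q,p\}$ immediately yields $B \le A$ when $q \le p$ and $B \ge A$ when $q \ge p$, which are the rightmost inequalities of \eqref{eq:q_leq_p} and \eqref{eq:p_leq_q}.

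\emph{Dimension-dependent inequalities.} Assume $q \le p$ (the case $q \ge p$ is the mirror image). I would bound $A$ by $B$ in two ways and keep the smaller factor. Route through $d$: with $v_i := (\sum_j |M_{ij}|^p)^{1/p}$ we have $A = \|v\|_q$ on $\Rset^d$, so $\|v\|_q \le d^{1/q-1/p}\|v\|_p$ by norm nesting; since $\|\bM_j\|_p \le \|\bM_j\|_q$ columnwise, summing over columns gives $\|v\|_p \le B$, hence $A \le d^{1/q-1/p}B$. Route through $m$: from $(\sum_j|M_{ij}|^p)^{q/p} \le \sum_j |M_{ij}|^q$ we get $A^q \le \sum_j c_j$ with $c_j := \sum_i |M_{ij}|^q$, and nesting on $\Rset^m$ gives $\sum_j c_j = \|c\|_1 \le m^{1-q/p}\|c\|_{p/q} = m^{1-q/p}B^q$, hence $A \le m^{1/q-1/p}B$. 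Combining, $A \le \min(m,d)^{1/q-1/p}B$, i.e.\ the leftmost inequality of \eqref{eq:q_leq_p}; the symmetric argument gives \eqref{eq:p_leq_q}.

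\emph{Tightness and the main obstacle.} A single nonzero row or column of $\bM$ forces $A = B$, so the interchange inequalities are tight; the square identity matrix ($m = d$, $\bM = \bI$) gives $A = d^{1/q}$ and $B = d^{1/p}$, so $A = \min(m,d)^{1/q-1/p}B$, matching Figure~\ref{fig:norms}(a) and showing the dimension-dependent bounds are tight. There is no deep difficulty here; the only genuine care required is (i) invoking Minkowski with the right orientation of indices and exponents for the two different norms, and (ii) observing that the two natural reductions produce the factors $d^{1/q-1/p}$ and $m^{1/q-1/p}$, so the sharp constant is $\min(m,d)^{1/q-1/p}$ rather than either dimension alone. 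The cases $p=\infty$ or $q=\infty$ go through verbatim with the convention $1/\infty=0$.
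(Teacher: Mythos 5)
Your proof is correct, and it is somewhat cleaner than the paper's on two counts. For the order-swapping inequality $\|\bM\|_{q,p}\le\|\bM^T\|_{p,q}$ (when $q\le p$), you cite the mixed-norm Minkowski interchange inequality directly, whereas the paper re-derives exactly this fact from scratch via the dual-norm identity $\|\cdot\|_\alpha=\sup_{\|\bb\|_{\alpha^*}\le1}\langle\cdot,\bb\rangle$ and sub-additivity of the supremum; both are sound, and yours simply names the classical result rather than unfolding its proof. For the dimension-dependent inequality you run two clean $\ell_s$-nesting chains on vectors of lengths $d$ and $m$ respectively, picking up the factors $d^{1/q-1/p}$ and $m^{1/q-1/p}$ and taking the minimum. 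The paper instead invokes Lemma~\ref{lemma:norm_ratio} twice, building chains of the form $\|\bM^T\|_{p,q}\le d^{1/q-1/p}\|\bM^T\|_{p,p}=d^{1/q-1/p}\|\bM\|_{p,p}\le d^{1/q-1/p}\|\bM\|_{q,p}$ and the $m$-analogue; your version and the paper's are the same idea packaged differently (your route through $d$ changes the outer exponent of $\|\bM^T\|_{p,q}$ first, whereas you happen to pay the factor on the outer $\ell_q$ over $i$, but the arithmetic is identical and the combined $\min(m,d)$ bound is the same). One small point to tighten: your tightness discussion for the left-hand inequality only exhibits the square identity $m=d$; for $m\ne d$ you should note, as the paper does, that the rectangular matrix with an identity block and zero padding achieves equality, so that the constant is $\min(m,d)^{1/q-1/p}$ and not something involving $\max(m,d)$ — this extension is routine but the statement claims tightness for all $m,d$ so it deserves a sentence. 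Also note your argument handles $p=q$ implicitly (both sides equal), which the paper states explicitly before specializing to $q<p$; that's fine. Your reduction of the $q\ge p$ case to the $q\le p$ case as a "mirror image" matches the paper's transposition argument $\bM\mapsto\bA^T$.
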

The proof is deferred to Appendix~\ref{app:norm_comparison}. To
visualize the ratio between these two norms, we plot the two norms for
various values of $\du p$ in Figure~\ref{fig:norms}.  For convenience, in the
discussion below, we set $c_1(p)=\sqrt{\du p-1}$ and
$c_2(p)=\sqrt 2\big[\frac{\Gamma( \tfrac{\du p + 1}{2} )}{\sqrt{\pi}}
\big]^{\frac{1}{\du p}}$. Regarding the growth of the
constant in our bound, one can show that as $\du p \to \infty$,
$c_2(p)$ grows asymptotically like $e^{-\frac 12}\sqrt {\du p} $.  In
fact one can show that
\[e^{-\frac 12}\sqrt{\du p}\leq c_2(p) \leq e^{-\frac 12} \sqrt{\du
    p+1}\] Furthermore, $c_2(p) \leq c_1(p)$ in the relevant
region~(see Appendix~\ref{app:constants}). In Figure~\ref{fig:constants} we plot $c_1(p),c_2(p)$ and
the bounds on $c_1(p)$ and $c_2(p)$ to illustrate the growth rate of these constants with $\du p$.
\begin{figure}[t]
\centering
\begin{tabular}{cc}
\includegraphics[scale=.35]{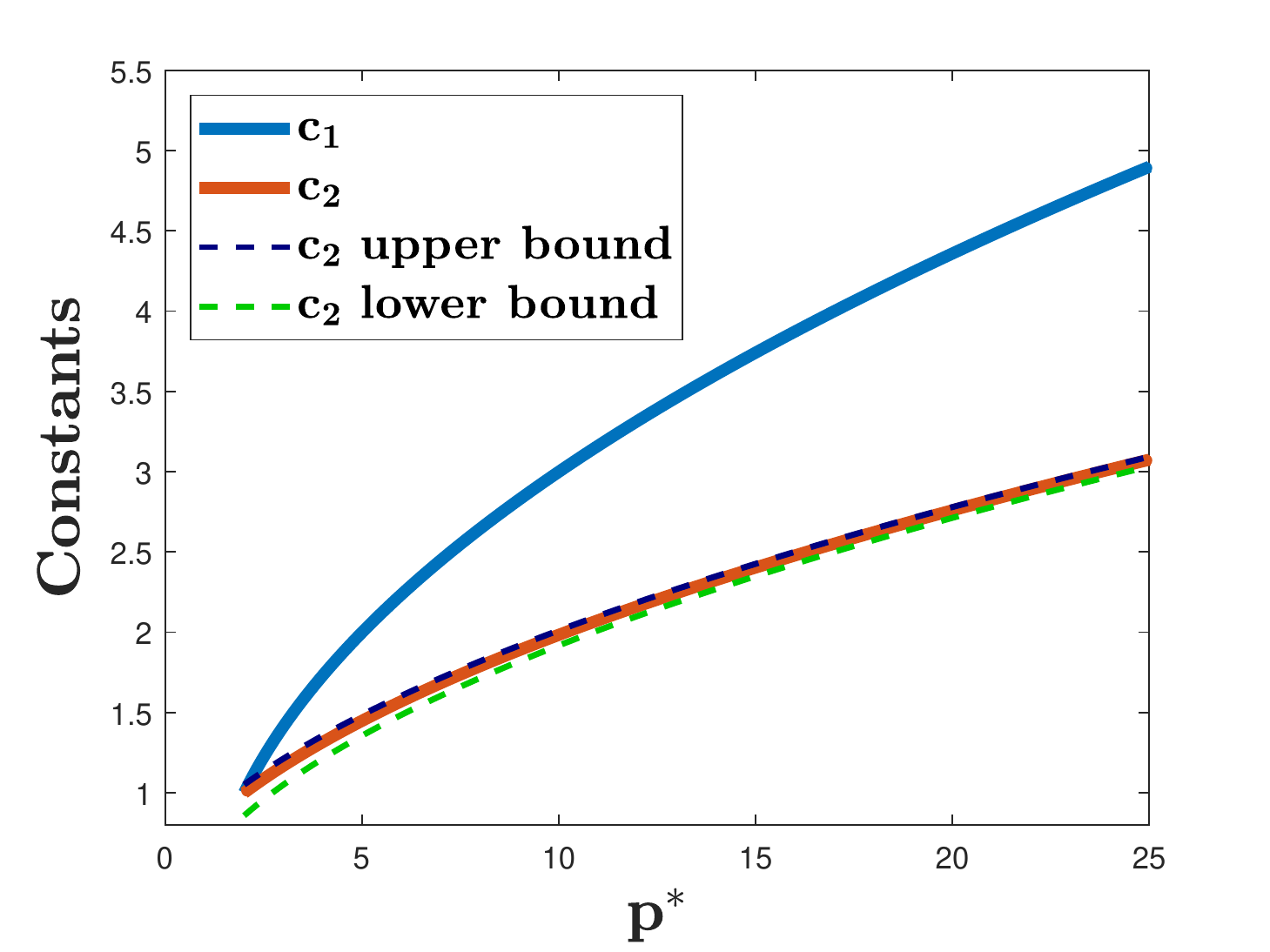}
\end{tabular}
\vskip -.15in
\caption{A plot of $c_1(p)$, $c_2(p)$, and the bounds from
  Lemma~\ref{lemma:constant-comparison}. Note that $c_1(2) = c_2(2)$ and
  that the upper and lower bounds on $c_2$ are tight.}
\vskip -.15in
\label{fig:constants}
\end{figure}
Proposition~\ref{prop:norm_ratio} and that $c_2(p) \leq c_1(p)$ imply
that the bounds we give for linear classes are stronger than what was
previously known.

\subsection{Adversarial Rademacher Complexity of Linear Hypotheses} 
We now extend our bounds from the previous section to provide a complete characterization
of the adversarial Rademacher complexity of linear function classes
under arbitrary $r$-norm perturbations. These theorems improve upon the recent work
of \citet{YinRamchandranBartlett2019} that studies $\infty$-norm perturbations and provide a finer analysis, with a matching lower bound, as compared to the recent work of \citet{khim2018adversarial}. Our main
result is stated below.

\begin{theorem}
\label{th:adversarial_linear}
Let $\epsilon > 0$, $p,r \geq 1$. Consider a sample $\cS = \{(\bx_1, y_1), \dots, (\bx_m, y_m)\}$ with $\bx_i\in \mathbb R^d$ and $y_i\in \{\pm 1\}$. Let $\cF_p$ be the class of
linear functions defined in \eqref{eq:linear_function_class}.
Then it holds that
\begin{align*}
\wt\R_\cS ( \cF_p)& \leq \bigg(\R_\cS(\mathcal {\cF}_p)+\e \frac{W}{2\sqrt m} \max(d^{1-\frac 1r-\frac 1p},1)\bigg)
\end{align*}
and
\begin{align*}
\wt\R_\cS ( \cF_p) &\geq \max
\bigg(\R_\cS(\cF_p), {W} 
\frac{\e \max(d^{1-\frac 1r-\frac 1p},1) }{2 \sqrt{2 m}}\bigg).
\end{align*}
\end{theorem}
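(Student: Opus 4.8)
The plan is to first reduce the adversarial Rademacher complexity to an explicit optimization over the weight vector, and then treat the upper and lower bounds separately.

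\emph{Reduction.} For fixed $\bw$ and $(\bx_i,y_i)$, Hölder's inequality gives $\inf_{\|\bx_i-\bx'_i\|_r\le\e}y_i\la\bw,\bx'_i\ra=y_i\la\bw,\bx_i\ra-\e\|\bw\|_{\du r}$ (the sign $y_i\in\{\pm1\}$ is immaterial inside the infimum). Substituting into \eqref{eq:adversarial-Rademacher-expr}, pulling the constants out of the sums, and writing $\bv:=\tfrac1m\sum_i\sigma_iy_i\bx_i$ and $S:=\sum_i\sigma_i$, I get
\[
  \wt\R_\cS(\cF_p)=\E_\bsig\Big[\sup_{\|\bw\|_p\le W}\big(\la\bw,\bv\ra-\tfrac{\e}{m}\,S\,\|\bw\|_{\du r}\big)\Big].
\]
I will also use two elementary facts: (i) $\max_{\|\bw\|_p\le1}\|\bw\|_{\du r}=\max(d^{1-1/r-1/p},1)$ --- this is $1$ when $\du r\ge p$, i.e.\ $1/p+1/r\ge1$ (attained at a coordinate vector), and $d^{1/\du r-1/p}=d^{1-1/r-1/p}$ otherwise (attained at $d^{-1/p}\one$); and (ii) $(\sigma_iy_i)_i$ has the same distribution as $(\sigma_i)_i$, so $\E_\bsig[\sup_{\|\bw\|_p\le W}\la\bw,\bv\ra]=\R_\cS(\cF_p)$.

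\emph{Upper bound.} I would split the bracket using subadditivity of the supremum. The $\la\bw,\bv\ra$ part contributes exactly $\R_\cS(\cF_p)$ by (ii). For the rest, since $\|\bw\|_{\du r}\ge0$, $\sup_{\|\bw\|_p\le W}(-\tfrac{\e}{m}S\|\bw\|_{\du r})$ equals $0$ if $S\ge0$ and $\tfrac{\e}{m}|S|\,W\max(d^{1-1/r-1/p},1)$ if $S<0$; taking the expectation and using $\E[\max(0,-S)]=\tfrac12\E|S|\le\tfrac12\sqrt{\E S^2}=\tfrac12\sqrt m$ (symmetry of $S$, then Jensen) produces the claimed overhead $\tfrac{\e W}{2\sqrt m}\max(d^{1-1/r-1/p},1)$.

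\emph{Lower bound.} Two separate arguments. For $\wt\R_\cS(\cF_p)\ge\R_\cS(\cF_p)$, I would symmetrize: averaging the bracket over $\bsig$ and $-\bsig$ gives $\wt\R_\cS(\cF_p)=\tfrac12\E_\bsig[\sup_{\bw_1}(\la\bw_1,\bv\ra-\tfrac{\e}{m}S\|\bw_1\|_{\du r})+\sup_{\bw_2}(-\la\bw_2,\bv\ra+\tfrac{\e}{m}S\|\bw_2\|_{\du r})]$; evaluating the double supremum at $\bw_2=-\bw_1$ cancels both $\|\cdot\|_{\du r}$ terms, so the bracket is at least $2\sup_{\bw_1}\la\bw_1,\bv\ra$ pointwise, and taking the expectation with (ii) finishes it. For the $\e$-term, I would fix a unit vector $\bu$ with $\|\bu\|_p=1$ and $\|\bu\|_{\du r}=\max(d^{1-1/r-1/p},1)$ (the maximizer from (i)) and restrict the supremum to the line $\{c\bu:|c|\le W\}$; the resulting scalar problem $\sup_{|c|\le W}(c\la\bu,\bv\ra-|c|\tfrac{\e}{m}S\|\bu\|_{\du r})$ is at least $W\max(0,-\tfrac{\e}{m}S\|\bu\|_{\du r})$ for every value of $\la\bu,\bv\ra$ (take $c=0$, or $|c|=W$ with the sign of $\la\bu,\bv\ra$). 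Taking the expectation and invoking Khintchine's inequality $\E|S|\ge\sqrt{m/2}$ then gives $\wt\R_\cS(\cF_p)\ge\tfrac{\e W}{m}\max(d^{1-1/r-1/p},1)\,\E[\max(0,-S)]=\tfrac{\e W}{2m}\max(d^{1-1/r-1/p},1)\,\E|S|\ge\tfrac{\e W}{2\sqrt{2m}}\max(d^{1-1/r-1/p},1)$.

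\emph{Main obstacle.} The one genuinely delicate point is the $\e$-term of the lower bound. Since $\E[S]=0$, the penalty term has mean zero and any crude bound throws it away; the fix is to observe that the supremum rectifies the sign of $S$, so that $\E[S]$ is replaced by $\E[\max(0,-S)]=\tfrac12\E|S|$, and one then needs the sharp constant in Khintchine's inequality ($\E|S|\ge\sqrt{m/2}$) to obtain the $\tfrac{1}{2\sqrt2}$ factor, which matches the upper bound up to the $\sqrt2$ gap between $\sqrt{m/2}$ and $\sqrt m$. Everything else should be routine: Hölder for the inner infimum, a short two-case computation ($1/p+1/r$ above or below $1$) for $\max_{\|\bw\|_p\le1}\|\bw\|_{\du r}$, and the $\bsig\mapsto-\bsig$ symmetrization for the comparison with $\R_\cS(\cF_p)$.
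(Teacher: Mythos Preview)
Your proposal is correct and follows the paper's argument almost step for step: the same H\"older reduction to $\E_\bsig\big[\sup_{\|\bw\|_p\le W}(\la\bw,\bv\ra-\tfrac{\e}{m}S\|\bw\|_{\du r})\big]$, the same norm-ratio fact $\max_{\|\bw\|_p\le1}\|\bw\|_{\du r}=\max(d^{1-1/r-1/p},1)$, the same upper bound via subadditivity of the supremum together with $\E[\max(0,-S)]=\tfrac12\E|S|\le\tfrac12\sqrt m$, and the same $\bsig\mapsto-\bsig$ symmetrization for the lower bound $\wt\R_\cS(\cF_p)\ge\R_\cS(\cF_p)$ (your coupling $\bw_2=-\bw_1$ is exactly the paper's $\bw\mapsto-\bw$ identity applied to one of the two copies).

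The only genuine difference is in the $\e$-term of the lower bound. The paper obtains a second identity by the $\bw\mapsto-\bw$ symmetry, averages it with the $\bsig\mapsto-\bsig$ identity, and uses subadditivity of the supremum to isolate $\E_\bsig[\sup_\bw \e v_\bsig\|\bw\|_{\du r}]$, which it then bounds below via Khintchine. You instead restrict the supremum to the one-parameter family $\{c\bu:|c|\le W\}$ for the fixed maximizer $\bu$ and observe directly that the scalar supremum rectifies the sign of $S$. Your route is a touch more concrete (no second averaging needed), while the paper's symmetrization is more systematic and reuses the same device twice; both land on the same Khintchine lower bound $\E|S|\ge\sqrt{m/2}$ and the same constant $\tfrac{1}{2\sqrt{2m}}$.
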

Notice that when the perturbation is
measured in $\ell_\infty$-norm, i.e.\ $r = \infty$, we recover the
bound of \citet{YinRamchandranBartlett2019}. Hence the theorem above is a strict
generalization of the result of \citet{YinRamchandranBartlett2019}. Furthermore, when
$\epsilon = 0$, as expected, the adversarial Rademacher complexity
equals the standard Rademacher complexity of linear models and we can
use our improved bounds from Theorem~\ref{th:linear_rc}. The theorem
above has important implications for the design of regularizers in the
context of adversarial learning of linear models. As suggested by the
upper bounds above, if $1/r+1/p \geq 1$, then one can indeed perform
adversarially robust learning with minimal statistical overhead in
the standard classification setting! More specifically, in this case
the upper bound on the adversarial Rademacher complexity has at most
$W\epsilon/\sqrt{m}$ overhead on top of the standard bound from
Theorem~\ref{th:linear_rc} and is dimension independent. Noting that
$1-1/r = 1/r^*$, we get that for statistical efficiency one should
choose a $p$-norm regularizer on $\bw$, where $p \in [1,r^*]$. Our
lower bound on the other hand shows that any other choice of a $\ell_p$-norm
based regularizer will necessarily incur a dimension-dependent penalty.
\subsection{Proof sketch of Theorem~\ref{th:adversarial_linear}}\label{sec:adv_linear_proof}
We provide a brief sketch of the proof of Theorem~\ref{th:adversarial_linear} and provide the details in Appendix~\ref{sec:adv_linear}. As a first step, a simple argument shows that
$$
\inf_{\|\bx-\bx'\|_r\leq \e} y( \bw\cdot \bx')= y\bw\cdot \bx-\e
\|\bw\|_\du r.
$$
Using the above, we can write the adversarial Rademacher complexity as:
\begin{align}
&\R_\cS(\wt \cF_p) 
 = \E_\bsigma{ \left[ \sup_{\| \bw\|_p\leq W}  \la \bw ,\bu_\bsigma\ra -\e v_\bsigma \| \bw\| _{\du r}\right]}\label{eq:1}
\end{align} 
where, for convenience, we set $\bu_\bsigma=\frac 1m \sum_{i=1}^m y_i\sigma_i \bx_i$, $v_\bsigma=\frac 1m \sum_{i=1}^m \sigma_i$. 
Next, we present two key lemmas. 
\begin{lemma}
\label{lemma:norm_ratio} 
Let $1\leq p,r\leq \infty$ and let $d$ be the dimension. Then
\[
\sup_{\| \bw\|_p\leq 1}\| \bw\|_{\du r}=\max(1, d^{1-\frac 1r -\frac
  1p})
\]
\end{lemma}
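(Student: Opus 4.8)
The plan is to reduce the statement to the classical comparison between the $\ell_q$ and $\ell_p$ norms on $\mathbb{R}^d$ and then specialize to $q = \du r$. Since $1/\du r = 1 - 1/r$, the quantity $\|\bw\|_{\du r}$ is just the $\ell_q$-norm for $q = \du r$, and the target exponent $1 - 1/r - 1/p$ equals $1/q - 1/p$. So it suffices to prove that for all $1 \le p, q \le \infty$ (with the usual convention $1/\infty = 0$),
\[
\sup_{\|\bw\|_p \le 1} \|\bw\|_q = \max\bigl(1,\, d^{1/q - 1/p}\bigr),
\]
and then substitute $q = \du r$.

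First I would treat the case $q \ge p$. Here monotonicity of the $\ell_p$-norms in $p$ yields $\|\bw\|_q \le \|\bw\|_p \le 1$ for every $\bw$ in the unit $p$-ball, and equality is attained at a standard basis vector $\be_1$, since $\|\be_1\|_q = \|\be_1\|_p = 1$. Thus the supremum is $1$; and because $q \ge p$ forces $d^{1/q - 1/p} \le 1$, this agrees with $\max(1, d^{1/q-1/p})$.

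Next I would treat the case $q \le p$. For $p < \infty$ I would apply Hölder's inequality to $\sum_i |w_i|^q \cdot 1$ with conjugate exponents $p/q$ and $p/(p-q)$, obtaining $\|\bw\|_q^q \le \|\bw\|_p^q\, d^{1 - q/p}$, i.e.\ $\|\bw\|_q \le d^{1/q - 1/p}\|\bw\|_p$; the case $p = \infty$ is the direct estimate $\|\bw\|_q \le d^{1/q}\|\bw\|_\infty$. Equality is attained at $\bw = d^{-1/p}\one$, which satisfies $\|\bw\|_p = 1$ and $\|\bw\|_q = d^{1/q - 1/p}$. Hence the supremum equals $d^{1/q - 1/p}$, and since $q \le p$ gives $d^{1/q - 1/p} \ge 1$, this again agrees with $\max(1, d^{1/q - 1/p})$. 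Putting the two cases together proves the displayed identity, and setting $q = \du r$ (so that $1/q - 1/p = 1 - 1/r - 1/p$) gives exactly $\max(1, d^{1 - 1/r - 1/p})$.

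There is no real obstacle here — the argument is entirely standard. The only points that require a bit of bookkeeping are the boundary exponents (e.g.\ $r = 1$, so $\du r = \infty$, or $p = 1$ or $p = \infty$) and verifying that the sign of $1 - 1/r - 1/p$ lines up correctly with whether $\du r \ge p$ or $\du r \le p$, so that the two cases above paste together cleanly into the single $\max(\cdot,\cdot)$ expression.
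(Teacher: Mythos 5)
Your proof is correct and follows essentially the same route as the paper's: split on whether $\du r \geq p$ or $\du r \leq p$, use norm monotonicity in the first case with the standard basis vector $\be_1$ as the maximizer, and in the second case apply Hölder's inequality (the paper phrases it as the generalized Hölder inequality $\|\bw\|_{\du r}\leq\|\one\|_s\|\bw\|_p$ with $1/\du r=1/p+1/s$, which is the same estimate you obtain by applying Hölder to $\sum_i|w_i|^q\cdot 1$) with $d^{-1/p}\one$ as the maximizer. The only cosmetic difference is that you first abstract the claim to general $q$ and then substitute $q=\du r$; the underlying computations and extremizers are identical.
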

\begin{lemma}
\label{lemma:big}
Let $v_\bsigma
=\frac 1m \sum_{i=1}^m \sigma_i$. Then it holds that 
\begin{align*} 
\E_\bsigma\left[{\sup_{\| \bw\|_p\leq W} v_\bsigma \| \bw\|_{\du r}}\right] \geq \frac{W\e \max(d^{1-\frac 1r-\frac 1p},1)}{{2 \sqrt{2m}} },
\end{align*}
and
\begin{align*}
\E_\bsigma\left[{\sup_{\| \bw\|_p\leq W} v_\bsigma \| \bw\|_{\du r}}\right] \leq \frac{W\e \max(d^{1-\frac 1r-\frac 1p},1)}{2 \sqrt m}
\end{align*}
\end{lemma}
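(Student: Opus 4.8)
The plan is to reduce Lemma~\ref{lemma:big} to a one–dimensional estimate on the Rademacher sum $S=\sum_{i=1}^m\sigma_i$ and then prove matching bounds on $\E|S|$.

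\emph{Step 1 (reduction via Lemma~\ref{lemma:norm_ratio}).} Since $\|\bw\|_{\du r}\ge0$ and $\bw=0$ is feasible, for each fixed sign vector $\bsigma$ we have
\[
\sup_{\|\bw\|_p\le W}v_\bsigma\|\bw\|_{\du r}
=(v_\bsigma)_+\,\sup_{\|\bw\|_p\le W}\|\bw\|_{\du r}
=(v_\bsigma)_+\,W\max\!\bigl(1,d^{1-\frac1r-\frac1p}\bigr),
\]
where $(t)_+=\max(t,0)$ and the middle supremum is evaluated by Lemma~\ref{lemma:norm_ratio} rescaled by $W$. Taking expectations and using that $\bsigma$ and $-\bsigma$ have the same law, so that $\E[(v_\bsigma)_+]=\tfrac12\E|v_\bsigma|=\tfrac1{2m}\E|S|$, both claimed inequalities reduce to $\sqrt{m/2}\le\E|S|\le\sqrt m$ (the factor $\eps$ in the statement being the coefficient of $v_\bsigma$ in \eqref{eq:1}).

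\emph{Step 2 (the two bounds on $\E|S|$).} The upper bound is immediate: $\E[S^2]=m$ by independence, so $\E|S|\le\sqrt{\E[S^2]}=\sqrt m$ by Jensen. The lower bound $\E|S|\ge\sqrt{m/2}$ is exactly the sharp constant in Khintchine's inequality applied to the all–ones vector $(1,\dots,1)$, and I would invoke it directly; alternatively it follows by a short self-contained induction from the recursion $\E|S_m|=\E\max(|S_{m-1}|,1)=\E|S_{m-1}|+\Pr[S_{m-1}=0]$ together with the elementary estimate $\binom{2n}{n}\ge4^{n}/(2\sqrt n)$. Plugging the two bounds into Step~1 yields the lemma.

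\emph{Main obstacle.} Step~1 and the upper bound in Step~2 are routine; the only delicate point is the constant $1/\sqrt2$ in the lower bound on $\E|S|$. A naive second–moment argument—e.g. $\E[S^2]\le(\E|S|)^{2/3}(\E[S^4])^{1/3}$ with $\E[S^4]=3m^2-2m\le3m^2$—only gives $\E|S|\ge\sqrt{m/3}$, which is too weak, so one genuinely needs either the optimal Khintchine constant or the inductive recursion above.
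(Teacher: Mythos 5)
Your proof is correct and follows essentially the same route as the paper: symmetry of $v_\bsigma$ to extract the factor $\tfrac12$, Lemma~\ref{lemma:norm_ratio} to evaluate the supremum over $\bw$, then Jensen's inequality for the upper bound and Khintchine's inequality (with sharp constant $1/\sqrt{2}$) for the lower bound on $\E\bigl|\sum_i\sigma_i\bigr|$. The only cosmetic differences are that you make the $(v_\bsigma)_+$ structure explicit where the paper phrases the same symmetry reduction as an equality of expectations, and you supply an optional elementary induction as an alternative to citing Khintchine.
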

For the upper bound, using the
sub-additivity of supremum and Lemma~\ref{lemma:big} yields
\begin{align*}
\R_\cS(\wt \cF_p)
&\leq \R_\cS(\mathcal F_p)  +\eps\Ex[\bsigma]{\sup_{\| \bw\|_p\leq W}  v_\bsigma \|
  \bw\|_{\du r}}\\
  &=\R_\cS(\mathcal {F}_p)+\frac 12 \eps \frac{W}{\sqrt
  m} \max(d^{1-\frac 1r-\frac 1p}, 1).
\end{align*} 
For the lower bound, we apply two symmetrization arguments and show
that
\begin{align}
\R_\cS(\wt \cF_p)
& = \Ex[\bsigma]{  \sup_{\| \bw\|_p\leq W}  -\la \bw ,\bu_\sigma\ra +\eps v_\bsigma \| \bw\| _{\du r}}\label{eq:2_main}\\
& = \Ex[\bsigma]  {\sup_{\| \bw\|_p\leq W}  \la \bw ,\bu_\bsigma\ra +\eps v_\bsigma \| \bw\| _{\du r}}. \label{eq:3_main}
\end{align} 
Averaging equations \eqref{eq:1} and \eqref{eq:3_main} and applying the
sub-additivity of supremum
gives: 
\begin{align*} 
\R_\cS(\wt \cF_p)
& =\frac 12\Ex[\bsigma]{  \sup_{\| \bw\|_p\leq W}\la \bw
  ,\bu_\bsigma\ra -\eps v_\bsigma \| \bw \|_{\du r}}\\
& \quad +\frac 12\Ex[\bsigma]{  \sup_{\| \bw\|_p\leq W}  \la \bw ,\bu_\bsigma\ra +\eps v_\bsigma \| \bw\| _{\du r}}\\
& \geq \Ex[\bsigma]{ \sup_{\| \bw\|_p\leq W} \la \bw,\bu_\bsigma\ra}=
  W\R_\cS({\mathcal F_p}).
\end{align*}
Now averaging \eqref{eq:2_main} and \eqref{eq:3_main}, applying
sub-additivity and Lemma~\ref{lemma:big}, the following holds:
\begin{align*}
\R_\cS(\wt \cF_p)
& =\frac 12 \Ex[\bsigma]{  \sup_{\| \bw\|_p\leq W}-\la \bw, \bu_\bsigma\ra +\eps v_\bsigma \| \bw \|_{\du r}}\\
& \quad +\frac 12\Ex[\bsigma]{  \sup_{\| \bw\|_p\leq W}  \la \bw ,\bu_\bsigma\ra +\eps v_\bsigma \| \bw \|_{\du r}}\\
& \geq \Ex[\bsigma]{\sup_{\|\bw\|_p\leq W} v_\bsigma \| \bw\|_\du r}\\&\geq \frac W {2\sqrt{2
  m}} \eps \max(d^{1-\frac 1p -\frac 1r}, 1).
\end{align*}

\section{Adversarial Rademacher Complexity of a Rectified Linear Unit}
\label{sec:relu}
As a first step towards providing a bound for neural networks, in this section we study the adversarial Rademacher complexity of linear functions composed with a rectified
linear unit (ReLU). Again we measure the size of functions in $p$
norm and define the function class by
\begin{equation}
\label{eq:relu_function_class}
\mspace{-8mu}
\mathcal G_p=\{ (\bx,y)\mapsto (y\la \bw,\bx\ra)_+\colon \|
\bw\|_p\leq W, y\in\{-1,1\}\} 
\end{equation}
where $(z)_+ = \max(z,0)$.
The following theorem presents a data-dependent upper bound on the adversarial Rademacher complexity of the ReLU unit.
\begin{theorem}
\label{th:data_dep_reLUupper}
Let $\cG_p$ be the class as defined in \eqref{eq:relu_function_class} and let $\cF_p$ be the corresponding linear class as defined in \eqref{eq:linear_function_class}. Then, given a sample $\cS= \{(\bx_1, y_1), \dots, (\bx_m, y_m)\}$, the adversarial Rademacher complexity of $\cG_p$
can be bounded as follows:
\begin{align*}
\wt \R_\cS(\cG_p)
& \leq {\mathfrak R }_{T_\eps}(\mathcal F_p)+\eps \frac{W}{2\sqrt{m}} \max(1,d^{1-\frac 1r-\frac 1p}),
\end{align*}
where $T_\eps=\{i\colon y_i=-1\text{ or }, y_i=1\text{ and }\|{\bx_i}\|_r> \eps\}$.
\end{theorem}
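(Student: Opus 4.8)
The plan is to rerun the reduction behind Theorem~\ref{th:adversarial_linear}, after first discarding the sample points that contribute nothing. Since $t\mapsto(t)_+$ is non-decreasing, the adversarial ReLU value at $(\bx_i,y_i)$ is
\[
\inf_{\|\bx_i-\bx_i'\|_r\le\eps}(y_i\la\bw,\bx_i'\ra)_+ = \Big(\inf_{\|\bx_i-\bx_i'\|_r\le\eps}y_i\la\bw,\bx_i'\ra\Big)_+ = \big(y_i\la\bw,\bx_i\ra-\eps\norm{\bw}_{\du r}\big)_+,
\]
using the identity $\inf_{\|\bx-\bx'\|_r\le\eps}y\la\bw,\bx'\ra=y\la\bw,\bx\ra-\eps\norm{\bw}_{\du r}$ from Section~\ref{sec:adv_linear_proof}. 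So $\wt\R_\cS(\cG_p)=\tfrac1m\E_\bsig\big[\sup_{\norm{\bw}_p\le W}\sum_{i=1}^m\sigma_i(y_i\la\bw,\bx_i\ra-\eps\norm{\bw}_{\du r})_+\big]$. The key observation is that every index $i\notin T_\eps$ contributes the constant $0$: such an $i$ has $y_i=1$ and $\norm{\bx_i}_r\le\eps$, so by H\"older's inequality $\la\bw,\bx_i\ra\le\norm{\bw}_{\du r}\norm{\bx_i}_r\le\eps\norm{\bw}_{\du r}$ for \emph{every} $\bw$, whence $(y_i\la\bw,\bx_i\ra-\eps\norm{\bw}_{\du r})_+=0$. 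Discarding these terms replaces the full sample by $T_\eps$ (keeping all $m$ terms would only give $\R_\cS(\cF_p)$ rather than $\R_{T_\eps}(\cF_p)$).

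Next I peel off the ReLU. Each $(\cdot)_+$ is $1$-Lipschitz with $(0)_+=0$, so Talagrand's contraction lemma~\cite{LedouxTalagrand91}, applied to the family $\{(\bx,y)\mapsto y\la\bw,\bx\ra-\eps\norm{\bw}_{\du r}:\norm{\bw}_p\le W\}$ evaluated at $\{(\bx_i,y_i)\}_{i\in T_\eps}$, gives
\[
\wt\R_\cS(\cG_p)\le\frac1m\E_\bsig\Big[\sup_{\norm{\bw}_p\le W}\sum_{i\in T_\eps}\sigma_i\big(y_i\la\bw,\bx_i\ra-\eps\norm{\bw}_{\du r}\big)\Big]=\E_\bsig\Big[\sup_{\norm{\bw}_p\le W}\big(\la\bw,\bu'_\bsig\ra-\eps\,v'_\bsig\norm{\bw}_{\du r}\big)\Big],
\]
with $\bu'_\bsig=\tfrac1m\sum_{i\in T_\eps}\sigma_i y_i\bx_i$ and $v'_\bsig=\tfrac1m\sum_{i\in T_\eps}\sigma_i$. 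From here the argument is the upper-bound half of Section~\ref{sec:adv_linear_proof} carried out over the subsample $T_\eps$: subadditivity of the supremum bounds this by $\E_\bsig[\sup_{\norm{\bw}_p\le W}\la\bw,\bu'_\bsig\ra]+\eps\,\E_\bsig[\sup_{\norm{\bw}_p\le W}(-v'_\bsig\norm{\bw}_{\du r})]$. The first term equals $\R_{T_\eps}(\cF_p)$ because $(\sigma_iy_i)_{i\in T_\eps}$ is distributed like $(\sigma_i)_{i\in T_\eps}$. For the second, Lemma~\ref{lemma:norm_ratio} gives $\sup_{\norm{\bw}_p\le W}(-v'_\bsig\norm{\bw}_{\du r})=W\max(0,-v'_\bsig)\max(1,d^{1-\frac1r-\frac1p})$, and taking expectations, using the symmetry of $v'_\bsig$ and $\E|v'_\bsig|\le\tfrac1m\sqrt{|T_\eps|}\le\tfrac1{\sqrt m}$ (Jensen's inequality), bounds it by $\tfrac{W}{2\sqrt m}\max(1,d^{1-\frac1r-\frac1p})$ — the $T_\eps$-analogue of Lemma~\ref{lemma:big}. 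Combining the two pieces yields $\wt\R_\cS(\cG_p)\le\R_{T_\eps}(\cF_p)+\tfrac{\eps W}{2\sqrt m}\max(1,d^{1-\frac1r-\frac1p})$.

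The only essentially new step is the vanishing-terms observation, which is what turns the full-sample bound into a data-dependent one; everything afterward is a bookkeeping variant of the linear proof restricted to $T_\eps$. The point to be careful about is that the contraction lemma is used in the form where a fixed $1$-Lipschitz function is composed with a whole family of functions indexed by $\bw$ (here the affine maps $\bw\mapsto y_i\la\bw,\bx_i\ra-\eps\norm{\bw}_{\du r}$, not functions of $\bx_i$); since this family still defines a subset of $\Rset^{|T_\eps|}$, contraction applies as usual, but it is worth stating the version used explicitly.
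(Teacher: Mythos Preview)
Your proof is correct and follows essentially the same route as the paper: discard indices $i\notin T_\eps$ via H\"older, peel off $(\cdot)_+$ by $1$-Lipschitz contraction, and then apply the linear adversarial upper bound on the subsample. The only cosmetic difference is that the paper packages the last step as a black-box call to Theorem~\ref{th:adversarial_linear} (writing the post-contraction quantity as $\tfrac{|T_\eps|}{m}\R_{T_\eps}(\wt\cF_p)$), whereas you inline that argument via subadditivity and Lemmas~\ref{lemma:norm_ratio}--\ref{lemma:big}; note also that your ``first term equals $\R_{T_\eps}(\cF_p)$'' is really $\tfrac{|T_\eps|}{m}\R_{T_\eps}(\cF_p)\le \R_{T_\eps}(\cF_p)$ under the paper's normalization, which is harmless for the inequality.
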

The second term in the bound above is similar to the dimension dependent term that appears in the linear case. The first term is the empirical Rademacher complexity of linear classes with bounded $p$-norm, but only measured on a carefully chosen subset of the
data. This implies that data points with positive labels that have small norm as compared to the perturbation $\epsilon$ do not affect the
Rademacher complexity. Hence, the guarantee in the theorem treats the two classes $+1$
and $-1$ asymmetrically. 

This phenomenon originates from a property of the function $(z)_+$. Recall that in our setup $(z)_+$ will later be
composed with a loss function $\ell$. Because $\ell(yz_+)$ is the
penalty incurred to the loss, the value $yz_+$ should be interpreted
as a margin. Since the function $\max(0,z)$ is always 0 for
$z\leq 0$, decreasing $z$ below 0 does not affect the the margin. On
the other hand increasing $z$ above zero will increase the margin. A
large margin for a point labeled $-1$ corresponds to making $z_+$ as
small as possible. As a result, every $z$ with $z\leq 0$ gives the
same margin. However, there is no upper bound on the margin for points
in the class $+1$. As a result, the classifier
$(y,\bx)\mapsto y(\la \bw,\bx\ra)_+$ treats all non-negative margins
for the class $-1$ in the same manner, but gives a higher reward for
larger margins for the class $+1$. 

This observation has implications
for adversarial classification; as shown in Appendix~\ref{app:relu_proofs}, an adversarially perturbed ReLU is
$y\max(\bw\cdot \bx-\eps y\| \bw\|_\du r,0)$. If $\bw\cdot \bx$ is
very negative, which corresponds to high confidence for $y=-1$, then
the perturbation would not change the value of the loss function. On
the other hand, if $\bw\cdot \bx$ were large and positive, a
perturbation would definitely change the value of the margin and then
influence the loss. We next complement our upper bound with a data dependent lower bound, stated below, on the adversarial Rademacher complexity.
\begin{theorem}
\label{th:data_dep_ReLUlower}
Let $\cG_p$ be the class as defined in \eqref{eq:relu_function_class}. Then it holds that
\[
\wt\R_\cS(\cG_p) \geq\frac W{2\sqrt 2 m} \sup_{\| \bs\|_p=1} \bigg(\sum_{i\in T_{\eps,\bs}}(\la \bs,\bx_i\ra-\eps y_i \| \bs\|_\du r)^2\bigg)^\frac 12
\]
where $T_{\eps,\bs} = \set{i\colon \la \bs,\bx_i\ra-y_i\eps \| \bs\|_\du r>0 }$.
\end{theorem}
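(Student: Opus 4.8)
The plan is to bound $\wt\R_\cS(\cG_p)$ from below by keeping only two weight vectors in the supremum, $\bw = \mathbf{0}$ and $\bw = W\bs$ for an arbitrary fixed $\bs$ with $\|\bs\|_p = 1$, then to exploit the symmetry of the Rademacher variables together with Khintchine's inequality, and finally to optimize over $\bs$.

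First I would put the adversarial complexity in closed form. Using the expression for the adversarially perturbed ReLU recalled above (see Appendix~\ref{app:relu_proofs}), whose value at $(\bx_i, y_i)$ is $y_i\big(\la\bw,\bx_i\ra - \e y_i\|\bw\|_\du r\big)_+$, and the fact that $\bsigma \mapsto (\sigma_i y_i)_i$ is a measure-preserving bijection of $\{\pm 1\}^m$ (the labels $y_i$ being fixed), the factor $y_i$ in front can be absorbed into $\sigma_i$, giving
\begin{align*}
\wt\R_\cS(\cG_p) = \E_\bsigma\Big[\sup_{\|\bw\|_p\leq W}\frac1m\sum_{i=1}^m\sigma_i\big(\la\bw,\bx_i\ra - \e y_i\|\bw\|_\du r\big)_+\Big].
\end{align*}
Next, fixing $\bs$ with $\|\bs\|_p = 1$ and writing $b_i = \la\bs,\bx_i\ra - \e y_i\|\bs\|_\du r$, I would restrict the supremum to $\bw \in \{\mathbf{0}, W\bs\}$. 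Since $\bw = \mathbf{0}$ makes every summand vanish and $(\cdot)_+$ is positively homogeneous, this yields
\begin{align*}
\sup_{\|\bw\|_p\leq W}\frac1m\sum_{i}\sigma_i\big(\la\bw,\bx_i\ra - \e y_i\|\bw\|_\du r\big)_+ \ \geq\ \frac Wm\Big(\sum_{i}\sigma_i (b_i)_+\Big)_+ .
\end{align*}

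Then I would set $Y = \sum_i \sigma_i (b_i)_+$: since $-\bsigma$ has the same law as $\bsigma$, $Y$ is a symmetric random variable, so $\E[(Y)_+] = \tfrac12\E|Y|$, while Khintchine's inequality with the optimal constant gives $\E|Y| \geq \tfrac1{\sqrt2}\big(\sum_i (b_i)_+^2\big)^{1/2}$. Because $(b_i)_+^2$ is nonzero exactly when $b_i > 0$, i.e.\ when $i \in T_{\e,\bs}$, taking expectations in the last display gives $\wt\R_\cS(\cG_p) \geq \frac{W}{2\sqrt2\,m}\big(\sum_{i\in T_{\e,\bs}} b_i^2\big)^{1/2}$, and taking the supremum over all $\bs$ with $\|\bs\|_p = 1$ finishes the proof. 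The two points requiring care are: (i) the closed form of the adversarial ReLU, where the mere monotonicity (rather than linearity) of $(\cdot)_+$ makes the worst-case perturbation act asymmetrically on the two classes — this is precisely the origin of the label-dependent index set $T_{\e,\bs}$; and (ii) the constant, which needs the sharp Khintchine constant $1/\sqrt2$ rather than the weaker value produced by a fourth-moment comparison. Neither is a genuine obstacle, and the argument is short once the closed form is in hand.
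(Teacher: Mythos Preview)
Your proposal is correct and follows essentially the same approach as the paper: both write the adversarial ReLU in closed form, reduce the supremum over $\bw$ to the two choices $\bw=\mathbf 0$ and $\bw=W\bs$ (the paper phrases this as optimizing the scalar $B$ in $\bw=B\bs$), use the symmetry of $\bsigma$ to turn $\E[(\cdot)_+]$ into $\tfrac12\E|\cdot|$, and then apply the sharp $L^1$ Khintchine constant $1/\sqrt 2$. The only difference is cosmetic ordering---the paper pulls the $\sup_{\|\bs\|_p=1}$ outside the expectation as an explicit step before applying Khintchine, whereas you fix $\bs$ first and take the supremum at the end.
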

A natural question that comes to mind is if one can characterize scenarios where the above lower bound leads to a dimension dependent term, as in the lower bound for linear hypotheses. In order to characterize this, for a given $s$ and $\delta > 0$, define the set $T^{\delta}_{\epsilon, \bs}$ as 
\[
T^{\delta}_{\epsilon, \bs} = \set{i\colon \la \bs,\bx_i\ra-(1+\delta y_i)y_i\eps \| \bs\|_\du r>0 }.
\]Notice that $T^{\delta}_{\eps, \bs}$ is a subset of $T_{\eps, \bs}$ and contains points in $T^{\delta}_{\eps, \bs}$ that have a non-trivial margin. Then we get that
\begin{align*}
    \R_\cS(\cG_p)&\geq\frac W{2\sqrt 2 m} \sup_{\| \bs\|_p=1}\bigg(\sum_{i\in T^{\delta}_{\eps,\bs}}(\la \bs,\bx_i\ra-\eps y_i \| \bs\|_\du r)^2\bigg)^\frac 12\\
    &\geq \frac W{2\sqrt 2 m} \sup_{\| \bs\|_p=1}\bigg(\sum_{i\in T^{\delta}_{\eps,\bs}}(\delta \eps \| \bs\|_\du r)^2\bigg)^\frac 12\\
    &= \frac{W \delta \epsilon}{2\sqrt 2 m} \sup_{\| \bs\|_p=1} |T^{\delta}_{\eps, \bs}| \|\bs\|_{\du r}.
\end{align*}
Denoting $\bs^*$ to be the vector that achieves the value $\sup_{\| \bs\|_p=1} \|\bs\|_{\du r}$ we get that
\begin{align*}
    \R_\cS(\cG_p) &\geq \frac{W \delta \epsilon}{2\sqrt 2 m} |T^{\delta}_{\eps, \bs^*}| \max(d^{1-\frac 1 p - \frac 1 r}, 1).
\end{align*}
Hence, if for a given constant $\delta > 0$, the size of the set $T^{\delta}_{\eps, \bs^*}$ is large then we expect a dimension dependent lower bound similar to the linear case.

\section{Adversarial Rademacher Complexity of Neural Nets}
Building on our analysis for the case of a single ReLU unit, we next give an upper bound on the adversarial Rademacher complexity
for the class of one-layer neural networks comprised of a
Lipschitz activation $\rho$ with $\rho(0)=0$. The guarantees of our theorem resemble the bound on the
standard Rademacher complexity of neural networks, as provided in
\cite{CortesGonzalvoKuznetsonMohri}. An analysis based on other forms of generalization bounds on neural nets is also possible, such
as that of \citet{BartlettFosterTelgarsky2017}. The family
of functions of such one-layer neural networks is defined
as follows:
\begin{align*}
\cG^n_p =
\set[\Big]{(\bx,y)\mapsto y\sum_{j=1}^n u_j \rho(\bw_j\cdot \bx)\colon \|\bu\|_1\leq \Lambda,  \|{\bw_j}\|_p\leq W}.
\end{align*}
Our main theorem is stated below.
\begin{theorem}
\label{th:one_layer_nn_rc} 
Let $\rho$ be a function with Lipschitz constant $L_\rho$ with $\rho(0)=0$ and consider perturbations in $r$-norm.
Then, the following upper bound holds for the adversarial Rademacher complexity of $\cG^n_p$:
\begin{align*}
\wt \R_\cS(\cG_p^n)
& \leq L_\rho \bigg[ \frac{W\Lambda \max(1,d^{1-\frac 1p-\frac 1r})(\|\bX\|_{r,\infty}+\e)}{\sqrt m} \bigg]
\times \\ 
& \quad \left( 1+\sqrt{d(n+1)\log(36)}\right).
\end{align*}
\end{theorem}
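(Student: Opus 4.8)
The plan is to reduce the adversarial Rademacher complexity of $\cG^n_p$ to that of a finite $\epsilon$‑net, via a covering‑number argument in the spirit of the neural‑network bounds of \cite{CortesGonzalvoKuznetsonMohri}. Write $F_\theta(\bx)=\sum_{j=1}^n u_j\rho(\bw_j\cdot\bx)$ for the network with parameters $\theta=(\bu,\bw_1,\dots,\bw_n)$ ranging over $\|\bu\|_1\le\Lambda$, $\|\bw_j\|_p\le W$, and set $G_\theta(\bx,y)=\inf_{\|\bx'-\bx\|_r\le\epsilon}yF_\theta(\bx')$, so that $\wt\R_\cS(\cG_p^n)=\E_\bsigma[\sup_\theta\frac1m\sum_{i=1}^m\sigma_i G_\theta(\bx_i,y_i)]$. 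The first step is a uniform magnitude bound: for admissible $\theta$, $(\bx,y)\in\cS$, and $\|\bx'-\bx\|_r\le\epsilon$, Hölder gives $|\bw_j\cdot\bx'|\le\|\bw_j\|_{\du r}\|\bx'\|_r$; Lemma~\ref{lemma:norm_ratio} gives $\|\bw_j\|_{\du r}\le W\max(1,d^{1-\frac1r-\frac1p})$; and $\|\bx'\|_r\le\|\bX\|_{r,\infty}+\epsilon$. Since $\rho(0)=0$ and $\rho$ is $L_\rho$‑Lipschitz, $|\rho(\bw_j\cdot\bx')|\le L_\rho|\bw_j\cdot\bx'|$, and summing against $\|\bu\|_1\le\Lambda$ yields $|G_\theta(\bx,y)|\le\sup_{\bx'}|F_\theta(\bx')|\le L_\rho W\Lambda\max(1,d^{1-\frac1r-\frac1p})(\|\bX\|_{r,\infty}+\epsilon)=:R$, which is precisely the bracketed factor in the theorem (times $\sqrt m$).

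Next, and this is the crux, I would cover the parameter space. Fix a scale $\beta>0$. Two estimates drive the construction: (i) since an infimum over a fixed compact set is $1$‑Lipschitz with respect to uniform perturbations of the function, $|G_\theta(\bx_i,y_i)-G_{\theta'}(\bx_i,y_i)|\le\sup_{\|\bx'-\bx_i\|_r\le\epsilon}|F_\theta(\bx')-F_{\theta'}(\bx')|$, so it suffices to cover the networks uniformly over the bounded perturbed domain; (ii) adding and subtracting $\sum_j u_j'\rho(\bw_j\cdot\bx')$ and using Lipschitzness of $\rho$ together with Hölder and $\|\bu'\|_1\le\Lambda$,
\[
|F_\theta(\bx')-F_{\theta'}(\bx')|\le\|\bu-\bu'\|_1\cdot\tfrac{R}{\Lambda}+\Lambda L_\rho(\|\bX\|_{r,\infty}+\epsilon)\max_{j}\|\bw_j-\bw_j'\|_{\du r}
\]
uniformly over $\bx'$ in the perturbed domain. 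Hence, taking a standard volumetric $\eta_1$‑net of $\{\|\bu\|_1\le\Lambda\}$ and, for each $j$, an $\eta_2$‑net of $\{\|\bw_j\|_p\le W\}$ in the $\|\cdot\|_{\du r}$ metric (the $p$‑ball is contained in a $\du r$‑ball of radius $W\max(1,d^{1-\frac1r-\frac1p})$ by the same inequality), with $\eta_1,\eta_2$ chosen so the right‑hand side of (ii) is $\le\beta$, produces a net $\cC_\beta$ of $\{G_\theta\}$ in the sample‑sup‑norm with $\log|\cC_\beta|\lesssim d(n+1)\log(cR/\beta)$ for an absolute constant $c$, once the $O(n)$ coordinates of $\bu$ are bundled with the $O(nd)$ coordinates of the $\bw_j$'s into the total parameter count.

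Finally, Massart's finite‑class lemma applied to $\{G_{\theta'}:\theta'\in\cC_\beta\}$ (each of sup‑norm $\le R$) gives $\R_\cS(\{G_{\theta'}\})\le\frac{R\sqrt{2\log|\cC_\beta|}}{\sqrt m}$, and replacing a general $G_\theta$ by its nearest net point costs an extra $\beta$, so $\wt\R_\cS(\cG_p^n)\le\beta+\frac{R\sqrt{2\log|\cC_\beta|}}{\sqrt m}$. Choosing $\beta$ as a suitable multiple of $R/\sqrt m$ — exactly the one‑shot covering bound used in \cite{CortesGonzalvoKuznetsonMohri}, which collapses the $\log(cR/\beta)$ factor to the absolute constant $\log 36$ and produces the leading "$1$" — yields $\wt\R_\cS(\cG_p^n)\le\frac{R}{\sqrt m}\bigl(1+\sqrt{d(n+1)\log 36}\bigr)$, which is the claim. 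The main obstacles are (a) verifying carefully that closeness of parameters transfers to closeness of the $\inf$‑perturbed functions uniformly over the whole sample (step (i), where one must keep the perturbation ball inside a bounded region controlled by $\|\bX\|_{r,\infty}+\epsilon$), and (b) the bookkeeping needed to land exactly on the exponent $d(n+1)$ and the constant $\log 36$ rather than a cruder, $\log m$‑dependent factor; the latter is precisely where invoking the specialized covering lemma of \cite{CortesGonzalvoKuznetsonMohri}, rather than a generic Dudley entropy integral, does the work.
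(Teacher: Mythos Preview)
Your overall strategy---cover the parameter space, apply Massart's lemma to the finite net, and control the covering error by showing that closeness of parameters transfers to closeness of the $\inf$-perturbed functions---is exactly what the paper does. The only cosmetic difference is where you put the dimension factor: you apply H\"older as $|\bw_j\cdot\bx'|\le\|\bw_j\|_{\du r}\|\bx'\|_r$ and then invoke Lemma~\ref{lemma:norm_ratio} to bound $\|\bw_j\|_{\du r}$, whereas the paper applies it as $|\bw_j\cdot\bx'|\le\|\bw_j\|_p\|\bx'\|_{\du p}$ and bounds $\|\bx'\|_{\du p}$ via the same lemma; correspondingly, the paper covers each $\bw_j$ in $\ell_p$ rather than $\ell_{\du r}$. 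Both routes produce the same $R$ and the same $\log|\cC|\le d(n+1)\log(\cdot)$.

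There is, however, a real gap in your last step. With $\beta$ a fixed multiple of $R/\sqrt m$, the entropy estimate $\log|\cC_\beta|\lesssim d(n+1)\log(cR/\beta)$ becomes $d(n+1)\log(c\sqrt m)$, not an absolute constant; a single-scale covering plus Massart cannot remove the $\log m$, and there is no ``specialized covering lemma'' in the cited reference that collapses it to $\log 36$. In fact the paper's own proof in the appendix takes $\delta_1=\Lambda/(2\sqrt m)$, $\delta_2=W/(2\sqrt m)$, giving covering numbers $(6\sqrt m)^{d}$ and $(6\sqrt m)^{dn}$ and hence a final factor $\sqrt{2d(n+1)\log(6\sqrt m)}=\sqrt{d(n+1)\log(36m)}$; the restated theorem in the appendix even reads $\log(9m)$. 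The ``$\log(36)$'' in the main-body statement is evidently a typo for a quantity depending on $m$. So your argument is correct up to this point, but you should not try to engineer the constant $\log 36$---the paper's proof does not achieve it either.
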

The proof is presented in Appendix~\ref{app:nn-lip}. The only
requirements on our activation function $\rho$ is that it is Lipschitz
and $\rho(0)=0$. This stipulation is satisfied by common activation
functions like the ReLU, the leaky ReLU, and the hyperbolic tangent,
but not the sigmoid or a step function. In comparison to the
adversarial Rademacher complexity of linear classifiers,
Theorem~\ref{th:one_layer_nn_rc} still includes a
$\max(1,d^{1-\frac 1r -\frac 1p})$ factor, again implying that one
should choose a model class with $p\leq \du r$. 
The complexity of the vector $\bu$ is bounded by $\ell_1$ norm as that is what turns out to be natural in the proof. However, the
dimension dependence is larger by a factor of $\sqrt d$. The dependence on the number of neurons $(\sqrt n)$ is also problematic. This fact is
unfortunate since a much larger sample size $m$ would be required for good
generalization. 
In the next section we present a promising approach towards removing the dependence on dimension and the number of neurons in the above bound.

\section{Towards Dimension-Independent Bounds}

In this section we introduce a new framework for analyzing the adversarial Rademacher complexity of neural networks with ReLU activations. Unlike the case of linear hypotheses, the dimension-dependent term in the upper bound in
Theorem~\ref{th:one_layer_nn_rc} 
cannot be avoided by simply picking the appropriate norm $p$. In particular, deriving dimension-independent bounds for the adversarial Rademacher complexity of neural networks is a difficult problem. Prior works \citep{YinRamchandranBartlett2019, khim2018adversarial} have resorted to bounding the adversarial Rademacher complexity of surrogates that are more tractable. However, it is not clear how those guarantees translate into meaningful bounds on the complexity of the original network. In this section, we present an approach towards obtaining dimension-independent bounds on the adversarial Rademacher complexity of the original network. 

A major component of the difficultly in analyzing adversarial Rademacher complexity relates to providing a tight characterization of the optimal adversarial perturbation for a given point $\bx_i$, i.e., 
\begin{align}
\label{eq:z-star}
\bs^*_i = \argmin_{\bs: \|\bs\|_r \leq 1} y_i \sum_{j=1}^n u_j(\bw_j \cdot (\bx_i +\e \bs))_\plus
\end{align}
Thus, to begin, we study properties of such adversarial perturbations to the neural network. Afterwards, we leverage these properties to bound the adversarial Rademacher complexity. Notably, the proofs of these properties heavily rely on the fact that the activation function is ReLU and not any other Lipschitz function. As in the previous section, we will focus on the family of a one layer-network $\cG^n_p$ with activation $\rho(z)=z_+$.

\subsection{Characterizing Adversarial Perturbations}

In this section, we discuss characteristics of adversarial
perturbations to neural networks with ReLU activations. The following theorem implies that,
if the perturbations are bounded in $\ell_r$-norm by $\e$, then typically the optimal
adversarial perturbations will have exactly $r$-norm $\e$.
\begin{theorem}
\label{th:unit_norm_at_opt} 
Let $d$ be the dimension and $n$ the number of neurons. Consider the problem
\begin{align}
\label{eq:objective}
& \inf_{\|\bs\|_r\leq 1} f(\bs) = \sum_{j = 1}^n
  u_j (\bw_j\cdot (\bx+\e \bs))_+ .
\end{align}
If either  $\|\bx\|_r\geq \e$ or $n<d$, an optimum is attained on the sphere $\set{\bs \colon \|\bs\|_r = 1}$.
Otherwise, an optimum is attained either at $\bs=-\frac 1\e \bx$ or on $\|\bs\|_r=1$.
\end{theorem}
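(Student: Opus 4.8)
The plan is to analyze the function $f(\bs) = \sum_j u_j(\bw_j \cdot (\bx + \e\bs))_+$ as a piecewise-linear function on the closed ball $B = \{\bs : \|\bs\|_r \leq 1\}$, and to argue that an optimum can always be pushed out to the sphere $\|\bs\|_r = 1$ unless a specific degenerate situation forces it to the interior point $\bs = -\frac1\e\bx$. First I would fix an optimal $\bs^\star$ lying in the interior, $\|\bs^\star\|_r < 1$. The key structural observation is that $f$ is affine on each cell of the hyperplane arrangement determined by the $n$ constraints $\bw_j \cdot (\bx + \e\bs) = 0$; on the relative interior of the cell containing $\bs^\star$, $f$ agrees with a single affine function $\bs \mapsto \bg \cdot \bs + b$ where $\bg = \e\sum_{j \in A} u_j \bw_j$ and $A$ is the set of active neurons ($\bw_j \cdot (\bx+\e\bs^\star) > 0$). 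Because $\bs^\star$ is a minimum and $B$ has nonempty interior, we must have $\bg \cdot \bv \geq 0$ along every feasible direction $\bv$ from $\bs^\star$; since $\bs^\star$ is interior, all directions are feasible, forcing $\bg = \bzero$. So at an interior optimum, the gradient vanishes.

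Next I would exploit the freedom this gives. Since $\bg = \bzero$, $f$ is constant ($= b$) on the entire cell, so I can move $\bs^\star$ within its cell without changing the objective until I hit the boundary of the cell — i.e., until some neuron's preactivation $\bw_j \cdot (\bx + \e\bs)$ becomes zero — or until I hit $\|\bs\|_r = 1$. If I hit the sphere, we are done. Otherwise I reach a point on a face of the arrangement; there I re-examine the (one-sided) directional derivatives. The point is that on crossing into an adjacent cell some neurons switch on or off, changing the effective gradient; if in some adjacent cell the effective gradient is nonzero, then moving in the steepest-descent direction strictly decreases $f$ (contradicting optimality of the value $b$) unless that direction is blocked — but it can only be blocked by the ball constraint, again pushing us to the sphere. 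The one case where this argument can fail to terminate is when we keep hitting zero-gradient cells and the only accessible "stuck" point is where \emph{all} preactivations are $\leq 0$, i.e. where every neuron is off and $f \equiv 0$ locally; this forces $\bw_j\cdot(\bx+\e\bs) \le 0$ for all $j$ at that point, and the extreme representative of such a configuration is $\bx + \e\bs = \bzero$, i.e. $\bs = -\frac1\e\bx$ — which is feasible precisely when $\|\bx\|_r \le \e$.

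To make the dichotomy clean I would argue by cases on $\|\bx\|_r$ and $n$ versus $d$. If $\|\bx\|_r \geq \e$, the point $-\frac1\e\bx$ is outside the open ball (it lies on or outside the sphere), so any interior optimum with vanishing gradient can be slid within its cell toward the sphere without ever leaving the ball's interior prematurely — we can always reach $\|\bs\|_r = 1$ since the "all-off" region, being contained in a translate of $-\frac1\e\bx$'s direction, is not an obstruction inside the open ball; hence the optimum is attained on the sphere. If $n < d$, the vanishing-gradient condition $\sum_{j\in A} u_j \bw_j = \bzero$ together with $|A| \le n < d$ means the active set spans a proper subspace, leaving a nonzero direction $\bv$ orthogonal to all active $\bw_j$ along which $f$ is locally constant \emph{and} which we can follow (in $\pm$) until $\|\bs\|_r = 1$ without ever turning any active neuron off or any inactive one on — so again the sphere is reached. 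In the remaining case ($\|\bx\|_r < \e$ and $n \ge d$), either the sliding argument reaches the sphere or it gets trapped at the all-neurons-off point $\bs = -\frac1\e\bx$, giving the stated alternative.

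The main obstacle I anticipate is making the "slide to the boundary" argument fully rigorous across cell boundaries: $f$ is only piecewise affine, so I must carefully handle the one-sided directional derivatives at faces of the arrangement and rule out pathological zig-zagging among infinitely many constant-value cells. I expect to resolve this by a compactness/finiteness argument — the arrangement has finitely many cells, $f$ is continuous and piecewise affine, so the set $\{f = f_{\min}\}$ is a finite union of polyhedra, and I can show this set, if it meets the open ball, must also meet the sphere unless it is exactly the singleton $\{-\frac1\e\bx\}$ (which requires $\|\bx\|_r \le \e$ and, by the gradient condition plus a dimension count, $n \ge d$). Converting the directional-derivative bookkeeping into this "the min-set is a polyhedron reaching the sphere" statement is the crux; the rest is the case analysis above.
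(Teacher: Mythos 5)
Your overall strategy — exploit piecewise affineness, argue that an interior optimum has zero gradient on its cell, and slide within constant-value cells toward the boundary — is essentially the same route the paper takes. The paper formalizes the "slide" via an induction on the dimension of the span of the zero set $Z_0^{\bs} = \{\bw_j : \bw_j\cdot(\bx + \e\bs) = 0\}$ (Lemma~\ref{lemma:induction}, using Lemma~\ref{lemma:zero-gradient} for the base case): each slide either reaches the sphere or strictly increases $\dim\spn(Z_0)$, and after at most $d$ steps you either hit the sphere or have $d$ linearly independent $\bw_j$'s with zero preactivation, which by Lemma~\ref{lemma:intersection} pins down $\bs = -\frac1\e\bx$. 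That induction is precisely the "make the slide rigorous across cell boundaries" machinery you identify as the crux but leave unresolved.

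There is one genuine flaw in your sketch that would need fixing. You characterize the "stuck" terminal configuration as the point where all preactivations are $\leq 0$, i.e.\ all neurons off, with $f\equiv 0$ locally, and then extract $\bs=-\frac1\e\bx$ as "the extreme representative" of that region. This is not the right invariant. The all-neurons-off region is generically a full-dimensional polyhedral cone, not a point, and if the optimum lies strictly in its interior one can still slide inside it until hitting the ball boundary — so "all neurons off" does not by itself trap you. The correct terminal condition, as the paper's Lemmas~\ref{lemma:induction}--\ref{lemma:intersection} make precise, is that $d$ linearly independent $\bw_j$'s have $\bw_j\cdot(\bx+\e\bs)=0$ simultaneously; the dimension count (not the sign pattern) is what forces the singleton $\bs=-\frac1\e\bx$. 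Relatedly, your proposed finiteness fix asserts the min-set "must also meet the sphere unless it is exactly the singleton $\{-\frac1\e\bx\}$" — this dichotomy is stated too strongly and would itself need a vertex/dimension argument of the same flavor as Lemma~\ref{lemma:induction}. A smaller point: in your $n<d$ case you should take $\bv$ orthogonal to \emph{all} $\bw_j$ (which exists since $n<d$), not just the active ones — otherwise sliding along $\bv$ can indeed switch an inactive neuron on, invalidating the claim that no neuron changes state.
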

The proof of the above theorem is deferred to Appendix~\ref{app:unit_norm_at_opt}. 
Theorem~\ref{th:unit_norm_at_opt} implies that if $n<d$, then the optimal perturbation always has norm $\e$. This result is significant because $n<d$ is a common scenario. At the same time, the theorem also implies that if $\|\bx\|_r\geq \e$, then the optimal perturbation still has norm $\e$. In practice, one expects the norm of the data points to be larger than the perturbation. Thus, on real world datasets, one would expect adversarial perturbations to always have norm $\e$.

For $1<r<\infty$, Theorem~\ref{th:unit_norm_at_opt} aids in finding a necessary condition for the optimum. This condition implies that critical points are characterized by specifying which $\bw_j$ satisfy $\bw_j\cdot(\bx+\e \bs)<0, \bw_j\cdot(\bx+\e \bs)=0$, and $\bw_j\cdot(\bx+\e \bs)>0$. The exact assertion is fairly involved, so we delay the statement of this theorem to Appendix~\ref{app:necessary_condition}. However, the theorem simplifies considerably for $r=2$ and we include this case below.

\begin{theorem}\label{th:opt_char_r_2} Assume that $\|\bx\|_r\geq \e$. Let $1<r<\infty$ and take $f$ as in Theorem~\ref{th:unit_norm_at_opt} and $s^*$ as the minimizer of \eqref{eq:objective}. Define the following three sets:
\begin{align*}
    N =\{j\colon \bw_j\cdot (\bx+\e \bs^*)<0\}\\
    Z=\{j\colon \bw_j\cdot (\bx+\e \bs^*)=0\}\\
    P=\{j\colon \bw_j\cdot (\bx+\e \bs^*)>0\}.
\end{align*}
$\bs^*$ is characterized by specifying the sets $N,Z$, and $P$.
Furthermore, if $r=2$, $\bs^*$ can be explicitly expressed in terms of these sets.
Let $P_Z$ be the projection onto $\spn\{\bw_j\}_{j\in Z}$ and $P_{Z^C}$ the projection onto the complement of this subspace.
Then, $\bs^*$ is given by
\[
\bs^*=-\left(\sqrt{1-\frac{\|P_Z\bx\|_2^2}{\e^2}}\frac{P_{Z^C}\sum_{j\in P} u_j \bw_j}{\left\| P_{Z^C}\sum_{j\in P} u_j \bw_j\right\|_2}+\frac {1}{\e} {P_Z\bx}\right).
\]
\end{theorem}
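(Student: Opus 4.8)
The plan is to reduce the constrained minimization to a linear program over a fixed ``activation pattern,'' using that $z\mapsto (z)_+$ is piecewise linear, and then, for $r=2$, to solve that linear program in closed form. Since $\|\bx\|_r\ge\e$, Theorem~\ref{th:unit_norm_at_opt} guarantees that a minimizer $\bs^*$ of \eqref{eq:objective} exists with $\|\bs^*\|_r=1$; fix such a $\bs^*$ and let $N,Z,P$ be as in the statement. The starting point is that on the convex set $C=\{\bs:\bw_j\cdot(\bx+\e\bs)\le 0\text{ for }j\in N,\ =0\text{ for }j\in Z,\ \ge 0\text{ for }j\in P\}$ the objective agrees with the \emph{affine} function $g(\bs)=\sum_{j\in P}u_j\,\bw_j\cdot(\bx+\e\bs)$: each term with $j\in N\cup Z$ vanishes and each term with $j\in P$ equals $u_j$ times its (nonnegative) argument, independently of the sign of $u_j$. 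Since $\bs^*\in C\cap\{\|\bs\|_r\le 1\}$ and $\bs^*$ minimizes $f$ over all of $\{\|\bs\|_r\le 1\}$, it minimizes $g$ over $C\cap\{\|\bs\|_r\le 1\}$.

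Next I would argue that the inequalities defining $N$ and $P$ hold \emph{strictly} at $\bs^*$, hence by continuity are inactive: there is a neighborhood of $\bs^*$ on which $C$ coincides with the affine subspace $A=\{\bs:\bw_j\cdot(\bx+\e\bs)=0,\ j\in Z\}$. Therefore $\bs^*$ is a local minimizer of the affine $g$ over the convex set $A\cap\{\|\bs\|_r\le 1\}$, hence a global one. This is a convex program determined solely by the partition $(N,Z,P)$ together with the data; for $1<r<\infty$ the $\ell_r$-ball is strictly convex, so its solution is unique unless $g$ is constant on $A$. This proves the first claim. I expect this reduction to be the main obstacle: one must carefully track which constraints are active at the optimum, handle the possibly lower-dimensional set $A$, and deal with the degenerate case where $g$ is constant on $A$.

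For $r=2$, set $V=\spn\{\bw_j\}_{j\in Z}$, so $P_Z,P_{Z^C}$ are the orthogonal projections onto $V$ and $V^\perp$. Since $\bs=-\tfrac1\e\bx$ satisfies the equality constraints, $A=-\tfrac1\e P_Z\bx+V^\perp$; hence every $\bs\in A$ has $P_Z\bs=-\tfrac1\e P_Z\bx$ and $P_{Z^C}\bs=\bt$ for some $\bt\in V^\perp$, so $\|\bs\|_2^2=\tfrac1{\e^2}\|P_Z\bx\|_2^2+\|\bt\|_2^2$ and the constraint $\|\bs\|_2\le 1$ becomes $\|\bt\|_2\le R:=\sqrt{1-\|P_Z\bx\|_2^2/\e^2}$, which is well defined because $\|P_Z\bs^*\|_2\le\|\bs^*\|_2=1$ forces $\|P_Z\bx\|_2\le\e$. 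Writing $\bc=\sum_{j\in P}u_j\bw_j$, on $A$ we get $g(\bs)=\mathrm{const}+\e\,\bc\cdot\bt=\mathrm{const}+\e\,(P_{Z^C}\bc)\cdot\bt$, whose minimum over $\{\bt\in V^\perp:\|\bt\|_2\le R\}$ is attained, when $P_{Z^C}\bc\neq 0$, at $\bt^*=-R\,P_{Z^C}\bc/\|P_{Z^C}\bc\|_2$. Assembling $\bs^*=-\tfrac1\e P_Z\bx+\bt^*$ gives exactly the displayed formula; in the non-generic case $P_{Z^C}\bc=0$ the objective is constant on $A$ and $\bs^*$ need not be unique.
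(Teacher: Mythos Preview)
Your argument is correct, and it is genuinely different from the paper's. The paper proceeds via a Lagrangian and nonsmooth first-order conditions: it writes $f=f_1+f_2$ as a sum of a convex and a concave function, invokes a lemma stating that $\mathbf 0\in\partial f_1(\bs^*)+\partial f_2(\bs^*)$ at a local minimum, computes the subdifferential of $(z)_+$ and of $\|\bs\|_r^r$, and arrives at the relation $\dual_{r}(\bs^*)=-\tfrac{\e}{\lambda}\big(\sum_{j\in P}u_j\bw_j+\sum_{j\in Z}t_ju_j\bw_j\big)$ with unknown $t_j\in[0,1]$ and multiplier $\lambda$; the identity $P_Z\bs^*=-\tfrac1\e P_Z\bx$ is then derived separately by a linear-algebra argument, and for $r=2$ the two pieces are combined (using $\dual_2(\bs)=\bs$) to get the explicit formula. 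Your route bypasses all of this: you observe that on the pattern region $C$ the objective is the \emph{affine} function $g(\bs)=\e\,\bc\cdot\bs+\text{const}$ with $\bc=\sum_{j\in P}u_j\bw_j$, use strictness of the $N$- and $P$-constraints at $\bs^*$ to replace $C$ locally by the affine subspace $A$, and conclude by the standard ``local $=$ global'' fact for affine functions on convex sets; for $r=2$ you then parametrize $A=-\tfrac1\e P_Z\bx+V^\perp$ and solve a Euclidean problem directly. Your approach is shorter and more elementary; the paper's approach, on the other hand, yields an implicit formula for $\bs^*$ valid for every $1<r<\infty$ (via $\dual_r$), not just the closed form at $r=2$, and the Lagrangian viewpoint makes the role of the ``$Z$-multipliers'' $t_j$ explicit. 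Both treatments tacitly assume the non-degenerate case $P_{Z^C}\bc\neq 0$ in the displayed formula; you flag this, and the paper handles the extreme case $P=\emptyset$ separately.
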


\subsection{Dimension-Independent Bound for ReLU Neural Networks}
\label{sec:rc_relu_nn}
\label{sec:dim-independent-neural-nets}

Observe that, given $\bu$ and the weight matrix $\bW$ with columns
$(\bw_1, \dots, \bw_n)$, each $\bx_i$ partitions these vectors into
three sets depending on whether at the optimal $\bs^*_i$,
$\bw_j \cdot (\bx_i + \e\bs^*_i)$ is positive, zero or
negative. As a result, given $\bW$ and $\bu$, the points in the data
set can be partitioned into sets depending on whether they induce the
same sign pattern on the columns of $\bW$. Let $\mathcal{C}_\cS$
denote the set of all such possible partitions and let
$\mathcal{C}^*_\cS$ be the size of this set. Indexing a particular
partition in this set by $\mathcal{C}$, let $n_\mathcal{C}$ be the
number of parts in this partition and define
$\Pi^*_\cS = \max_{\mathcal{C}} n_{\mathcal{C}}$. Notice that both
$\Pi^*_\cS$ and $\mathcal{C}^*_\cS$ are data-dependent quantities. We
next state a general theorem that does not explicitly depend on the
dimension and instead bounds the adversarial Rademacher complexity in
terms of the above data-dependent quantities.
\begin{theorem}
\label{th:one_layer_nn_rc_shatter} Consider the family of functions $\cG^n_p$ with activation function $\rho(z)=(z)_+$.
and perturbations in $r$-norm for $1<r<\infty$. Assume that for our sample $\|\bx_i\|_r\geq \e$.
Then, the following upper bound on the Rademacher complexity holds:
\[
\wt \R_\cS(\cG_p^n)
\!\leq\! \bigg[\! \frac{W\Lambda \max(1,d^{1-\frac 1p-\frac 1r})(\|\bX\|_{p^*,\infty}+\e)}{\sqrt m} \!\bigg] C^*_\cS \sqrt{\Pi^*_\cS}.
\]
\end{theorem}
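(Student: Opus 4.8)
The plan is to exploit the combinatorial structure of the optimal adversarial perturbations given by Theorems~\ref{th:unit_norm_at_opt} and~\ref{th:opt_char_r_2}. Write $\wt\R_\cS(\cG^n_p)=\frac1m\Ex[\bsigma]{\sup_{f\in\cG^n_p}\sum_{i=1}^m\sigma_i\tilde\ell_f(\bx_i,y_i)}$ with $\tilde\ell_f(\bx_i,y_i)=\inf_{\|\bs\|_r\le1}y_i\sum_{j=1}^n u_j(\bw_j\cdot(\bx_i+\e\bs))_+$. Because $\|\bx_i\|_r\ge\e$ and $1<r<\infty$, Theorem~\ref{th:unit_norm_at_opt} guarantees that each $f$ has, for every $i$, an optimal perturbation $\bs^*_i$, hence a well-defined sign pattern of $\bigl(\bw_j\cdot(\bx_i+\e\bs^*_i)\bigr)_j$ and a well-defined induced partition $\mathcal C(f)\in\mathcal C_\cS$. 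First I would split $\sup_{f}\sum_i\sigma_i\tilde\ell_f=\max_{\mathcal C\in\mathcal C_\cS}\sup_{f:\mathcal C(f)=\mathcal C}\sum_i\sigma_i\tilde\ell_f$ and convert the outer maximum into a sum at the cost of a factor $\mathcal C^*_\cS$, using $\max_\mathcal C Z_\mathcal C\le\sum_\mathcal C(Z_\mathcal C)_+$, to obtain
\[
\wt\R_\cS(\cG^n_p)\le\frac{\mathcal C^*_\cS}{m}\max_{\mathcal C\in\mathcal C_\cS}\Ex[\bsigma]{\Bigl(\sup_{\mathcal C(f)=\mathcal C}\sum_{i=1}^m\sigma_i\tilde\ell_f\Bigr)_+}.
\]

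Next I would fix $\mathcal C$ with parts $C_1,\dots,C_{n_\mathcal C}$, $n_\mathcal C\le\Pi^*_\cS$, and apply sub-additivity of the supremum across parts followed by Cauchy--Schwarz: since $\sum_k\sqrt{\abs{C_k}}\le\sqrt{n_\mathcal C}\bigl(\sum_k\abs{C_k}\bigr)^{1/2}\le\sqrt{\Pi^*_\cS\,m}$, it suffices to show that for each part
\[
\Ex[\bsigma]{\Bigl(\sup_{\mathcal C(f)=\mathcal C}\sum_{i\in C_k}\sigma_i\tilde\ell_f\Bigr)_+}\le\sqrt{\abs{C_k}}\;W\Lambda\max(1,d^{1-\frac1p-\frac1r})(\|\bX\|_{\du p,\infty}+\e).
\]
Combining this with the previous display yields the theorem, so the whole argument reduces to this per-part estimate.

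For the per-part estimate I would use the characterization of $\bs^*_i$. Fixing $f$ with $\mathcal C(f)=\mathcal C$, on a part $C_k$ every $\bx_i$ shares the sign sets $(N_k,Z_k,P_k)$, so the ReLU linearizes and $\tilde\ell_f(\bx_i,y_i)=y_i\bv_k\cdot\bx_i+\theta_i$ there, with $\bv_k=\sum_{j\in P_k}u_j\bw_j$ (hence $\|\bv_k\|_p\le\Lambda\max_j\|\bw_j\|_p\le W\Lambda$) and $\theta_i=\e y_i\bv_k\cdot\bs^*_i$, $\abs{\theta_i}\le\e\|\bv_k\|_{\du r}$; in the no-tie case $Z_k=\emptyset$ the perturbation direction is common across the part, so $\theta_i=-\e\|\bv_k\|_{\du r}$ is constant. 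Splitting $\sum_{i\in C_k}\sigma_i\tilde\ell_f=\bv_k\cdot\bigl(\sum_{i\in C_k}\sigma_i y_i\bx_i\bigr)+\sum_{i\in C_k}\sigma_i\theta_i$, the first term's supremum over $\|\bv_k\|_p\le W\Lambda$ equals $W\Lambda\bigl\|\sum_{i\in C_k}\sigma_i y_i\bx_i\bigr\|_{\du p}$, whose Rademacher average is bounded by the $\ell_{\du p}$ linear bound of Theorem~\ref{th:linear_rc} on the subsample $C_k$, together with Lemma~\ref{lemma:norm_ratio} (which supplies the factor $\max(1,d^{1-\frac1p-\frac1r})$ when passing between the $\|\cdot\|_{\du p}$-estimates and $\|\bX\|_{\du p,\infty}$); the second term (in the no-tie case) is $-\e\|\bv_k\|_{\du r}\sum_{i\in C_k}\sigma_i$, bounded by $\e W\Lambda\max(1,d^{1-\frac1p-\frac1r})\abs{\sum_{i\in C_k}\sigma_i}$ with $\Ex[\bsigma]{\abs{\sum_{i\in C_k}\sigma_i}}\le\sqrt{\abs{C_k}}$.

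I expect the main obstacle to be this per-part estimate when a part has a nonempty tie set $Z_k$. There, by Theorem~\ref{th:opt_char_r_2} (for $r=2$, and its general-$r$ analogue in Appendix~\ref{app:necessary_condition}) the offset is $\theta_i=-y_i\|P_{Z_k^C}\bv_k\|_2\sqrt{\e^2-\|P_{Z_k}\bx_i\|_2^2}$, which is \emph{not} affine in $\bx_i$ and involves $\spn\{\bw_j\}_{j\in Z_k}$, a subspace varying with $f$; a naive pointwise bound on $\theta_i$ would cost a factor $\abs{C_k}$ instead of $\sqrt{\abs{C_k}}$. I would resolve this by writing $\bigl(\|P_{Z_k}\bx_i\|_2^2\bigr)_{i\in C_k}=\bigl(\langle P_{Z_k},\bx_i\bx_i^T\rangle\bigr)_{i\in C_k}$, a linear image of the rank-$\le n$ projector $P_{Z_k}$, so that the tie term becomes a bounded linear functional over a low-rank family of projectors and can be absorbed into the $\sqrt{\abs{C_k}}$ scaling (the map $t\mapsto\sqrt{\e^2-t}$ being Lipschitz on the relevant range since $\|P_{Z_k}\bx_i\|_2\le\e$). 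A secondary, more routine point is the bookkeeping to exchange the supremum over $f$ with the Rademacher average on each block while paying only the factor $\mathcal C^*_\cS$ and not a covering number — which is exactly what the $\max\le\sum(\cdot)_+$ reduction above is designed for.
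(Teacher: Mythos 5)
Your proposal follows the same overall route as the paper's proof: partition the sample according to the sign patterns induced on $(\bw_1,\dots,\bw_n)$ at the optimal perturbations, pay a factor $C^*_\cS$ for the number of partitions by converting a supremum over $(\bW,\bu)$ into a sum over partitions, linearize the ReLU on each part, bound the linear contribution via Theorem~\ref{th:linear_rc} applied to the subsample $\cC_t$, bound the offset by a Jensen/Khintchine estimate, and finally use $\sum_t\sqrt{|\cC_t|}\le\sqrt{\Pi^*_\cS\,m}$. The only cosmetic difference is the order of operations: the paper first splits $\tilde\ell_f$ into a $\bx_i$-term and an $\e\bs_t$-term, then applies the sup-to-sum conversion to each separately; you convert sup-to-sum first (via $\max_\cC\le\sum_\cC(\cdot)_+$, where the paper uses $\sup_\cC\le\sum_\cC$ on quantities that are already nonnegative) and then decompose per part. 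These reorderings give the same bound.

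The one substantive divergence is your handling of nonempty tie sets $Z_k$, and here you have identified a genuine subtlety that the paper glosses over. The paper's proof writes a single perturbation $\bs_t$ per part $\cC_t$ and then drops the ReLU by restricting the sum to $P_t$, which implicitly requires the optimal perturbation (and the induced linearization) to be the same for every $\bx_i$ in that part. By Theorem~\ref{th:opt_char_r_2} this holds only when $Z=\emptyset$; when $Z\neq\emptyset$ the optimizer depends on $P_Z\bx_i$ and varies within the part, so the step ``(definition of $\cC_t$)'' in the paper's proof is not literally valid. You flag this correctly. However, your proposed fix is not yet a proof: the offending term is a product $\|P_{Z_k^C}\bv_k\|_2\cdot\sqrt{\e^2-\|P_{Z_k}\bx_i\|_2^2}$ where the subspace $\spn\{\bw_j\}_{j\in Z_k}$ varies with $f$, and while the inner quantity is a Lipschitz image of a linear functional of $P_{Z_k}$, the index set of rank-$\le n$ orthogonal projectors is a manifold over which you still need a uniform control (covering, chaining, or a vector-contraction step) before the $\sqrt{|C_k|}$ scaling drops out; a pointwise Lipschitz bound does not give it. The resolution consistent with the paper's surrounding development would be either (i) to add the hypothesis that the induced partitions have $Z_t=\emptyset$ (this is exactly the hypothesis the paper imposes in Lemma~\ref{lem:sauer_adv} and in the subsequent $\epsilon$-adversarial-shattering discussion), or (ii) to argue that the set of $(\bW,\bu)$ for which some $Z_i\neq\emptyset$ is Lebesgue-null and close the gap by a limiting argument.

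A small bookkeeping note: the bound on the subsample Rademacher term passes through $\|\bX^\top_t\|_{2,p^*}\le\sqrt{|\cC_t|}\,\|\bX^\top\|_{\infty,p^*}$, a group-norm comparison the paper establishes directly; this is not the role of Lemma~\ref{lemma:norm_ratio} (which compares vector $\ell_p$ and $\ell_{r^*}$ norms and is only used for the $\bs_t$-term). Also note that the appendix version of the theorem carries an additional constant $K(p,d)$ from Theorem~\ref{th:linear_rc}, which the main-text statement you quote omits; any complete proof along your lines will produce that constant as well.
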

Notice that the main difference between the above guarantee and the
one from the previous section is that the dimension-dependent term
$(1+\sqrt{d(n+1)\log(9 m)})$ has been replaced by data-dependent
quantities. Next, we discuss how to bound these data-dependent
quantities in terms of a notion of \emph{adversarial shattering} that
we introduce in this work.

\noindent \textbf{Bounding $\Pi^*_\cS$ and $\epsilon$-adversarial
  shattering.} A key quantity of interest in understanding the bounds
from the above theorem is $\Pi^*_\cS$. Notice that this corresponds to
the maximum number of partitions of the vectors $\bw_1, \dots, \bw_j$
that can be induced by the dataset $(\bx_1, \ldots,
\bx_m)$. Viewing the $\bw_j$s as examples and the $\bx_i$s as
hyperplanes, this corresponds to the number of sign patterns on $\bW$
that can be induced by $\cS$. In standard settings, this would be
bounded by the VC-dimension ($d$ in this case). However, we know more
about how the $\bx_i$s act on these vectors. Notice that at the
optimal $\bs^*_i$ for a given $\bx_i$, for some subset of vectors
$\bw_j \cdot \bx_i + \bw_j \cdot \e\bs^*_i \geq 0$, and for the rest
it must be that $\bw_j \cdot \bx_i + \bw_j \cdot \e\bs^*_i \leq
0$. Hence, not only does $\bx_i$ induce a sign pattern on the
$\bw_j$s, it does so with a certain margin. This is reminiscent of the
classical notion of {\em fat shattering}
\cite{MohriRostamizadehTalwalkar2018} from statistical learning
theory. However, in this case, the margin induced could itself depend
on the $\bw_j$s in a complex manner via the product of
$\bw_j \cdot \bs^*_i$. To formalize this intuition, we define the
following notion of $\epsilon$-adversarial shattering.

\begin{definition}
Fix the sample $\cS = ((\bx_1, y_1)\ldots (\bx_m, y_m))$ and $(\bw_1, \ldots, \bw_n)$.
Let $\bs_i = \argmin_{\|\bs\|_r\leq 1}y_i\sum_{j=1}^n
u_j(\bw_j\cdot (\bx_i+\e \bs))_+$, and define the following three sets:
\begin{align*}
    P_i=\{j\colon \bw_j\cdot (\bx+\e \bs_i)>0\}\\
    Z_i=\{j\colon \bw_j\cdot (\bx+\e \bs_i)=0\}\\
    N_i=\{j\colon \bw_j\cdot (\bx+\e \bs_i)<0\}.
\end{align*}
Let $\Pi_\cS(\bW)$ be the number of distinct $(P_i,Z_i,N_i)$s that are
induced by $\cS$, where $\bW$ is a matrix that admits the
$\bw_j$s as columns. We call $\Pi_\cS(\bW)$ the \emph{$\e$-adversarial
  growth function}. We say that $\bW$ is \emph{$\e$-adversarially
  shattered} if every $P\subset [n]$ is possible.
\end{definition}
Under certain assumptions, by carefully studying the above notion of
adversarial shattering one can obtain bounds of the form
$O(\frac{1}{\epsilon^2})$ on the maximum number of $\bw_j$s that can
be adversarially shattered by $\cS$. This lets us use an argument
similar in spirit to Sauer's lemma \cite{sauer1972density,
  shelah1972combinatorial} to bound $\Pi^*_\cS$ by
$n^{O(1/\epsilon^2)}$, thereby leading to a meaningful bound in
Theorem~\ref{th:one_layer_nn_rc_shatter}. We believe that a further
study of the above notion of adversarial shattering is the key to
proving general dimension-independent bounds on the adversarial
Rademacher complexity of neural networks.

\section{Conclusion}
In this work we presented a detailed study of the generalization properties of linear models and neural networks under adversarial perturbations. Our bounds for the linear case improve upon prior work and also lead to a novel analysis of the Rademacher complexity of linear hypotheses in non-adversarial settings as well. 
For the case of a single ReLU unit, while we have upper and lower bounds, it would be interesting to investigate the extent to which they are close to each other. Our analysis for the linear and ReLU hypotheses reveals that by choosing the appropriate norm regularization~($\ell_p$) on the weight matrices, one can indeed avoid dimension dependence and achieve generalization in adversarial settings with negligible statistical overhead as compared to the corresponding non-adversarial setting. Our analysis illustrates the importance of choosing $p$ satisfying $\frac 1r+\frac 1p\geq 1$ in algorithms. This relationship further suggests that for robustness to perturbations in an arbitrary norm 
$\|{\cdot}\|$, one could regularize by the dual norm of $\|{\cdot}\|$. Investigating this relationship could be future work.
Finally, it would be interesting to use our approach from Section~\ref{sec:rc_relu_nn} based on $\epsilon$-adversarial shattering to provide dimension-independent upper bounds on the adversarial Rademacher complexity of neural networks.


\bibliographystyle{icml2020}
\bibliography{adv}

\clearpage

\appendix
\onecolumn
\section{The Rademacher Complexity of Linear Classes [Proof of
  Theorem~\ref{th:linear_rc}]}
\label{app:linear_rc}

In this section, we provide a proof of Theorem~\ref{th:linear_rc} and
present improved bounds for the Rademacher complexity of linear
hypotheses. We will analyze each of the three sub-cases namely,
$p \in (1, 2]$, $p > 1$, and $p = 1$ separately in the subsections
that follow.  Recall that the group norm $\| \cdot\|_{p_1, p_2}$ of
matrix $\bX$ is defined by
\[
\| \bX \|_{p_1, p_2} = \|(\|\bx_1\|_{p_1}, \cdots, \|\bx_m\|_{p_1})\|_{p_2},
\]
where $\bx_1, \ldots, \bx_m$ are the columns of $\bX$. For
$p_1, p_2 \leq \infty$, this group-norm can be rewritten as follows:
\[
\| \bX \|_{p_1,p_2}
= \left[ \sum_{i=1}^m\left( \sum_{j=1}^d|X_{j,i}|^{p_1}\right)^\frac {p_2}{p_1} \right]^\frac 1{p_2}.
\]

\subsection{Case $p \in (1, 2]$}
\label{app:prev_known_linear_rc_proof}

For convenience,
we will use the shorthand $\bu_\bsigma = \sum_{i = 1}^m
\sigma_i \bx_i$. By definition of the dual norm,
we can write:
\begin{align*}
\R_S(\cF_p)
& = \frac{1}{m} \E_\bsigma \Bigg[ \sup_{\|{ \bw }\|_p \leq W } \bw \cdot \sum_{i = 1}^m
\sigma_i \bx_i \Bigg]\\
& = \frac{W}{m} \E_\bsigma \big[ \| \bu_\bsigma \|_\du p \big] & (\text{dual norm property})\\
& \leq \frac{W}{m} \sqrt{\E_\bsigma \big[ \| \bu_\bsigma \|_\du p^2 \big]}. &
(\text{Jensen's inequality})
\end{align*}

Now, for $\du p \geq 2$,
$\Psi\colon \bu \mapsto \frac{1}{2} \| \bu \|_\du p^2$ is
$(\du p - 1)$-smooth with respect to $\| \cdot \|_{p^*}$, that is, the
following inequality holds for all $\bx, \by \in \Rset^d$:
\[
\Psi(\mathbf y)\leq \Psi(\mathbf x) + \nabla \Psi(\bx)^T(\mathbf y-\bx)+\frac{\du p-1}2 \|{\mathbf y-\bx}\|^2_{\du p}
\] 
In view of that, by successively applying the $(\du p - 1)$-smoothness
inequality, we can write:
\begin{align*}
2 \Psi(\bu_\sigma) 
\leq 2 \sum_{k = 1}^m \bigg\langle \nabla \Psi \Big( \sum_{i = 1}^{k - 1}
\sigma_i \bx_i \Big) , \sigma_k \bx_k \bigg\rangle + (\du p - 1) \sum_{i =
1}^m \| \sigma_i \bx_i \|_\du p^2.
\end{align*}
Conditioning on $\sigma_1, \ldots, \sigma_{k - 1}$ and taking
expectation gives:
\begin{align*}
2 \E_\bsigma[\Psi(\bu_\sigma) ]
\leq (\du p - 1) \sum_{i =
1}^m \| \bx_i \|_\du p^2.
\end{align*}
Thus, the following upper bound holds for the empirical Rademacher
complexity:
\begin{align*}
\R_S(\cF_p) 
\leq \frac{W}{m} \sqrt{(\du p - 1) \sum_{i =
1}^m \| \bx_i \|_\du p^2}.
\end{align*}


\subsection{General case $p > 1$}
\label{app:new_linear_rc_proof}

Here again, we use the shorthand
$\bu_\bsig=\sum_{i = 1}^m \sigma_i\bx_i$. By definition of the dual
norm, we can write:
\begin{align*}
\R_S(\cF_p)
& = \frac{1}{m} \E_\bsigma \Bigg[ \sup_{\| \bw \|_p \leq W } \bw \cdot \sum_{i = 1}^m
\sigma_i \bx_i \Bigg]\\
& = \frac{W}{m} \E_\bsigma \big[ \| \bu_\bsigma \|_\du p \big] & (\text{dual norm property})\\
& \leq \frac{W}{m} \Big[ \E_\bsigma \big[ \| \bu_\bsigma \|_\du p^\du p \big]\Big]^{\frac{1}{\du p}}. &
(\text{Jensen's inequality, $p^* \in [1, +\infty)$})\\
& = \frac{W}{m} \Big[ \sum_{j = 1}^d
\E_\bsigma \big[ |\bu_{\bsigma, j}|^{\du p} \big] \Big]^{\frac{1}{\du p}}.
\end{align*}
Next, by Khintchine's inequality \cite{Haagerup1981}, the following
holds:
\begin{align*}
\E_\bsigma \big[ |\bu_{\bsigma, j}|^{\du p} \big] &\leq B_{\du p} \Big[ \sum_{i = 1}^m
x_{i, j}^2 \Big]^{\frac{\du p}{2}},
\end{align*}
where $B_{\du p} = 1$ for $p^* \in [1, 2]$ and
\begin{align*}
B_{\du p} &= 2^{\frac{\du p}{2}}\frac{\Gamma \big( \frac{\du p + 1}{2} \big)}{\sqrt{\pi}},
\end{align*}
for $p \in [2, +\infty)$.  This yields the following bound on the
Rademacher complexity:
\[
\R_S(\cF_p) \leq 
\begin{cases}
\frac{W}{m}\| \bX^T \|_{2, \du p} & \text{if }\du p \in [1, 2], \\[.25cm]
\frac{\sqrt{2}W}{m} \bigg[\frac{\Gamma \big( \tfrac{\du p + 1}{2} \big)}{\sqrt{\pi}} \bigg]^{\frac{1}{\du p}} \| \bX^T \|_{2, \du p} & \text{if } \du p \in [2, +\infty).
\end{cases}
\]

\subsection{Case $p = 1$}
\label{app:linear_p_1}

The bound on the Rademacher complexity for $p=1$ was previously known
but we reproduce the proof of this theorem for completeness. We
closely follow the proof given in
\cite{MohriRostamizadehTalwalkar2018}.
\begin{proof}
For any $i \in [m]$, $x_{ij}$ denotes the $j$th component of
$\bx_i$.
\begin{align*}
\R_\cS(\cF_1)  
& = \frac{1}{m} \E_\bsigma \left[ \sup_{\| \bw \|_1 \leq W}
\bw \cdot \sum_{i = 1}^m \sigma_i \bx_i  \right]\\
& = \frac{W}{m} \E_\bsigma \left[ \Big\| \sum_{i = 1}^m
\sigma_i \bx_i  \Big\|_\infty \right] & \text{(by definition of the
dual norm)}\\
& = \frac{W}{m} \E_\bsigma \left[ \max_{j \in [d]} \left| \sum_{i = 1}^m
\sigma_i x_{ij}  \right|  \right] & \text{(by definition of $\| \cdot \|_\infty$)}\\
& = \frac{W}{m} \E_\bsigma \left[ \max_{j \in [d]} \max_{s \in
\set{-1, +1}} s \sum_{i = 1}^m
\sigma_i x_{ij}   \right] & \text{(by definition of $| \cdot |$)}\\
& = \frac{W}{m} \E_\bsigma \left[ \sup_{\bz \in \cA} \sum_{i = 1}^m
\sigma_i z_i  \right] ,
\end{align*}
where $\cA$ denotes the set of $d$ vectors
        $\set{s (x_{1j}, \ldots, x_{mj})^\top \colon j \in [d], s \in
          \set{-1, +1}}$.  For any $\bz \in A$, we have
        $\| \bz \|_2 \leq \sup_{\bz\in A}
        \|\bz\|_2=\|\bX^T\|_{2,\infty}$. Further, $\cA$ contains
        at most $2d$ elements. Thus, by Massart's lemma
        \citep{MohriRostamizadehTalwalkar2018}, 
\begin{align*}
\R_\cS(\cF_1)  
& \leq W  \|\bX^T\|_{2, \infty} \frac{\sqrt{2 \log (2d)}}{m},
\end{align*}
which concludes the proof.
\end{proof}

\subsection{Comparing $\|\bM^T\|_{p, q}$ and $\|\bM\|_{q, p}$ [Proof
	of Proposition~\ref{prop:norm_ratio}]}
\label{app:norm_comparison}

In this section, we prove
Proposition~\ref{prop:norm_ratio}. This proposition implies that for
$p \in (1, 2)$, the group norm $\|\bX^\top\|_{2, p^*}$, is always a lower
bound on the term $\|\bX\|_{p^*,2}$. These two norms are a major component of the Rademacher complexity of linear classes. 
\begin{proof} First, \eqref{eq:p_leq_q} follows from \eqref{eq:q_leq_p} by substituting $\bM=\bA^T$ for a matrix $\bA$:
For $q\leq p$,
\[\min(m,d)^{\frac 1p -\frac 1q} \|\bA\|_{p,q}\leq \|\bA^T\|_{q,p}\leq \|\bA\|_{p,q}\] which implies that 
\[\|\bA^T\|_{q,p}\leq \|\bA\|_{p,q}\leq \min(m,d)^{\frac 1q-\frac 1p}\|\bA^T\|_{q,p}\]
However, now $p$ and $q$ are swapped in comparison to \eqref{eq:p_leq_q}. Now after swapping them again, for $p\leq q$,
\[\|\bA^T\|_{p,q}\leq \|\bA\|_{q,p}\leq \min(m,d)^{\frac 1p-\frac 1q}\|\bA^T\|_{p,q}\]
The rest of this proof will be devoted to showing \eqref{eq:q_leq_p}.
	
	Next, if $p=q$, then $\| \bM\|_{q,p}=\| {\bM^T}\|_{p,q}$.
	For the rest of the proof, we will assume that $q < p$. Specifically, $q < +\infty$ which allows us to consider fractions like $\frac pq$.  
	
	We will show that for $q < p$, the following inequality holds:
	$\|\bM\|_{q, p} \leq \|\bM^T\|_{p, q}$, or equivalently,
	$\|\bM\|^q_{q, p} \leq \|\bM^T\|^q_{p, q}$.
	
	We will use the shorthand $\alpha = \tfrac{p}{q} > 1$.
	By definition of the group norm and using the notation $\bU_{ij} =
	|\bM_{ij}|^q$, we can write
	\begin{align*}
	\|\bM\|^q_{q, p}
	= \bigg[ \sum_{j = 1}^d \Big[\sum_{i = 1}^m |\bM_{ij}|^q
	\Big]^{\frac{p}{q}} \bigg]^{\frac{q}{p}}
	&= \bigg[ \sum_{j = 1}^d \Big[\sum_{i = 1}^m \bU_{ij}
	\Big]^{\alpha} \bigg]^{\frac 1 \alpha}
	= \left\| 
	\left[
	\begin{smallmatrix}
	\sum_{i = 1}^m \bU_{i1}\\
	\vdots\\
	\sum_{i = 1}^m \bU_{id}
	\end{smallmatrix}
	\right]
	\right\|_{\alpha}\\
	& = \sup_{\| \bb \|_{\alpha^* \leq 1}}
	\left[
	\begin{smallmatrix}
	\sum_{i = 1}^m \bU_{i1}\\
	\vdots\\
	\sum_{i = 1}^m \bU_{id}
	\end{smallmatrix}
	\right] \cdot
	\left[
	\begin{smallmatrix}
	b_{1}\\
	\vdots\\
	b_{d}
	\end{smallmatrix}
	\right] & \text{(by def. of dual norm)}\\
	& \leq \sum_{i = 1}^m 
	\sup_{\| \bb \|_{\alpha^* \leq 1}}
	\left[
	\begin{smallmatrix}
	\bU_{i1}\\
	\vdots\\
	\bU_{id}
	\end{smallmatrix}
	\right] \cdot
	\left[
	\begin{smallmatrix}
	b_{1}\\
	\vdots\\
	b_{d}
	\end{smallmatrix}
	\right]  & \text{(sub-additivity of $\sup$)}\\
	& = \sum_{i = 1}^m \left\| \left[
	\begin{smallmatrix}
	\bU_{i1}\\
	\vdots\\
	\bU_{id}
	\end{smallmatrix}
	\right] \right\|_\alpha  & \text{(by def. of dual norm)}\\
	& = \sum_{i = 1}^m \Big[ \sum_{j = 1}^d |\bM_{ij}|^p
	\Big]^{\frac{q}{p}}
	= \|\bM^T\|^q_{p, q}.
	\end{align*}
	To show that this inequality is tight, note that equality holds for an all-ones matrix.
	Next, we prove the inequality 
	\[
	\min(m, d)^{\frac 1q - \frac 1p} \|\bM^T\|_{p, q}
	\leq \|\bM\|_{q, p},
	\] 
	for $q \leq p$.
	Applying Lemma~\ref{lemma:norm_ratio} twice gives
	\begin{equation}
	\label{eq:lin_app_d} \|\bM^T \|_{p,q}\leq \| \bM^T\|_{q,q}=\|\bM\|_{q,q}\leq d^{\frac 1q-\frac 1p}\|\bM\|_{p,q}.
	\end{equation}
	Again applying Lemma~\ref{lemma:norm_ratio} twice gives
	\begin{equation}
	\label{eq:lin_app_m} \|\bM^T\|_{p,q}\leq m^{\frac 1q-\frac 1p}\|\bM^T\|_{p,p}=m^{\frac 1q-\frac 1p}\|\bM\|_{p,p}\leq m^{\frac 1q-\frac 1p}\|\bM\|_{p,q}.
	\end{equation}
	(Lemma~\ref{lemma:norm_ratio} was presented in Section~\ref{sec:adv_linear_proof} and is proved in Appendix~\ref{sec:adv_linear}.) 
	Next, we show that \eqref{eq:lin_app_d} is tight if $d\leq m$ and that \eqref{eq:lin_app_m} is tight if $d \geq m$. If $d\leq m$, the bound is tight for the block matrix
	$\bM 
	= \left[
	\begin{smallmatrix} 
	\bI_{d \times d} \ | \ \mathbf 0
	\end{smallmatrix}
	\right]$,
	and, if 
	$d\geq m$, then the bound is tight for the block matrix
	$\bM 
	= \left[
	\begin{smallmatrix} \bI_{d\times d} \\[.075cm]
	\hline\\
	\mathbf 0
	\end{smallmatrix}
	\right].$
\end{proof}

\subsection{Constant Analysis}
\label{app:constants}

In this section, we study the constants in the two known bounds on the Rademacher complexity of linear classes for $1 < p \leq 2$. Specifically,
\begin{numcases}{\R_\cS(\cF_p)\leq}
\frac W m \sqrt{\du p-1} \|{\bX}\|_{\du p,2}
\label{eq:previous_linear_rc_2}\\[.25cm]
\frac{\sqrt{2}W}{m} \bigg[\frac{\Gamma( \tfrac{\du p + 1}{2} )}{\sqrt{\pi}} \bigg]^{\frac{1}{\du p}} \|{\bX^T}\|_{2,\du p}  \label{eq:new_linear_rc}
\end{numcases}
We will compare the constants in equations (\ref{eq:previous_linear_rc_2}) and (\ref{eq:new_linear_rc}), namely $\frac{\sqrt 2 W}m \big(\frac{\Gamma(\frac{\du p+1} 2 )}{\sqrt \pi}\big)^\frac 1 {\du p} $ and $\frac W m\sqrt{\du p-1}$. Since $\frac Wm$ divides both of these constants, we drop this factor and work with the expressions $c_1(p) \colon = \sqrt{\du p - 1}$ and $c_2(p)\colon = \sqrt 2 \big(\frac{\Gamma(\frac{\du p+1} 2 )}{\sqrt \pi}\big)^\frac 1 {\du p}$.
To start, we first establish upper and lower bound on $c_2(p)$.  
\begin{lemma}
\label{lemma:f2_bound}Let $c_2(p)=\sqrt 2 \big(\frac{\Gamma(\frac{\du p+1} 2 )}{\sqrt \pi}\big)^\frac 1 {\du p}$. Then the following inequalities hold:
\[
e^{-\frac 12}\sqrt{ \du p}\leq c_2(p)\leq  e^{-\frac 12} \sqrt {\du p+1}. 
\]
\end{lemma}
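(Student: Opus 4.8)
The plan is to reduce the claimed two–sided estimate to a standard two–sided Stirling bound for the Gamma function, after which only two elementary scalar inequalities remain. Writing $q=\du p$ for brevity (so $q\ge 1$), I would first raise the claim to the $q$-th power. Since $c_2(p)^q = 2^{q/2}\,\Gamma\big(\tfrac{q+1}{2}\big)/\sqrt\pi$, the lemma is equivalent to
\[
\sqrt\pi\,\Big(\frac{q}{2e}\Big)^{q/2}\;\le\;\Gamma\Big(\frac{q+1}{2}\Big)\;\le\;\sqrt\pi\,\Big(\frac{q+1}{2e}\Big)^{q/2}.
\]
I would prove both halves from Binet's form of Stirling's approximation: for every $x>0$,
\[
\sqrt{2\pi}\,x^{x-\frac12}e^{-x}\;\le\;\Gamma(x)\;\le\;\sqrt{2\pi}\,x^{x-\frac12}e^{-x+\frac1{12x}},
\]
specialized to $x=\tfrac{q+1}{2}$. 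The convenient feature is that here $x-\tfrac12=\tfrac q2$, so $x^{x-1/2}=\big(\tfrac{q+1}{2}\big)^{q/2}$ and $e^{-x}=e^{-(q+1)/2}$, which line up exactly with the powers on both sides of the target inequality.

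For the lower bound on $\Gamma\big(\tfrac{q+1}{2}\big)$ I would use the left inequality in Stirling and divide through by $\sqrt\pi\big(\tfrac q2\big)^{q/2}e^{-q/2}$; the inequality to be verified then collapses to $\sqrt2\,e^{-1/2}\big(1+\tfrac1q\big)^{q/2}\ge 1$, i.e. $\big(1+\tfrac1q\big)^{q}\ge e/2$. This I would obtain from the facts that $q\mapsto(1+1/q)^q$ is nondecreasing on $(0,\infty)$ with value $2$ at $q=1$, so the left side is $\ge 2> e/2$ for all $q\ge1$. For the upper bound I would instead use the right inequality in Stirling (with error exponent $\tfrac1{12x}=\tfrac1{6(q+1)}$) and divide by $\sqrt\pi\big(\tfrac{q+1}{2}\big)^{q/2}e^{-q/2}$, reducing to $\sqrt2\,e^{-1/2}e^{1/(6(q+1))}\le 1$, i.e. $\tfrac1{6(q+1)}\le\tfrac12\log\tfrac e2=\tfrac{1-\log2}{2}$. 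For $q\ge1$ the left side is at most $\tfrac1{12}$, and $\tfrac1{12}\le\tfrac{1-\log2}{2}$ is equivalent to $2\le e^{5/6}$, which holds.

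I do not expect a genuine obstacle: the whole content is choosing the correct two–sided Stirling bound and the rest is bookkeeping of exponents when putting $x=\tfrac{q+1}{2}$ together with the two one–line scalar inequalities above. The only point requiring a little care is to invoke a version of Stirling valid for all real $x>0$ (the error term in Binet's integral representation of $\log\Gamma$ lies in $\big(0,\tfrac1{12x}\big)$ for every $x>0$), rather than a version stated only for integers, so that the argument $\tfrac{q+1}{2}$, which is as small as $1$ in the regime of interest, is covered.
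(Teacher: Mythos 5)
Your proof is correct and uses the same key ingredient as the paper's, namely the two–sided Stirling bound $\sqrt{2\pi}\,x^{x-1/2}e^{-x}\leq\Gamma(x)\leq\sqrt{2\pi}\,x^{x-1/2}e^{-x+1/(12x)}$ applied at $x=\tfrac{q+1}{2}$. The difference is one of presentation, but it is a genuine improvement. The paper manipulates $c_2(p)$ directly, leaving residual factors of the form $2^{1/(2q)}e^{-1/(2q)+1/(6q(q+1))}$ whose size it must argue about with a somewhat awkward inequality step (it invokes $e^x\geq 1+x$ but actually applies the weaker $e^x\geq 1-x$, $x>0$; the conclusion survives, but the justification is off). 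By raising the claim to the $q$-th power first, you get the exponents to align exactly (because $x-\tfrac12=\tfrac q2$), and the two sides reduce to the clean scalar inequalities $(1+1/q)^q\geq e/2$ and $\tfrac{1}{6(q+1)}\leq\tfrac{1-\log 2}{2}$, each of which is a one-liner. Your version also covers the full range $q\geq 1$ with no extra effort, whereas the paper's upper-bound argument explicitly restricts to $q\geq 2$. The only thing to retain is what you already flagged: use a form of the Stirling bound valid for all real $x>0$, since $x=\tfrac{q+1}{2}$ can be close to $1$; the paper cites a reference that provides exactly this.
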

\begin{proof}
For convenience, we set $q = \du p$, $f_1(q)=c_1(p)$, $f_2(q)=c_2(p)$.
Next, we recall a useful inequality \citep{OlverLozierBoisvertClark2010} bounding the gamma function:
\begin{align}
1<(2\pi)^{-\frac 12}x^{\frac 12-x}e^x\Gamma(x)<e^{\frac 1 {12x}}.\label{eq:gamma_bound}
\end{align}

We start with the upper bound.
If we apply the right-hand side inequality of (\ref{eq:gamma_bound}) to $\Gamma(\frac {q+1}2)$ we get the following bound on $f_2(q)$:
\begin{equation}
\label{eq:f2_bound}
f_2(q)\leq 2^\frac 1 {2q} e^{-\frac 12} \sqrt{q+1} e^{-\frac 1{2q}+\frac 1 {6(q+1)q}}
\end{equation}
It is easy to verify that, 
\begin{equation}
\label{eq:upper_bound_factor}
2^\frac 1{2q}e^{-\frac 1 {2q} +\frac 1 {6q(q+1)}}=e^{\frac 1q(\frac {\ln 2-1}2+\frac 1 {6q(q+1)})}.
\end{equation}
Furthermore, the expression $(\frac {\ln 2-1}2+\frac 1 {6q(q+1)})$ decreases with increasing $q$. At $q=2$, it is negative, which implies that (\ref{eq:upper_bound_factor}) is less than 1 for $q\geq 2$. Hence
\begin{equation*}
f_2(q) \leq e^{-\frac 12} \sqrt{q+1} 
\end{equation*}

Next, we prove the lower bound.
Applying the lower bound of (\ref{eq:gamma_bound}) to $\Gamma(\frac{q+1}2)$ results in
\begin{equation*}
f_2(q)\geq e^{-\frac 12}\sqrt q \left(e^{-\frac 1{2q}(\log 2 -1)}\sqrt{1+\frac 1 q}\right).
\end{equation*} 
We will establish that $\Big(e^{-\frac 1{2q}(\log 2 -1)}\sqrt{1+\frac 1 q}\Big)\geq 1$, which will complete the proof of the lower bound. We prove this statement by showing that
\[
\left(e^{-\frac 1{2q}(\log 2 -1)}\sqrt{1+\frac 1 q}\right)^2
= e^{-\frac 1 q(\log 2 -1)} \left( 1+\frac 1q \right)\geq 1.
\]
By applying some elementary inequalities
\begin{align*}
e^{-\frac 1 q(\log 2 -1)}\left(1+\frac 1q\right)&\geq \left( \frac 1q(\log 2-1)+1\right) \left( 1 + \frac 1q \right) & (\text{using }e^x\geq 1+x)\\
&=1+\frac 1q \left( \log (2) -\frac{1-\log (2)}q\right)\\
&\geq 1
\end{align*}
The last inequality follows since $\Big( \log (2) -\frac{1-\log (2)}q\Big)$ increases with $q$, and is positive at $q = 2$. 
\end{proof}
Lastly, we establish our main claim that $c_2(p) \leq c_1(p)$.
\begin{lemma}
\label{lemma:constant-comparison}
Let $c_1(p)=\sqrt{\du p-1}$ and $c_2(p)=\sqrt 2 \big(\frac{\Gamma(\frac{\du p+1} 2 )}{\sqrt \pi}\big)^\frac 1 {\du p}$.
Then
\[
c_2(p) \leq c_1(p),
\]
for all $1 \leq p \leq 2$.
\end{lemma}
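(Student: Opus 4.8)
The plan is to turn the inequality into a sharp one‑variable estimate on the Gamma function. Set $q=\du p$; since $1\le p\le 2$ we have $q\ge 2$, and the case $p=1$ is degenerate ($q=\infty$ and both $c_1(p),c_2(p)$ are infinite, e.g.\ by Lemma~\ref{lemma:f2_bound}), so assume $1<p\le 2$, i.e.\ $2\le q<\infty$. As $c_1(p)$ and $c_2(p)$ are positive, raising $c_2(p)\le c_1(p)$ to the power $q$ shows it is equivalent to $2^{q/2}\Gamma\!\big(\tfrac{q+1}{2}\big)/\sqrt\pi\le (q-1)^{q/2}$, i.e.\ to $\Gamma\!\big(\tfrac{q+1}{2}\big)\le\sqrt\pi\,\big(\tfrac{q-1}{2}\big)^{q/2}$. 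Substituting $t=\tfrac{q-1}{2}\in[\tfrac12,\infty)$ this becomes
\[
\Gamma(t+1)\le\sqrt\pi\; t^{\,t+\frac12},\qquad t\ge\tfrac12.
\]
I would first record that equality holds at $t=\tfrac12$ (that is, $p=2$), consistently with $c_1(2)=c_2(2)=1$; this tells us the bound is tight at an endpoint, so the argument must be careful there rather than relying on a crude constant‑factor estimate.

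Next I would take logarithms and define $h(t)=\ln\Gamma(t+1)-\tfrac12\ln\pi-\big(t+\tfrac12\big)\ln t$, so that the goal is $h(t)\le 0$ on $[\tfrac12,\infty)$. A direct evaluation gives $h(\tfrac12)=0$, so it suffices to show $h$ is non‑increasing on $[\tfrac12,\infty)$. Differentiating, $h'(t)=\psi(t+1)-\ln t-1-\tfrac{1}{2t}$ with $\psi$ the digamma function. Using the standard bound $\psi(x)\le\ln x-\tfrac{1}{2x}$ for $x>0$ (a consequence of Binet's formula; see \citep{OlverLozierBoisvertClark2010}) together with $\ln(1+x)\le x$, I get
\[
h'(t)\le\ln\!\Big(1+\tfrac1t\Big)-1-\tfrac{1}{2(t+1)}-\tfrac{1}{2t}\le\tfrac{1}{t}-1-\tfrac{1}{2(t+1)}-\tfrac{1}{2t}=\tfrac{1}{2t}-1-\tfrac{1}{2(t+1)}.
\]
For $t\ge\tfrac12$ we have $\tfrac{1}{2t}\le 1$, hence $h'(t)\le-\tfrac{1}{2(t+1)}<0$; thus $h$ is strictly decreasing past $t=\tfrac12$, and since $h(\tfrac12)=0$ we conclude $h\le 0$ on $[\tfrac12,\infty)$, which is exactly $c_2(p)\le c_1(p)$.

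An alternative, more in line with Lemma~\ref{lemma:f2_bound}, is to split on the size of $\du p$: for $\du p\ge\tfrac{e+1}{e-1}$ the bound $c_2(p)\le e^{-1/2}\sqrt{\du p+1}$ already yields $c_2(p)\le\sqrt{\du p-1}=c_1(p)$, since $e^{-1/2}\sqrt{\du p+1}\le\sqrt{\du p-1}$ is equivalent to $\du p\ge\tfrac{e+1}{e-1}$; this leaves only the short interval $\du p\in\big[2,\tfrac{e+1}{e-1}\big]\approx[2,2.164]$, which can be dispatched by bounding $\tfrac{d}{d\du p}\big(\log c_2-\log c_1\big)$ below $0$ there via elementary estimates on $\psi$ and $\ln\Gamma$ over the compact range $\big[\tfrac32,1.582\big]$. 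I expect the genuine obstacle to be precisely the behaviour near $p=2$: because the inequality is an equality there, feeding $(1+1/t)^t\le e$ into the Gamma bound \eqref{eq:gamma_bound} is too lossy on a neighbourhood of $p=2$, so one needs either the monotonicity argument above (which uses the digamma bound, not just \eqref{eq:gamma_bound}) or the explicit check on the small interval.
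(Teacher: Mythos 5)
Your first argument is correct and complete. Both your proof and the paper's anchor at the equality $c_1(2)=c_2(2)$ (i.e.\ $q=p^*=2$, $t=\tfrac12$) and then establish monotonicity on $q\geq 2$ using the digamma estimate $\psi(x)\leq \ln x - \tfrac{1}{2x}$, so the strategy is the same; what differs is the normalization before differentiating. You raise $c_2\leq c_1$ to the $q$th power and substitute $t=\tfrac{q-1}{2}$, turning the claim into the clean one-variable bound $\Gamma(t+1)\leq\sqrt\pi\,t^{t+1/2}$ for $t\geq\tfrac12$, and then the single derivative computation $h'(t)\leq\tfrac{1}{2t}-1-\tfrac{1}{2(t+1)}\leq 0$ needs only the digamma bound and $\ln(1+x)\leq x$. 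The paper instead compares $f_1'(q)$ and $f_2'(q)$ directly, which forces it to additionally invoke the Stirling-type Gamma bound~\eqref{eq:gamma_bound} and the upper bound from Lemma~\ref{lemma:f2_bound} on $c_2$ to close the chain $f_2'\leq f_1'$; your route removes both dependencies, so it is self-contained given only~\eqref{eq:digamma_bound}. The alternative you sketch in the second paragraph (dispatch $q\geq\tfrac{e+1}{e-1}$ via Lemma~\ref{lemma:f2_bound} and handle the short residual interval separately) is a reasonable fallback but is not carried out; it is unnecessary given that your primary argument is already complete. One small looseness: the $p=1$ case is not so much that both sides equal $+\infty$ as that $q=p^*=\infty$ makes $c_2(p)$ formally undefined; it is cleaner to simply say the lemma is read as a statement on $1<p\leq 2$ (equivalently $q\in[2,\infty)$), which is all that is used elsewhere.
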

\begin{proof}
 For convenience, set $q=\du p$, $f_1(q)=c_1(p)$, and $f_2(q)=c_2(p)$. First note that $f_1(2)=f_2(2)$. Next, we claim $\frac d {dq} f_1(q)\geq \frac d {dq} f_2(q)$ for $q\geq 2$, and this implies that $c_2(p)\leq c_1(p)$ for $1\leq p\leq 2$. 

The rest of this proof is devoted to showing that $\frac d {dq} f_1(q)\geq \frac d {dq} f_2(q)$. Upon differentiating we get that $f_1'(q)=\frac 1 {2\sqrt{q-1}}$. Next, we will differentiate $f_2$. To start, we recall that the digamma function $\psi$ is defined as the logarithmic derivative of the gamma function, $\psi(x)=\frac d {dx} (\log\Gamma(x))=\frac{\Gamma'(x)}{\Gamma(x)}$.

Now we state a useful inequality~(see Equation~$2.2$ in \citet{Alzer1997})
 bounding the digamma function, $\psi(x)$. 
\begin{align}
    \psi(x)\leq \log(x)-\frac 1{2x}
\label{eq:digamma_bound}
\end{align}

Now we differentiate $\ln f_2$:
\begin{align*}
\frac d {dq} (\ln f_2(q)) &=\frac{ \frac q2 \psi(\frac {q+1} 2)-(\ln(\Gamma(\frac {q+1} 2))-\ln(\sqrt \pi))}{q^2}\\
&\leq\frac{ \frac q2(\log(\frac {q+1}2-\frac 1 {q+1})-(\ln(\Gamma(\frac{q+1}2))-\ln\sqrt \pi)}{q^2}&\,\text{(by (\ref{eq:digamma_bound}))}\\
&\leq \frac {\frac q2(\log \frac {q+1}2 -\frac 1 {q+1})-(\frac 12 \ln2 +\frac q2\log \frac {q+1}2-\frac {q+1}2)}{q^2}&\text{(by the left-hand equality in (\ref{eq:gamma_bound}))}\\
&= \frac 1 {2q}+\frac 1 {q^2}\Big(\frac 1 {2(q+1)}-\frac 12\log2\Big)\\
&\leq \frac 1 {2q}.
\end{align*}
The last line follows since we only consider $q\geq 2$ and $\frac 1 {2(q+1)}-\frac 12\ln 2\leq 0$ in this range.
Finally, the fact that $\frac d {dq}(\ln f_2(q))=f_2'(q)/f_2(q)$ implies 
\begin{align*}
f_2'(q) &= f_2(q) \frac d {dq} (\ln f_2(q))\\
&\leq \frac 1 {2q} f_2(q)&\,\text{(by }\frac d {dq}(\ln f_2(q))\leq \frac 1 {2q} \text{)}\\
&\leq \frac{e^{-\frac 12} \sqrt{q+1}}{2q}&\,\text{(by applying the upper bound in Lemma~\ref{lemma:f2_bound})}\\
&=\frac 1 {2\sqrt{q-1}}\frac{e^{-\frac 12}\sqrt{(q+1)(q-1)}}q\\
&\leq e^{-\frac 12} \frac 1{2\sqrt{q-1}}&\,\text{(using }q^2-1\leq q^2)\\
&\leq \frac 1 {2\sqrt{q-1}}=f_1'(q)&\,(\text{using }e^{-\frac 12}<1).
\end{align*} 
\end{proof}
\section{Proof of Theorem~\ref{th:adversarial_linear}}
\label{sec:adv_linear} 
\label{app:big}
In this section, we give a detailed proof of Theorem~\ref{th:adversarial_linear}.
We start with the following lemma that characterizes the nature of adversarial perturbations.

\begin{lemma}
\label{lemma:perturb_increasing}
Let $g$ be a nondecreasing function, $\bx,\bw\in \mathbb R^d$, and $y\in \{\pm 1\}$. Then 
\[\inf_{\|\bx-\bx'\|_r\leq \e} yg( \bw\cdot \bx)= yg(\bw\cdot \bx-\e y \|\bw\|_\du r)\] 
\end{lemma}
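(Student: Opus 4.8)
The plan is to prove the identity $\inf_{\|\bx - \bx'\|_r \leq \e} y\,g(\bw\cdot \bx') = y\,g(\bw \cdot \bx - \e y \|\bw\|_{\du r})$ by reducing the inner optimization to optimizing the scalar $\bw \cdot \bx'$ over the $r$-ball. First I would split into the two cases $y = +1$ and $y = -1$, since the direction in which we push the perturbation flips with the sign of $y$. When $y = +1$, because $g$ is nondecreasing, $\inf_{\bx'} g(\bw\cdot \bx')$ is attained by \emph{minimizing} the linear functional $\bw\cdot \bx'$ over $\{\bx' : \|\bx - \bx'\|_r \leq \e\}$; when $y = -1$, we instead maximize $\bw\cdot \bx'$. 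In either case the problem is of the form $\bw\cdot \bx \pm \sup_{\|\bv\|_r \leq \e}(\pm \bw\cdot \bv)$ after substituting $\bx' = \bx + \bv$.

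The key computation is then the textbook fact that $\sup_{\|\bv\|_r \leq \e} \bw \cdot \bv = \e \|\bw\|_{\du r}$, which is just the definition of the dual norm (here $\du r$ is the H\"older conjugate of $r$), and symmetrically $\inf_{\|\bv\|_r \leq \e} \bw\cdot \bv = -\e\|\bw\|_{\du r}$. Plugging this in: for $y = +1$ we get $\inf_{\|\bx - \bx'\|_r \leq \e} g(\bw\cdot \bx') = g\bigl(\bw\cdot \bx - \e\|\bw\|_{\du r}\bigr) = y\,g(\bw\cdot \bx - \e y \|\bw\|_{\du r})$; for $y = -1$ we get $\inf_{\bx'} -g(\bw\cdot \bx') = -\sup_{\bx'} g(\bw\cdot \bx') = -g\bigl(\bw\cdot \bx + \e\|\bw\|_{\du r}\bigr)$, which again equals $y\,g(\bw\cdot \bx - \e y\|\bw\|_{\du r})$ since $y = -1$. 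One subtlety worth a sentence: to pass the $\inf$ (resp.\ $\sup$) of $g$ through to the extremal value of its argument, I use that $g$ is nondecreasing together with the fact that the set $\{\bw\cdot \bx' : \|\bx - \bx'\|_r \leq \e\}$ is an interval $[\bw\cdot\bx - \e\|\bw\|_{\du r},\ \bw\cdot\bx + \e\|\bw\|_{\du r}]$ (the image of a connected set under a linear map), so the infimum of $g$ over this interval is $g$ evaluated at the left endpoint even if $g$ is not continuous.

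I don't expect any genuine obstacle here — the only thing to be careful about is the sign bookkeeping between the two cases and making sure the final formula $\bw\cdot\bx - \e y\|\bw\|_{\du r}$ comes out with the correct sign in each branch, which the case analysis above handles uniformly. A compact way to present it avoiding the split is to note $y\,g(\bw\cdot\bx') $ ranges, as $\bx'$ ranges over the ball, over $\{y\,g(t) : t \in [\bw\cdot\bx - \e\|\bw\|_{\du r}, \bw\cdot\bx+\e\|\bw\|_{\du r}]\}$, and $t \mapsto y\,g(t)$ is monotone (nondecreasing if $y=1$, nonincreasing if $y=-1$), so its infimum is attained at the endpoint $\bw\cdot\bx - \e y\|\bw\|_{\du r}$ — but the two-case version is probably clearer to write out.
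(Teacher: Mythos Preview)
Your proposal is correct and follows essentially the same approach as the paper: a case split on $y\in\{\pm 1\}$, reducing via monotonicity of $g$ (or $-g$) to optimizing the linear functional $\bw\cdot\bx'$ over the $r$-ball, and then invoking the dual-norm identity $\sup_{\|\bs\|_r\le 1}\bw\cdot\bs=\|\bw\|_{\du r}$. Your added remark that the range of $\bw\cdot\bx'$ is an interval, so monotonicity alone (without continuity of $g$) suffices to push the infimum through, is a nice justification that the paper leaves implicit.
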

\begin{proof}
First note that
\[\inf_{\|\bx-\bx'\|_r\leq \e} yg( \bw\cdot \bx)=\inf_{\|\bs\|_r\leq 1} yg( \bw\cdot \bx+\e \bw \cdot \bs)\]
If $y=1$,
\begin{align}
\inf_{\|\bs\|_r\leq 1} g( \bw\cdot \bx+\e \bw \cdot \bs) &= g( \bw\cdot \bx+\inf_{\|\bs\|_r\leq 1}\e \bw \cdot \bs)&(g\text{ is nondecreasing})\nonumber\\ 
&=g(\bw\cdot \bx-\e \|\bw\|_\du r )&\text{(definition of dual norm)}\nonumber\\
&=yg( \bw\cdot \bx-\e y \|\bw\|_\du r )&(y=1)\nonumber
\end{align}
Similarly, if $y=-1$,
\begin{align}
\inf_{\|\bs\|_r\leq 1} -g( \bw\cdot \bx+\e \bw \cdot \bs) &= -g( \bw\cdot \bx+\sup_{\|\bs\|_r\leq 1}\e \bw \cdot \bs)&(-g\text{ is non-increasing})\nonumber\\ 
&=-g(\bw\cdot \bx+\e \|\bw\|_\du r )&\text{(definition of dual norm)}\nonumber\\
&=yg( \bw\cdot \bx-\e y \|\bw\|_\du r )&(y=-1)\nonumber
\end{align}
\end{proof}
Before proceeding to the proof of Theorem~\ref{th:adversarial_linear}, we formally establish Lemma~\ref{lemma:norm_ratio} and Lemma~\ref{lemma:big} from Section~\ref{sec:lin_rc}.
\begin{proof}[Proof of Lemma~\ref{lemma:norm_ratio}]
We prove that 
if $p\geq \du r$, then 
\[
\sup_{\| \bw\|_p\leq 1}\| \bw\|_{\du r}=d^{1-\frac 1r -\frac 1p}
\]
and otherwise,
\[
\sup_{\| \bw\|_p\leq 1}\| \bw\|_{\du r}= 1.
\]
If $p\geq \du r$, by H{\"o}lder's generalized inequality with $\frac 1 {r^*} = \frac 1p + \frac 1s$,
\[
\sup_{\| \bw\|_p\leq 1} \| \bw\|_\du r \leq \sup_{\| \bw\| _p \leq 1} \| \one\|_s \| \bw\|_p = \| \one\|_s = d^\frac 1s = d^{\frac 1 {r^*} - \frac 1 p} = d^{1 - \frac 1r - \frac 1p}.
\]
Equality holds at the vector $\frac 1 {d^\frac 1p}\one$, and this implies that the inequality in the line above is an equality. Now for $p\leq \du r$, $\| \bw\|_p\geq \| \bw\|_\du r$, implying that $\sup_{\| \bw\|_p\leq 1}\| \bw\|_{\du r}\leq  1$. Here, equality is achieved at a unit vector $\be_1$. 
\end{proof}
\begin{proof}[Proof of Lemma~\ref{lemma:big}]
Recall that $v_\bsigma = \frac 1 m \sum_{i=1}^m \sigma_i$. Then, 
in view of the symmetry $v_{-\bsigma} = -v_{\bsigma}$, we can write
\[
\Ex[\bsigma]{ \sup_{\| \bw\|_p\leq W} \e v_\bsigma \| \bw\|_{\du r}}
= \e W\E_\bsigma\left[ {\sup_{\| \bw\|_p\leq 1} v_\bsigma \| \bw\|_\du r}\right]
= \frac {\e W}{2} \Ex[\bsigma]{ \sup_{\| \bw\|_p\leq 1} |v_\bsigma| \| \bw\| _\du r}.
\]
By Lemma~\ref{lemma:norm_ratio}, we have
\begin{equation}
\label{eq:prev}
\frac 12 \Ex[\bsigma] {\sup_{\| \bw\|_p\leq 1} |v_\bsigma| \| \bw\|_{\du r}} =\frac 12 \max(d^{1-\frac 1p-\frac 1r},1) \Ex[\bsigma]{ |v_\bsigma|}.
\end{equation}
Now, by Jensen's inequality and $\E[\sigma_i \sigma_j] = \E[\sigma_i] \E[\sigma_j] = 0$ for $i \neq j$, we have
\[
\E_{\bsigma}[|\bv_\bsigma|]
= \Ex[\bsigma]{\left| \sum_{i=1}^m \sigma_i \right|} 
\leq \sqrt{\Ex[\bsigma]{\Big(\sum_{i=1}^m \sigma_i \Big)^2 }}
= \sqrt{\Ex[\bsigma]{ m + \sum_{i \neq j} \sigma_i\sigma_j}}
=\sqrt m.
\] 
Furthermore, by Khintchine's inequality \citep{Haagerup1981}, the following lower bound holds:
\[
\Ex[\bsigma]{\left| \sum_{i=1}^m \sigma_i\right|} \geq \sqrt{\frac{m}{2}}.
\]
Substituting these upper and the lower bounds into \eqref{eq:prev} completes the proof.
\end{proof}
We now proceed to prove Theorem~\ref{th:adversarial_linear}. Recall
from Section~\ref{sec:adv_linear_proof} that we seek to analyze
\begin{align}
\R_\cS(\wt{ \mathcal F}_p)
& = \E_{\bsigma}{\left[ \sup_{\| \bw\|_p\leq W} \frac 1m \sum_{i = 1}^m
  \sigma_i\inf_{\|{\bx_i-\bx_i'}\|_r\leq \e} y_i\la \bw,\bx_i'\ra
  \right]} \nonumber\\
& = \E_\bsigma { \left[ \sup_{\| \bw\|_p\leq W} \frac 1m \sum_{i=1}^m
  \sigma_i( y_i\la \bw,\bx_i\ra-\e \|\bw\|_{\du r})\right]} & \text{[by Lemma~\ref{lemma:perturb_increasing}]} \nonumber\\
& = \E_\bsigma{ \left[ \sup_{\| \bw\|_p\leq W}  \la \bw ,\bu_\bsigma\ra -\e v_\bsigma \| \bw\| _{\du r}\right]},
\label{eq:1-1}
\end{align} 
where we used the shorthand
$\bu_\bsigma = \frac 1m \sum_{i=1}^m y_i\sigma_i \bx_i$ and
$v_\bsigma = \frac 1m \sum_{i=1}^m \sigma_i$. The next two theorems
give upper and lower bounds on $\R_\cS(\wt{ \mathcal F}_p)$,
thereby proving Theorem~\ref{th:adversarial_linear}.

\begin{theorem}
\label{th:upper}
Let $\cF_p=\{\bx\mapsto \la \bx,\bw\ra\colon \|\bw\|_p\leq W\}$ and
$\wt \cF_p=\{\inf_{\|\bx'-\bx\|_r\leq \eps} f(\bx')\colon f\in
\cF_p\}$. Then, the following upper bound holds:
\[
\R_\cS (\wt \cF_p)\leq \R_\cS(\mathcal {F}_p)
+ \eps \frac{W}{2\sqrt m} d^{1-\frac 1r-\frac 1p}
\]

\end{theorem}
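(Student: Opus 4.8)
The plan is to follow the sketch of Section~\ref{sec:adv_linear_proof}: reduce $\R_\cS(\wt\cF_p)$ to the expression already obtained in \eqref{eq:1-1}, split the inner supremum by sub-additivity, and then dispatch the two resulting terms with the auxiliary lemmas that are already in place. First I would invoke Lemma~\ref{lemma:perturb_increasing} with $g$ the identity map to get
\[
\R_\cS(\wt\cF_p)=\Ex[\bsigma]{\sup_{\|\bw\|_p\leq W}\bigl(\la\bw,\bu_\bsigma\ra-\eps v_\bsigma\|\bw\|_{\du r}\bigr)},
\]
where $\bu_\bsigma=\tfrac1m\sum_{i=1}^m y_i\sigma_i\bx_i$ and $v_\bsigma=\tfrac1m\sum_{i=1}^m\sigma_i$. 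The key inequality is sub-additivity of the supremum: for each fixed $\bsigma$,
\[
\sup_{\|\bw\|_p\leq W}\bigl(\la\bw,\bu_\bsigma\ra-\eps v_\bsigma\|\bw\|_{\du r}\bigr)\leq\sup_{\|\bw\|_p\leq W}\la\bw,\bu_\bsigma\ra+\sup_{\|\bw\|_p\leq W}\bigl(-\eps v_\bsigma\|\bw\|_{\du r}\bigr),
\]
after which I would take the expectation over $\bsigma$ and treat the two terms separately.

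For the first term, since each $y_i\in\{\pm1\}$ is fixed and each $\sigma_i$ is symmetric, the product $y_i\sigma_i$ is distributed as $\sigma_i$, so $\Ex[\bsigma]{\sup_{\|\bw\|_p\leq W}\la\bw,\bu_\bsigma\ra}=\R_\cS(\cF_p)$ by definition of the empirical Rademacher complexity. For the second term, I would observe that $\sup_{\|\bw\|_p\leq W}\bigl(-\eps v_\bsigma\|\bw\|_{\du r}\bigr)$ equals $0$ when $v_\bsigma\geq 0$ (attained at $\bw=0$) and equals $\eps\,|v_\bsigma|\,\sup_{\|\bw\|_p\leq W}\|\bw\|_{\du r}$ otherwise; using the distributional symmetry $\bsigma\mapsto-\bsigma$ (which negates $v_\bsigma$), its expectation is $\tfrac12\,\eps\,\Ex[\bsigma]{|v_\bsigma|}\cdot\sup_{\|\bw\|_p\leq W}\|\bw\|_{\du r}$. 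Here Lemma~\ref{lemma:norm_ratio} gives $\sup_{\|\bw\|_p\leq W}\|\bw\|_{\du r}=W\max(d^{1-\frac1r-\frac1p},1)$, and a one-line Jensen argument bounds $\Ex[\bsigma]{|v_\bsigma|}=\tfrac1m\,\Ex[\bsigma]{\bigl|\sum_{i=1}^m\sigma_i\bigr|}\leq\tfrac1m\sqrt{m}=\tfrac1{\sqrt m}$ — precisely the content of Lemma~\ref{lemma:big}, which I would simply quote. Combining, $\R_\cS(\wt\cF_p)\leq\R_\cS(\cF_p)+\tfrac{\eps W}{2\sqrt m}\max(d^{1-\frac1r-\frac1p},1)$, which is the asserted bound (in the form appearing in Theorem~\ref{th:adversarial_linear}; the $\max$ with $1$ is active exactly when $\tfrac1r+\tfrac1p>1$).

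I do not expect a serious obstacle, because the two genuinely nontrivial ingredients — the dual-norm ratio $\sup_{\|\bw\|_p\leq1}\|\bw\|_{\du r}$ and the concentration estimate for $v_\bsigma$ — are already isolated in Lemmas~\ref{lemma:norm_ratio} and~\ref{lemma:big}. The only point requiring care is the bookkeeping around the sign of $v_\bsigma$ in the sub-additivity step: one must note that $-\eps v_\bsigma\|\bw\|_{\du r}$ contributes nothing on the event $\{v_\bsigma\geq 0\}$, so that taking expectations over $\bsigma$ produces the factor $\tfrac12$ rather than $1$, and one must retain the $\max(\cdot,1)$ throughout so the bound stays valid for all $p,r\geq 1$.
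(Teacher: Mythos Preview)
Your proposal is correct and follows essentially the same route as the paper: you invoke \eqref{eq:1-1}, split via sub-additivity of the supremum, identify the first term as $\R_\cS(\cF_p)$, and bound the second term by Lemma~\ref{lemma:big}. The only cosmetic difference is that where the paper writes $\Ex[\bsigma]{\sup_{\|\bw\|_p\leq W}(-\eps v_\bsigma\|\bw\|_{\du r})}=\Ex[\bsigma]{\sup_{\|\bw\|_p\leq W}\eps v_\bsigma\|\bw\|_{\du r}}$ by the substitution $\bsigma\mapsto-\bsigma$ and then quotes Lemma~\ref{lemma:big} as a black box, you unpack the sign-of-$v_\bsigma$ case analysis and the Jensen step inline---but these are exactly the ingredients of the proof of Lemma~\ref{lemma:big} itself, so the arguments coincide.
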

\begin{proof}
Using (\ref{eq:1-1}) and the sub-additivity of supremum we can write:
\begin{align*}
\R_\cS(\wt{ \mathcal F}_p) &= \Ex[\sigma ]{\sup_{\| \bw\|_p\leq W} \la \bw,\bu_\sigma\ra-\eps v_\bsigma \| \bw\|_{r^*} }\\
&\leq \Ex[\bsigma]{ \sup_{\| \bw\|_p\leq W} \la \bw,\bu_\bsigma\ra}+\Ex[\bsigma]{ \sup_{\| \bw\|_p\leq W}-\eps v_\bsigma\| \bw\|_\du r}\\
&=\R_\cS(\mathcal F_p)+\Ex[\bsigma]{\sup_{\| \bw\|_p\leq W} \eps v_\bsigma \| \bw\|_{\du r}}\\
&= \R_\cS(\mathcal {F}_p)+\frac 12 \eps \frac{W}{\sqrt m} d^{1-\frac 1r-\frac 1p} & \text{[by Lemma~\ref{lemma:big}]},
\end{align*}
which completes the proof.
\end{proof}

\begin{theorem} 
Let $\cF_p=\{\bx\mapsto \la \bx,\bw\ra\colon \|\bw\|_p\leq W\}$ and
$\wt \cF_p=\{\inf_{\|\bx'-\bx\|_r\leq \eps} f(\bx')\colon f\in
\cF_p\}$. Then, the following lower bound holds:
\[
\R_\cS(\wt{ \mathcal F}_p)\geq  \max\left(\R_\cS(\mathcal F_p),W
  \frac{\eps d^{1-\frac 1r-\frac 1p} }{ 2\sqrt{2 m}}\right)
\]
\end{theorem}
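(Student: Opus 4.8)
The plan is to build on the representation \eqref{eq:1-1},
\[
\R_\cS(\wt{\mathcal F}_p) = \E_{\bsigma}\Big[\sup_{\|\bw\|_p\le W}\langle\bw,\bu_\bsigma\rangle - \eps v_\bsigma\|\bw\|_{\du r}\Big],
\]
with $\bu_\bsigma=\frac1m\sum_{i=1}^m y_i\sigma_i\bx_i$ and $v_\bsigma=\frac1m\sum_{i=1}^m\sigma_i$, and to derive from it two further representations of the \emph{same} quantity, namely \eqref{eq:2_main} and \eqref{eq:3_main}; the two asserted lower bounds then follow by averaging suitable pairs among the three representations and using sub-additivity of the supremum. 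Establishing \eqref{eq:2_main} is immediate: the map $\bsigma\mapsto-\bsigma$ preserves the law of $\bsigma$ and sends $(\bu_\bsigma,v_\bsigma)$ to $(-\bu_\bsigma,-v_\bsigma)$, so the argument of the supremum is negated, giving $\R_\cS(\wt{\mathcal F}_p)=\E_{\bsigma}\big[\sup_{\|\bw\|_p\le W}(-\langle\bw,\bu_\bsigma\rangle+\eps v_\bsigma\|\bw\|_{\du r})\big]$.

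The delicate step, and the one I expect to be the main obstacle, is \eqref{eq:3_main}: there one must flip the sign of the $\eps v_\bsigma$ term \emph{without} flipping $\langle\bw,\bu_\bsigma\rangle$, even though both terms are driven by the same $\bsigma$. The device is a double use of the symmetry $\bw\mapsto-\bw$ of the $\ell_p$-ball together with the evenness of $\|\bw\|_{\du r}$: since replacing $\bw$ by $-\bw$ leaves $\|\bw\|_{\du r}$ unchanged while turning $\langle\bw,\bu_\bsigma\rangle$ into $-\langle\bw,\bu_\bsigma\rangle$, the supremum over the (symmetric) ball is unchanged if $\langle\bw,\bu_\bsigma\rangle$ is replaced by $|\langle\bw,\bu_\bsigma\rangle|$, i.e.\ $\R_\cS(\wt{\mathcal F}_p)=\E_{\bsigma}\big[\sup_{\|\bw\|_p\le W}(|\langle\bw,\bu_\bsigma\rangle|-\eps v_\bsigma\|\bw\|_{\du r})\big]$. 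I would then apply $\bsigma\mapsto-\bsigma$, which fixes $|\langle\bw,\bu_\bsigma\rangle|$ and negates $v_\bsigma$, and finally undo the absolute value by the same $\bw\mapsto-\bw$ argument, arriving at \eqref{eq:3_main}, $\R_\cS(\wt{\mathcal F}_p)=\E_{\bsigma}\big[\sup_{\|\bw\|_p\le W}(\langle\bw,\bu_\bsigma\rangle+\eps v_\bsigma\|\bw\|_{\du r})\big]$.

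With the three representations in hand, the lower bounds fall out. For the first, averaging \eqref{eq:1-1} and \eqref{eq:3_main}: for each fixed $\bw$ the arguments of the two suprema average to $\langle\bw,\bu_\bsigma\rangle$, since the $\eps v_\bsigma\|\bw\|_{\du r}$ contributions enter with opposite signs and cancel; hence by sub-additivity of the supremum the average of the two suprema is at least $\sup_{\|\bw\|_p\le W}\langle\bw,\bu_\bsigma\rangle=W\|\bu_\bsigma\|_{\du p}$. Taking expectations and using that $(y_i\sigma_i)_{i}$ has the same joint law as $(\sigma_i)_{i}$ yields $\R_\cS(\wt{\mathcal F}_p)\ge W\,\E_{\bsigma}\big[\big\|\tfrac1m\sum_{i=1}^m\sigma_i\bx_i\big\|_{\du p}\big]=\R_\cS(\cF_p)$. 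For the second, averaging \eqref{eq:2_main} and \eqref{eq:3_main} instead makes the $\langle\bw,\bu_\bsigma\rangle$ terms cancel for each fixed $\bw$, and the same reasoning leaves $\R_\cS(\wt{\mathcal F}_p)\ge\E_{\bsigma}\big[\sup_{\|\bw\|_p\le W}\eps v_\bsigma\|\bw\|_{\du r}\big]$, which by Lemma~\ref{lemma:big} is at least $\frac{W\eps\max(d^{1-1/r-1/p},1)}{2\sqrt{2m}}\ge\frac{W\eps\,d^{1-1/r-1/p}}{2\sqrt{2m}}$. Taking the maximum of the two bounds completes the argument. The only genuinely nontrivial ingredient is the three-step symmetrization producing \eqref{eq:3_main}; everything after it is bookkeeping plus the already-proved Lemmas \ref{lemma:norm_ratio} and \ref{lemma:big}.
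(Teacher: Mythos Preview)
Your proof is correct and follows essentially the same route as the paper: derive the three representations \eqref{eq:1-1}, \eqref{eq:2_main}, \eqref{eq:3_main}, then average \eqref{eq:1-1} with \eqref{eq:3_main} (respectively \eqref{eq:2_main} with \eqref{eq:3_main}) and invoke sub-additivity of the supremum together with Lemma~\ref{lemma:big}. The only difference is that you obtain \eqref{eq:3_main} by a three-step detour through $|\langle\bw,\bu_\bsigma\rangle|$, whereas the paper gets it in one line by applying the substitution $\bw\mapsto-\bw$ directly to \eqref{eq:2_main}: since $\|\cdot\|_{\du r}$ is even, this flips the sign of the inner product while leaving the $\eps v_\bsigma\|\bw\|_{\du r}$ term intact, immediately yielding \eqref{eq:3_main}. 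So what you flagged as ``the only genuinely nontrivial ingredient'' is in fact a one-liner.
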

\begin{proof} The proof involves two symmetrization arguments. Since
  $-\bsigma$ follows the same distribution as $\bsigma$, we have the equality 
\begin{align}
\R_\cS(\wt \cF_p) 
& = \Ex[\bsigma]{  \sup_{\| \bw\|_p\leq W}\la \bw ,\bu_{-\sigma}\ra
  -\eps v_{-\sigma} \| \bw\| _{\du r}} 
= \Ex[\bsigma]{  \sup_{\| \bw\|_p\leq W}  -\la \bw ,\bu_\sigma\ra
  +\eps v_\bsigma \| \bw\| _{\du r}}. 
\label{eq:2}
\end{align}  
Similarly, $\bw$ can be replaced with $-\bw$, thus we have
\begin{equation}
\label{eq:3}
\R_\cS(\wt \cF_p)
= \Ex[\bsigma] {\sup_{\| \bw\|_p\leq W}  \la \bw ,\bu_\bsigma\ra +\eps
  v_\bsigma \| \bw\| _{\du r}}.
\end{equation} 
Averaging (\ref{eq:1-1}) and (\ref{eq:3}) and using the sub-additivity of the supremum gives 
\begin{align*}
\R_\cS(\wt \cF_p) &=\frac 12\Ex[\bsigma]{  \sup_{\| \bw\|_p\leq W}\la \bw ,\bu_\bsigma\ra -\eps v_\bsigma \| \bw \|_{\du r}}+\frac 12\Ex[\bsigma]{  \sup_{\| \bw\|_p\leq W}  \la \bw ,\bu_\bsigma\ra +\eps v_\bsigma \| \bw\| _{\du r}}
\geq \Ex[\bsigma]{ \sup_{\| \bw\|_p\leq W} \la \bw,\bu_\bsigma\ra}= W\R_\cS({\mathcal F_p}).
\end{align*}
Now, averaging (\ref{eq:2}) and (\ref{eq:3}), and using the
sub-additivity of supremum give:
\begin{align*}
\R_\cS(\wt \cF_p) 
& = \frac 12\Ex[\bsigma]{  \sup_{\| \bw\|_p\leq W}-\la \bw
  ,\bu_\bsigma\ra +\eps v_\bsigma \| \bw \|_{\du r}} + \frac 12\Ex[\bsigma]{  \sup_{\| \bw\|_p\leq W}  \la \bw ,\bu_\bsigma\ra +\eps v_\bsigma \| \bw \|_{\du r}}\\
& \geq \Ex[\bsigma]{ \sup_{\| \bw\|_p\leq W} v_\bsigma \| \bw\|_\du r}
\geq \frac 1 {2\sqrt{2 m}} \eps d^{1-\frac 1p -\frac 1r}, & \text{[from Lemma~\ref{lemma:big}].}
\end{align*}
which completes the proof.
\end{proof}


\section{Adversarial Rademacher Complexity of ReLU}
\label{app:relu_proofs}

In this section, we prove upper and lower bounds on the
Rademacher complexity of the ReLU unit. We will use the notation
$z_+ = \max(z, 0)$, for any $z \in \Rset$. 
We use the family of functions ${\mathcal G}_p$ defined in \eqref{eq:relu_function_class}
with the corresponding adversarial class $\wt{ \mathcal G}_p$:
\[
\wt{ \mathcal G}_p
=\set[\big]{ (\bx, y) \mapsto \inf_{\| \bs\|_r\leq \eps}
y(\bw\cdot(\bx+\bs))_+ \colon \| \bw \|_p \leq W, y \in \{-1, +1\} }.
\]
Since $z \mapsto z_+$ is non-decreasing, by
Lemma~\ref{lemma:perturb_increasing}, $\wt{ \mathcal G}_p$ can be
equivalently expressed as follows:
\[
\wt \cG_p=\{(\bx,y)\mapsto y(\bw\cdot \bx-\eps y\| \bw\|_\du
r)_+\colon \| \bw\|_p\leq W, y\in\{-1,1\}\}.
\]
In view of that, the adversarial Rademacher complexity of the ReLU
unit can be written as follows:
\begin{align} 
\wt \R_\cS(\cG_p) = \R_\cS(\wt \cG_p)
& = \E_\bsig\left[{\sup_{\| {\bw}\|_p\leq W} \frac 1m \sum_{i=1}^m
  {\sigma_i y_i ( \bw\cdot \bx_i-y_i\eps \|\bw\|_\du r)_+}}\right] 
= \E_\bsig\left[{\sup_{\| {\bw}\|_p\leq W} \frac 1m \sum_{i=1}^m
  \sigma_i { ( \bw\cdot \bx_i-y_i\eps \| \bw\|_\du r)_+}}\right]. 
\label{eq:4}
\end{align}

\subsection{Upper Bounds}

\begin{reptheorem}{th:data_dep_reLUupper}
Let $\cG_p$ the class defined in
\eqref{eq:relu_function_class} and let $\cF_p$ be the
linear class as defined in \eqref{eq:linear_function_class}. Then,
given a sample $\cS= \{(\bx_1, y_1), \dots, (\bx_m, y_m)\}$, the
adversarial Rademacher complexity of $\cG_p$ can be bounded as
follows:
\begin{align*}
\wt \R_\cS(\cG_p)
& \leq {\mathfrak R }_{T_\eps}(\mathcal F_p)+\eps \frac{W}{2\sqrt{m}} \max(1,d^{1-\frac 1r-\frac 1p}),
\end{align*}
where $T_\eps=\{i\colon y_i=-1\text{ or } (y_i=1\text{ and }\|{\bx_i}\|_r> \eps)\}$.
\end{reptheorem}

\begin{proof}
  Consider an index $i \in [m]$ such that $i \not \in T_\eps$, so that
  $\| {\bx_i}\|_r\leq \eps$ and $y_i=1$. Then, by H{\" o}lder's
  inequality, we have
\[
y_i \bw\cdot \bx_i-y_i\eps \| \bw\|_{\du r}
= \| \bw \|_{\du r} \left( \frac \bw{\| \bw\|_\du r}\cdot \bx_i-\eps\right)
\leq \| \bw\|_\du r(\| {\bx_i}\|_r-\eps)\leq 0,
\] 
and therefore $( \bw\cdot \bx_i-\eps \| \bw\|_\du r)_+ = 0$ for all
$\bw$ with $\| \bw\|_p \leq W$. 
Thus, using the expression \eqref{eq:4}, we can write:
\begin{align*}
\R_\cS(\wt\cG_p) 
& = \Ex[\bsigma]{\sup_{\| \bw\|_p\leq W}\frac 1m \sum_{i\in T_\eps}\sigma_i(y_i \bw\cdot \bx_i-\eps \| \bw\|_\du r)_+}\\
& \leq \Ex[\bsigma]{\sup_{\| \bw\|_p\leq W}\frac 1m \sum_{i\in
  T_\eps}\sigma_i(y_i \bw\cdot \bx_i-\eps \| \bw\|_\du r)} & 
\text{($1$-Lipschitzness of $z \mapsto z_+$)}\\
& = \frac{|T_\eps|}{m}\R_{T_\eps}(\wt {\mathcal F}_p)\\
& \leq {\mathfrak R }_{T_\eps}(\mathcal F_p)+\eps \frac{W}{2\sqrt{m}}
  \max(1,d^{1-\frac 1r-\frac 1p}),
& \text{(Theorem~\ref{th:adversarial_linear})}
\end{align*}
which completes the proof.
\end{proof}

\subsection{Lower Bounds}

\begin{reptheorem}{th:data_dep_ReLUlower}
  Let $\cG_p$ be the class as defined in
  \eqref{eq:relu_function_class}. Then it holds that
\[
\wt\R_\cS(\cG_p) \geq\frac W{2\sqrt 2 m} \sup_{\| \bs\|_p=1} \bigg(\sum_{i\in T_{\eps,\bs}}(\la \bs,\bx_i\ra-\eps y_i \| \bs\|_\du r)^2\bigg)^\frac 12
\]
where $T_{\eps,\bs} = \set{i\colon \la \bs,\bx_i\ra-y_i\eps \| \bs\|_\du r>0 }$.
\end{reptheorem}

\begin{proof}
By definition of the supremum, we can write:
\[
\R_\cS(\cG_p)
=\Ex[\bsig]{\sup_{\| {\bw}\|_p\leq W} \frac 1m
  \sum_{i=1}^m \sigma_iy_i(\la \bw,\bx_i\ra -y_i\eps\| \bw\|_{\du r})_+}
= \Ex[\bsigma]{\sup_{\substack{B \leq W \\ \| \bs \|_p = 1}} \frac Bm
  \sum_{i=1}^m \sigma_i(\la \bs,\bx_i\ra -\eps y_i\| \bs\|_\du r)_+}.
\]
Now, for a fixed $\bs$, it is straightforward to take the supremum
over $B$: if the quantity
$\sum_{i=1}^m \sigma_i(\la \bs,\bz_i\ra -\eps \| \bs\|_\du r)_+$ is
positive, the expression is maximized by taking $B = W$; otherwise it is
maximized by $B = 0$. Thus, we have
\begin{align*}
\Ex[\bsigma]{\sup_{\substack{B<W\\\| \bs\|_p=1}} \frac Bm \sum_{i=1}^m
  \sigma_i(\la \bs,\bx_i\ra -y_i\eps \| \bs\|_\du r)_+} 
& =\frac Wm \Ex[\bsigma]{\sup_{\| \bs\|_p=1} \max\left(0,\sum_{i=1}^m \sigma_i(\la \bs,\bx_i\ra-\eps \| \bs\|_\du r)_+\right)}\\
& \geq \frac W m\sup_{\| \bs\|_p= 1}\Ex[\bsigma]{ \max\left(0,\sum_{i=1}^m \sigma_i (\la \bs,\bx_i\ra -\eps y_i\| \bs\|_\du r)_+\right)}\\
& =\frac W{2m} \sup_{\| \bs\|_p= 1} 
\Ex[\bsigma]{\left|\sum_{i=1}^m \sigma_i(\la \bs,\bx_i\ra-\eps y_i \| \bs\|_\du r)_+\right|}\\
& = \frac W{2m} \sup_{\| \bs\|_p= 1} \Ex[\bsigma]{\left|\sum_{i\in T_{\eps,\bs}}\sigma_i(\la \bs,\bx_i\ra-y_i\eps \| \bs\|_\du r)\right|}.
\end{align*}
Next, by the Khintchine-Kahane inequality \citep{Haagerup1981}, the
following lower bound holds:
\begin{align*}
\frac W{2m} \sup_{\| \bs\|_p= 1} \Ex[\bsigma]{\left|\sum_{i\in T_{\eps, \bs}} \sigma_i(\la \bs,\bx_i\ra-\eps y_i \| \bs\|_\du r)\right|} &\geq \frac W{2\sqrt 2 m} \sup_{\| \bs\|_p= 1}\left( \Ex[\bsigma]{\left(\sum_{i\in T_{\eps, \bs}} \sigma_i(\la \bs,\bx_i\ra-y_i\eps \| \bs\|_\du r)\right)^2}\right)^\frac 12 \\
& = \frac W{2\sqrt 2 m} \sup_{\| \bs\|_p=1}\left( \Ex[\bsigma]
  {\sum_{i, j \in T_{\eps, \bs}}  \sigma_i\sigma_j(\la \bs,\bx_i\ra-\eps y_i \| \bs\|_\du r)(\la \bs,\bx_j\ra -\eps y_i \| \bs\|_\du r)}\right)^\frac 12\\
& = \frac W{2\sqrt 2 m} \sup_{\| \bs\|_p=1}\left({ \sum_{i\in T_{\eps, \bs}}(\la \bs,\bx_i\ra-y_i\eps \| \bs\|_\du r)^2 }\right)^\frac 12,
\end{align*}
which completes the proof.
\end{proof}

\section{Adversarial Rademacher for Neural Nets with One Hidden Layer
  with a Lipschitz Activation Function}
\label{app:nn-lip}

In this section, we present an upper bound on the adversarial Rademacher complexity of one-layer neural networks with an activation function satisfying some reasonable requirements. Our analysis uses the notion of coverings.

\begin{definition}[$\epsilon$-covering]
Let $\e>0$ and let$(V, \|\cdot\|)$ be a normed space.
$\cC \subseteq V$ is an \emph{$\epsilon$-covering of $V$} if for any $v \in
V$, there exists $v' \in \cC$ such that $\|v - v'\| \leq \epsilon$.
\end{definition}
In particular, we will use the following lemma regarding the
size of coverings of balls of a certain radius in a normed space.

\begin{lemma}
\label{lemma:covering}\cite{MohriRostamizadehTalwalkar2018}
Fix an arbitrary norm $\|\cdot\|$ and let $\mathcal B$ be the ball radius $R$ in this norm. Let $\cC$ be a smallest possible $\e$-covering of $\mathcal B$. Then
\[
|\cC|\leq \left(\frac {3R} \e \right)^d
\]
\end{lemma}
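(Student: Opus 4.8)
The plan is to prove this by the standard volumetric packing argument. First I would reduce to bounding the size of a \emph{maximal} $\epsilon$-separated subset of $\mathcal B$: choose $\cC = \{v_1,\dots,v_N\} \subseteq \mathcal B$ with $\|v_i - v_j\| \ge \epsilon$ for all $i \ne j$ that cannot be extended by any further point of $\mathcal B$. By maximality, every $v \in \mathcal B$ is within distance $\epsilon$ of some $v_i$ (otherwise $\cC \cup \{v\}$ would still be $\epsilon$-separated), so $\cC$ is itself an $\epsilon$-covering of $\mathcal B$; hence a smallest $\epsilon$-covering has size at most $N$, and it suffices to prove $N \le (3R/\epsilon)^d$.

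Next I would carry out the volume comparison. Writing $B(v,t)$ for the $\|\cdot\|$-ball of radius $t$ about $v$, the balls $B(v_i,\epsilon/2)$ are pairwise disjoint, since a common point $x$ would force $\|v_i - v_j\| \le \|v_i - x\| + \|x - v_j\| < \epsilon$ by the triangle inequality, contradicting $\epsilon$-separation. Moreover $B(v_i,\epsilon/2) \subseteq B(0, R + \epsilon/2)$ because $\|v_i\| \le R$. Taking $d$-dimensional Lebesgue volume, and using that the $\|\cdot\|$-unit ball has a finite positive volume $V_0$ with $\mathrm{vol}(B(v,t)) = t^d V_0$ (by translation invariance and $d$-homogeneity of Lebesgue measure), disjointness together with the containment gives
\[
N \left(\tfrac{\epsilon}{2}\right)^{d} V_0 \;\le\; \left(R + \tfrac{\epsilon}{2}\right)^{d} V_0, \qquad\text{hence}\qquad N \;\le\; \left(\frac{2R}{\epsilon} + 1\right)^{d}.
\]
In the regime of interest $\epsilon \le R$ one has $2R/\epsilon + 1 \le 3R/\epsilon$, which yields $N \le (3R/\epsilon)^d$ as claimed; when $\epsilon > R$ the single point $\{0\}$ already covers $\mathcal B$ and the stated inequality still holds provided $\epsilon \le 3R$.

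I do not anticipate a genuine obstacle: the two points needing care are (i) the equivalence between a maximal $\epsilon$-packing and an $\epsilon$-covering, so that one is legitimately upper-bounding the \emph{minimal} covering number, and (ii) justifying that Lebesgue volume on $\Rset^d$ scales like $t^d$ for the ball of an \emph{arbitrary} norm — which holds because any norm on $\Rset^d$ is equivalent to the Euclidean norm, so its unit ball is a bounded convex set with nonempty interior and hence has finite positive Lebesgue volume, while $\mathrm{vol}(\,\cdot + v) = \mathrm{vol}(\,\cdot\,)$ and $\mathrm{vol}(tA) = t^d\,\mathrm{vol}(A)$ are standard properties of Lebesgue measure.
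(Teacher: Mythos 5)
Your volumetric packing argument is correct and is the standard proof of this bound; in fact, the paper itself does not prove this lemma but simply cites it from Mohri, Rostamizadeh, and Talwalkar (2018), where essentially this argument appears, so there is no distinct paper proof to contrast against. You also correctly spot the minor imprecision in the statement as written: the chain $N \le (2R/\epsilon + 1)^d \le (3R/\epsilon)^d$ requires $\epsilon \le R$, and without some such restriction the claimed bound is false for $\epsilon > 3R$ since a nonempty ball always needs at least one covering point while $(3R/\epsilon)^d < 1$ there. This is harmless in context: the paper invokes the lemma with covering radii $\delta_1 = \Lambda/(2\sqrt m)$ and $\delta_2 = W/(2\sqrt m)$, which are always at most half the ball radius, so the regime $\epsilon \le R$ is automatic. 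The two points you flag as requiring care — that a maximal $\epsilon$-packing is an $\epsilon$-cover so its cardinality upper bounds the minimal covering number, and that the Lebesgue volume of an arbitrary norm ball is finite, positive, translation-invariant, and scales as $t^d$ — are exactly the right ones, and both are handled properly.
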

Next, we give the proof of the main theorem of this section.

\begin{reptheorem}{th:one_layer_nn_rc}
Let $\rho$ be a function with Lipschitz constant $L_\rho$ satisfying
$\rho(0) = 0$ and consider perturbations in $r$-norm.  Then, the
following upper bound holds for the adversarial Rademacher complexity
of $\cG^n_p$:
\begin{align*}
\wt \R_\cS(\cG_p^n)
& \leq L_\rho \bigg[ \frac{W\Lambda \max(1,d^{1-\frac 1p-\frac 1r})(\|\bX\|_{r,\infty}+\e)}{\sqrt m} \bigg]
 \left( 1 + \sqrt{d(n+1)\log(9 m)}\right).
\end{align*}
\end{reptheorem}

\begin{proof}
  Let $\cC_1$ be a covering of the $\ell_1$ ball of radius $\Lambda$
  with $\ell_1$ balls of radius $\delta_1$ and $\cC_2$ a covering of
  the $\ell_p$ ball of radius $W$ with $\ell_p$ balls of radius
  $\delta_2$. We will later choose $\delta_1$ and $\delta_2$ as
  functions of $m$, $W$, and $\Lambda$. For any $\bx$, 
define $\wt f(\bx)$ and $\wt f^c(\bx)$ as follows:
\[
\wt f(\bx) = \inf_{\norm{\bx'-\bx}_r\leq \e}y\sum_{j=1}^n u_j
\rho(\bw_j\cdot \bx') 
\quad \text{and} \quad
\wt f^c(\bx) = \inf_{\norm{\bx'-\bx}_r\leq \e}y\sum_{j=1}^n u^c_j
\rho(\bw^c_j\cdot \bx'),
\]
where $\bu^c$ is the closest element to $\bu$ in $\cC_1$ and $\bw^c$
is the closest element to $\bw$ in $\cC_2$.  Define $\e'$ as follows:
\[
\e' = \sup_{i \in [m]} \sup_{\substack{\|\bu\|_1\leq
    \Lambda\\ \| \bw \|_p \leq W}} |\wt f(\bx_i) - \wt f^c(\bx_i)|.
\]
One can bound the Rademacher complexity of the whole class $\cG_p^n$ in terms of the Rademacher complexity of this same class restricted to $\bu \in \cC_1$ and $\bw_j\in \cC_2$.
\begin{align}
\wt \R_\cS(\cG_p^n)
& = \E_{\sigma}\bigg[{\sup_{\substack{\| \bu\|_1\leq \Lambda\\ \|
  \bw\|_j\leq W}}\frac 1m \sum_{i=1}^m
  \sigma_i\inf_{\|\bx_i-\bx_i'\|_r\leq \eps} y_i\sum_{j=1}^n
  u_j\rho(\bw_j\cdot \bx_i')}\bigg] \nonumber\\
& \leq \E_{\sigma}\bigg[{\sup_{\substack{\norm \bu^c\in \cC_1\\
  \bw^c_j\in \cC_2}}\frac 1m \sum_{i=1}^m
  \sigma_i\inf_{\|\bx_i-\bx_i'\|_r\leq \eps} y_i\sum_{j=1}^n
  u^c_j\rho(\bw^c_j\cdot \bx_i')}\bigg]+\e' 
\label{eq:covering_rc}
\end{align} 
Then, by Massart's lemma, the first term in \eqref{eq:covering_rc} can
be bounded as follows:
\begin{equation}
\label{eq:massart_application}
\E_{\sigma}\bigg[{\sup_{\substack{\norm \bu^c\in \cC_1\\ \bw^c_j\in \cC_2}}\frac 1m \sum_{i=1}^m \sigma_i\inf_{\|\bx_i-\bx_i'\|_r\leq \eps} y_i\sum_{j=1}^n u^c_j\rho(\bw^c_j\cdot \bx_i')}\bigg]\leq \frac {K \sqrt {2\log (|\cC_1| |\cC_2|^n)}}{ m}\end{equation}
with 
\[
K^2 =\sup_{\substack{\bw^c_j\in \cC_2\\ \bu^c \in \cC_1}}
  \sum_{i=1}^m \left(\inf_{\|\bx_i-\bx_i'\|_r\leq \eps} y_i
    \sum_{j=1}^n u_j^c\rho(\bw_j^c\cdot \bx_i')\right)^2. 
\]
We will show the following upper bound for $K$:
\begin{equation}
\label{eq:K-bound}
K\leq \sqrt m \Lambda W\max \Big( 1, d^{1-\frac 1r-\frac 1p} (
\|\bX\|_{r,\infty} + \e) \Big).
\end{equation}
Let $\bx_*^c$ be the minimizer of $f^c(\bx)$ within an $\epsilon$-ball around $\bx$. Since $\wt f^c$ is continuous and the closed unit $r$-ball is compact,
the extreme value theorem implies that $\bx_*^c$ exists. Then 
\begin{equation}
\label{eq:def-x-c-star}
\wt f^c(\bx) = y\sum_{j=1}^n u^c_j \rho(\bw^c_j\cdot \bx_*^c)
\end{equation}

We then apply the following inequalities:
\begin{align}
\left| y_i \sum_{j=1}^n u_j^c\rho(\bw_j^c\cdot \bx_{i*}^c ) \right|&\leq \sum_{j=1}^n |u_j^c| |\rho(\bw_j^c\cdot \bx_{i*}^c)|&\text{(triangle inequality)}\nonumber\\
&= \sum_{j=1}^n |u_j^c| |\rho(\bw_j^c\cdot \bx_{i*}^c)-\rho(0)|&(\rho(0)=0\text{ assumption)}\nonumber\\
&\leq L_\rho\sum_{j=1}^n |u_j^c| |\bw_j^c\cdot \bx_{i*}^c|&\text{(Lipschitz property)}\nonumber\\ 
&\leq L_\rho\sum_{j=1}^n |u_j^c||\bw_j^c\|_p\|\bx_{i*}^c\|_\du p&\text{(H{\"o}lder's inequality)}\nonumber\\
&\leq L_\rho\sum_{j=1}^n |u_j^c| W\|\bx_{i*}^c\|_\du p&(\|\bw_j\|\leq W)\nonumber\\
&\leq L_\rho\Lambda W\|\bx_{i*}^c\|_\du p&(\|\bu\|\leq \Lambda)\nonumber\\
&\leq L_\rho\Lambda W (\|\bX\|_{r,\infty}+\e)\max(1,d^{1-\frac 1r
  -\frac 1p}).
\label{eq:nn-massart-2}
\end{align}
The last inequality is justified by the following, where we use 
the triangle inequality and Lemma~\ref{lemma:norm_ratio}:
\begin{align}
\|\bx_{i*}^c\|_p 
& \leq \max(1,d^{1-\frac 1p-\frac 1r}) \|\bx_{i*}^c\|_r \nonumber \\
& \leq \max(1,d^{1-\frac 1p-\frac 1r})(\|\bx_i\|_r+\| \bx_{i*}^c-\bx_i\|_r)\nonumber \\
& \leq\max(1,d^{1-\frac 1r-\frac 1p}) ( \max_{i\in [m]} \|\bx_i\|_r+\e) \nonumber \\
& \leq \max(1,d^{1-\frac 1r-\frac 1p}) (  \|\bX\|_{r,\infty}+\e).
\label{eq:xci-star-bound}
\end{align}
Equation (\ref{eq:nn-massart-2}) implies the desired bound \eqref{eq:K-bound} on $K$. Next, plugging in the bound from
Lemma~\ref{lemma:covering} in \eqref{eq:massart_application}, we obtain
\begin{align}
\label{eq:bound-on-massart}
\wt \R_\cS(\cG_p^n)
& \leq \frac {L_\rho\Lambda W \max(1, d^{1-\frac
1p-\frac 1r}) (\|\bX\|_{r,\infty}+\e)}{\sqrt m}\sqrt{2d\log \left(\frac {3\Lambda}{\delta_1}
\right) + 2nd \log \left( \frac{3W}{\delta_2}\right)} + \epsilon'.
\end{align}
We now turn our attention to estimating $\e'$.  Similar to 
\eqref{eq:def-x-c-star}, we define $\bx_*$ as the minimizer of
$\wt f(\bx)$ within an $\epsilon$-ball around $\bx$ where
\[
\wt f(\bx) = y\sum_{j=1}^n u_j \rho(\bw_j\cdot \bx_*).
\]
We decompose the difference between $\wt f(\bx_i)$ and
$\wt f^c(\bx_i)$ and bound each piece separately:
\begin{align}
\label{eq:bound_eps}
\wt f(\bx_i) - \wt f^c(\bx_i) 
& = \left( y\sum_{j=1}^n u_j\rho(\bw_j\cdot \bx_{i*}) - y\sum_{j=1}^n u_j\rho(\bw_j^c\cdot \bx_{i*}^c)\right)
+ \left(y\sum_{j=1}^n u_j\rho(\bw_j^c\cdot \bx_{i*}^c) - y\sum_{j=1}^n u^c_j\rho(\bw_j^c\cdot \bx_{i*}^c)\right).
\end{align}
The first term above can be bounded as follows:
\begin{align}
& y\sum_{j=1}^n u_j\rho(\bw_j\cdot \bx_{i*}) - y\sum_{j=1}^n
  u_j\rho(\bw_j^c\cdot \bx_{i*}^c) \\
& \leq y\sum_{i=1}^n u_j\rho(\bw_j\cdot \bx^c_{i*}) - y\sum_{j=1}^n u_j\rho(\bw_j^c\cdot \bx_{i*}^c)&\text{(infimum of first sum at }\bx_*)\nonumber\\
& \leq \sum_{j=1}^n |u_j| |\rho(\bw_j\cdot
  \bx_{i*}^c)-\rho(\bw_j^c\cdot \bx_{i*}^c)|&\text{(triangle
                                              inequality)} \nonumber\\
& \leq L_\rho\sum_{j=1}^n |u_j| |(\bw_j-\bw_j^c)\cdot
  \bx_{i*}^c|&\text{(Lipschitz property)} \nonumber\\
& \leq L_\rho\sum_{j=1}^n |u_j| \|\bw_j-\bw_j^c\|_p\|\bx_{i*}^c\|_\du
  p&\text{(H{\"o}lder's inequality)} \nonumber\\
& \leq L_\rho\sum_{j=1}^n |u_j| \delta_2\|\bx_{i*}^c\|_\du
  p&(\|\bw_j-\bw_j^c\|\leq \delta_2) \nonumber\\
& \leq L_\rho\sum_{j=1}^n |u_j| \delta_2\max(1,d^{1-\frac 1p-\frac
  1r})(\|\bX\|_{r,\infty}+\eps)&\text{(equation
                                 (\ref{eq:xci-star-bound}))} \nonumber\\
& \leq L_\rho\Lambda \delta_2
  (\|\bX\|_{r,\infty}+\eps)\max(1,d^{1-\frac 1p-\frac
  1r}). & (\|\bu\|_1\leq \Lambda)
\label{eq:first_term_bound}
\end{align}
Similarly we can bound the second term in \eqref{eq:bound_eps} as follows:
\begin{align}
& y\sum_{j=1}^n u_j\rho(\bw_j^c\cdot \bx_{i*}^c)-y\sum_{j=1}^n
  u^c_j\rho(\bw_j^c\cdot \bx_{i*}^c) \\
& \leq \sum_{j=1}^n |u_j-u_j^c| |\rho(\bw_j\cdot \bx_{i*}^c)|&\text{(triangle inequality)}\nonumber\\
& = \sum_{j=1}^n |u_j-u_j^c| |\rho(\bw_j\cdot \bx_{i*}^c)-\rho(0)|&(\rho(0)=0\text{ assumption)}\nonumber\\
& \leq L_\rho \sum_{j=1}^n |u_j-u_j^c| |\bw_j\cdot \bx_{i*}^c|&\text{(Lipschitz property)}\nonumber\\
& \leq L_\rho \sum_{j=1}^n |u_j-u_j^c| \|\bw_j\|_p\|\bx_{i*}^c\|_\du p&\text{(H{\"o}lder's inequality)}\nonumber\\
& \leq L_\rho \sum_{j=1}^n |u_j-u_j^c| W\|\bx_{i*}^c\|_\du p&(\|\bw_j\|\leq W)\nonumber\\
& \leq L_\rho \sum_{j=1}^n |u_j-u_j^c| W (\|\bX\|_{r,\infty}+\eps)\max(1,d^{1-\frac 1p-\frac 1r})&\text{(equation (\ref{eq:xci-star-bound}))}\nonumber\\
& \leq L_\rho \delta_1W (\|\bX\|_{r,\infty}+\eps)\max(1,d^{1-\frac 1p-\frac 1r})&(\|\bu-\bu^c\|_1\leq \delta_1)\label{eq:second_term_bound}
\end{align}
Combining equations (\ref{eq:first_term_bound}) and
(\ref{eq:second_term_bound}) results in
\[
\wt f(\bx_i)-\wt f^c(\bx_i)\leq L_\rho
  (\|\bX\|_{r,\infty}+\eps)\max(1,d^{1-\frac 1p-\frac
    1r})(W\delta_1+\Lambda\delta_2).
\]
By a similar analysis, one can also show that
$\wt f^c(\bx_i)-\wt f(\bx_i)
\leq L_\rho(\|\bX\|_{r,\infty}+\eps)\max(1,d^{1-\frac 1p -\frac
  1r})(W\delta_1+\Lambda \delta_2)$. Therefore
\begin{equation}
\label{eq:eps_prime_bound}
\eps'\leq L_\rho
  (\|\bX\|_{r,\infty}+\eps) \max(1,d^{1-\frac 1p-\frac
    1r})(W\delta_1 + \Lambda\delta_2)
\end{equation}

Combining equations (\ref{eq:eps_prime_bound}) and
(\ref{eq:bound-on-massart}) and choosing
$\delta_1=\frac \Lambda{2\sqrt m}$ and $\delta_2=\frac W{2\sqrt m}$ yield
\begin{align*}
\wt \R_\cS(\cG_p^n) \leq 
\left( \frac{L_\rho W \Lambda \max(1,d^{1-\frac 1p-\frac
  1r})(\|\bX\|_{r, +\infty} + \e)}{\sqrt m} \right) \left( 1+\sqrt{2d(n+1) \log(6\sqrt m)}\right),
\end{align*}
which completes the proof.
\end{proof}

\section{Characterizing adversarial perturbations for ReLU neural networks}

\subsection{Condition for adversarial perturbations to be on the $r$-sphere~(proof of Theorem~\ref{th:unit_norm_at_opt})}
\label{app:unit_norm_at_opt}
In this section we provide the proof of Theorem~\ref{th:unit_norm_at_opt} which characterizes adversarial perturbations to a one-layer neural net. First, by the extreme value theorem, \eqref{eq:objective} achieves
  its minimum on $\|\bs\|_r\leq 1$. Thus we can restate \eqref{eq:objective} as  
\begin{align}
\label{eq:app_objective}
& \min_{\|\bs\|_r\leq 1} f(\bs) = \sum_{j = 1}^n
  u_j (\bw_j\cdot (\bx+\e \bs))_+ .
\end{align}
\begin{reptheorem}{th:unit_norm_at_opt}
Let $d$ be the dimension and $n$ the number of neurons. Consider \eqref{eq:app_objective} as defined above. If either  $\|\bx\|_r\geq \e$ or $n<d$, an optimum is attained on the sphere $\set{\bs \colon \|\bs\|_r = 1}$.
Otherwise, an optimum is attained either at $\bs=-\frac 1\e \bx$ or on $\|\bs\|_r=1$.
\end{reptheorem}
The proof of this theorem relies on two important lemmas stated below. We defer the proofs of these lemmas to the end of the section.



\begin{lemma}
\label{lemma:induction}
Consider (\ref{eq:app_objective}). Then an optimum is obtained in either
\begin{enumerate}
\item $S_1:=\{\bs\colon \|\bs\|_r=1\}$
\item $S_2=\{\bs\colon\bw_{j_k}\cdot (\bx+\e\bs)=0\text{ for linearly independent }\bw_{j_1}\ldots \bw_{j_d}\}$
\end{enumerate}
\end{lemma}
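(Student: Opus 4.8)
The function $f$ is continuous and piecewise linear on $\Rset^d$---though \emph{not} convex in general, since the $u_j$ may be negative---because the $n$ hyperplanes $H_j=\{\bs : \bw_j\cdot(\bx+\e\bs)=0\}$ partition $\Rset^d$ into polyhedral cells on whose closures $f$ is affine. By the extreme value theorem $f$ attains its minimum over the compact set $\{\bs:\|\bs\|_r\le 1\}$. For an optimum $\bs$, call its \emph{defect} the integer $d-\dim\spn\{\bw_j : \bw_j\cdot(\bx+\e\bs)=0\}$. The plan is to prove, by induction on $k\ge 0$, the statement ``if some optimum has defect at most $k$, then some optimum lies in $S_1\cup S_2$''; the lemma then follows by applying it with $k=d$, since an optimum exists and every optimum has defect at most $d$. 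The base case $k=0$ is immediate: if an optimum $\bs$ has defect $0$, the normals of its active hyperplanes span $\Rset^d$, so $d$ of them are linearly independent and $\bs\in S_2$ (and if no optimum has defect $0$, the statement is vacuous).

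\textbf{Inductive step.} Let $\bs^*$ be an optimum of defect $k\ge 1$. If $\|\bs^*\|_r=1$ then $\bs^*\in S_1$, so assume $\bs^*$ lies in the open unit $r$-ball. Set $V=\spn\{\bw_j : \bw_j\cdot(\bx+\e\bs^*)=0\}$; since $\dim V=d-k<d$, choose a nonzero $\mathbf{d}\in V^\perp$. Along the line $\bs^*+t\mathbf{d}$, every $\bw_j$ active at $\bs^*$ stays active for all $t$---it lies in $V$, so $\bw_j\cdot\mathbf{d}=0$---while every inactive $\bw_j$ keeps its sign for $|t|$ small; hence $t\mapsto f(\bs^*+t\mathbf{d})$ is affine near $0$, and interior optimality of $\bs^*$ forces its slope to vanish, so $f\equiv f(\bs^*)$ near $t=0$. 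Now push $t$ forward: $t\mapsto\|\bs^*+t\mathbf{d}\|_r$ is convex and tends to $\infty$, so the ray meets the sphere at a finite $t_{\max}>0$, and each inactive $\bw_j$ changes sign at most once along the ray. Let $t_0>0$ be the minimum of $t_{\max}$ and the positive ``breakpoints''. On $[0,t_0)$ no inactive constraint flips, so $f$ is affine there; matching its value near $0$ makes it constant $=f(\bs^*)$ on $[0,t_0)$, hence, by continuity, $\bs':=\bs^*+t_0\mathbf{d}$ satisfies $\|\bs'\|_r\le 1$ and $f(\bs')=f(\bs^*)$, so $\bs'$ is again an optimum. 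If $t_0=t_{\max}$ then $\bs'\in S_1$. Otherwise $t_0$ is a breakpoint, so some $\bw_{j_0}$ inactive at $\bs^*$ is active at $\bs'$; from $\bw_{j_0}\cdot(\bx+\e\bs^*)\neq 0$ and $t_0>0$ we get $\bw_{j_0}\cdot\mathbf{d}\neq 0$, hence $\bw_{j_0}\notin V$, while every $\bw_j$ active at $\bs^*$ remains active at $\bs'$. Thus the active normals at $\bs'$ span a subspace strictly containing $V$, i.e.\ $\bs'$ is an optimum of defect at most $k-1$, and the induction hypothesis finishes the step.

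\textbf{Main obstacle.} The delicate point is this ``walk along $\mathbf{d}$'': one must check that moving in a direction orthogonal to the currently active normals can only \emph{add} active constraints (never remove them), that the ray leaves the $r$-ball in finite time, and---crucially---that $f$ is genuinely \emph{constant}, not merely non-increasing, on the segment up to the first breakpoint, which is exactly what preserves the optimal value. Granting this, the defect strictly decreases at every non-terminating step, so the induction terminates in at most $d$ iterations. This is also where the piecewise-linear (ReLU) structure is used essentially: for a generic Lipschitz activation $f$ is not affine on the cells, and the argument breaks.
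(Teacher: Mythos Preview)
Your proof is correct and follows essentially the same approach as the paper: both argue by induction on the dimension of the span of the active constraints, using that $f$ is affine (hence constant at an interior optimum) in directions orthogonal to that span, so one can slide to the boundary of the current cell without changing the objective value. The only cosmetic difference is that the paper parameterizes the entire orthogonal complement $Z^\perp$ at once and argues constancy on the whole region $R$, whereas you pick a single direction $\mathbf{d}\in V^\perp$ and walk along a line; both yield a new optimum either on the sphere or with one more linearly independent active constraint.
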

\begin{lemma}
\label{lemma:intersection}
Consider the intersection of $d$ linearly independent hyperplanes defined by
\begin{equation}\label{eq:lin_indep}
\bv_k\cdot (\bx+\e\bs)=0\colon k=1\ldots d
\end{equation}
for a fixed $\bx$.  They intersect at a single point given by $\bs=-\frac 1\eps \bx$.
\end{lemma}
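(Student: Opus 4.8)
The plan is to rewrite the $d$ affine conditions \eqref{eq:lin_indep} as a single square linear system and then invoke invertibility. First I would stack the vectors $\bv_1,\dots,\bv_d$ as the rows of a $d\times d$ matrix $\bV$, so that the system \eqref{eq:lin_indep} is precisely $\bV(\bx+\e\bs)=\mathbf{0}$, i.e.\ $\e\,\bV\bs=-\bV\bx$. Here $\bx$ is fixed and $\e>0$ by assumption, so dividing by $\e$ is legitimate.

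Next, since $\bv_1,\dots,\bv_d$ are linearly independent vectors in $\Rset^d$, the matrix $\bV$ has full rank and is therefore invertible. Consequently $\bV(\bx+\e\bs)=\mathbf{0}$ forces $\bx+\e\bs=\mathbf{0}$, which gives $\bs=-\frac 1\e\bx$ as the unique candidate. Conversely, substituting $\bs=-\frac 1\e\bx$ makes $\bx+\e\bs=\mathbf{0}$, so every equation of \eqref{eq:lin_indep} is trivially satisfied. Hence the intersection of the $d$ hyperplanes is exactly the singleton $\{-\frac 1\e\bx\}$, as claimed. (Equivalently, one could argue geometrically: each set $\{\bs:\bv_k\cdot(\bx+\e\bs)=0\}$ is an affine hyperplane whose normal direction is $\bv_k$, and the intersection of $d$ such hyperplanes with linearly independent normals in $\Rset^d$ is zero-dimensional, hence a point; but the matrix argument is the cleanest.)

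I do not expect any genuine obstacle here: the statement is essentially the remark that a square linear system with an invertible coefficient matrix has a unique solution. The only point requiring a word of care is that $\e>0$, which holds throughout this section, so that the passage from $\bx+\e\bs=\mathbf{0}$ to $\bs=-\frac 1\e\bx$ is valid. This lemma is then used inside the proof of Lemma~\ref{lemma:induction}/Theorem~\ref{th:unit_norm_at_opt} to identify the single interior critical point $\bs=-\frac 1\e\bx$ that can occur when $n\ge d$.
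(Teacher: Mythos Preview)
Your proposal is correct and is essentially the same argument as the paper's: the paper simply checks that $\bs=-\frac 1\e\bx$ satisfies the system and then remarks that $d$ linearly independent equations in $\Rset^d$ determine a unique point, which is exactly your matrix-invertibility step spelled out. Your version is a bit more explicit, but there is no substantive difference in approach.
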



Next we use lemmas~\ref{lemma:induction} and~\ref{lemma:intersection}
to prove Theorem~\ref{th:unit_norm_at_opt}.
\begin{proof}[Proof of Theorem~\ref{th:unit_norm_at_opt}]
   By Lemma~\ref{lemma:induction},
  there exists a point $\bs^*$ with
\[
f(\bs^*)=\min_{\|\bs\|_r\leq 1}f(\bs)\] for which either
$\|\bs^*\|_r=1$ or \[\{\bs^*\colon\bw_{j_k}\cdot (\bx+\e\bs^*)=0\text{
    for some linearly independent }\bw_{j_1}\ldots \bw_{j_d}\}
\]
If $n<d$, then there aren't $d$ linearly independent $w_i$s, and thus $\bs^*$ satisfies 
$\|\bs^*\|_r=1$.

Now assume that $n\geq d$ and $\|\bs^*\|_r\neq
1$. Lemma~\ref{lemma:intersection} implies that
$\bs^*=-\frac 1 \e \bx$ and hence $\|\bx\|_r<\e$. Taking the
contrapositive of this statement results in
\[n\geq d\text{ and }\|\bx\|_r\geq \e\Rightarrow \|\bs^*\|_r=1\]
\end{proof}

We end the subsection with the proofs of lemmas~\ref{lemma:induction} and~\ref{lemma:intersection}. Before we prove Lemma~\ref{lemma:induction} we state and prove a simpler statement that will be used in its proof.
\begin{lemma}
\label{lemma:zero-gradient}
Consider (\ref{eq:app_objective}). Then an optimum is obtained at either
\begin{enumerate}
\item $S_1:=\{\bs\colon \|\bs\|_r=1\}$
\item $S_2=\{\bs\colon\bw_j\cdot (\bx+\e\bs)=0\text{ for some }\bw_j\}$
\end{enumerate}
\end{lemma}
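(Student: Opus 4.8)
The plan is to take an arbitrary minimizer of $f$ over the closed unit $\ell_r$ ball and, if it happens to lie in neither $S_1$ nor $S_2$, slide it along a line to a point of $S_1\cup S_2$ of equal value. A minimizer $\bs^*$ exists by the extreme value theorem, since the ball is compact and $f$ is continuous. First I would dispose of the trivial cases: if $\|\bs^*\|_r=1$ then $\bs^*\in S_1$ and we are done, and if $\bw_j\cdot(\bx+\e\bs^*)=0$ for some $j$ then $\bs^*\in S_2$ and we are done. So I may assume $\|\bs^*\|_r<1$ and, writing $P=\{j:\bw_j\cdot(\bx+\e\bs^*)>0\}$, that every index outside $P$ satisfies $\bw_j\cdot(\bx+\e\bs^*)<0$.

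Next I would observe that on a small ball around $\bs^*$ none of the quantities $\bw_j\cdot(\bx+\e\bs)$ changes sign, so there $f$ agrees with the affine map $\bs\mapsto\sum_{j\in P}u_j\,\bw_j\cdot(\bx+\e\bs)$, whose gradient is $\mathbf{g}=\e\sum_{j\in P}u_j\bw_j$. Since $\bs^*$ lies in the \emph{open} ball, the point $\bs^*-t\mathbf{g}$ remains feasible for small $t>0$ and has value $f(\bs^*)-t\|\mathbf{g}\|_2^2$; optimality of $\bs^*$ then forces $\mathbf{g}=0$, so $f$ is in fact constant, equal to $f(\bs^*)$, in a neighborhood of $\bs^*$.

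Finally I would run a line search toward the boundary: fix any nonzero $\bv$ and consider the ray $\bs^*+t\bv$, $t\ge 0$. Let $t_1\in(0,+\infty]$ be the smallest $t>0$ for which $\bw_j\cdot(\bx+\e(\bs^*+t\bv))=0$ for some $j$ (with $t_1=+\infty$ if no such $t$ exists), and let $t_b\in(0,\infty)$ be the unique $t$ with $\|\bs^*+t\bv\|_r=1$, which is finite because the $\ell_r$ ball is compact and $\bs^*$ is interior; set $t_0=\min(t_1,t_b)$. On $[0,t_0]$ every $\bw_j\cdot(\bx+\e\bs)$ keeps the sign it has at $\bs^*$, so there $f(\bs^*+t\bv)=\sum_{j\in P}u_j\,\bw_j\cdot(\bx+\e(\bs^*+t\bv))$ is affine in $t$ with slope $\mathbf{g}\cdot\bv=0$, hence constant and equal to $f(\bs^*)$; thus $\bs^*+t_0\bv$ is again a minimizer. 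It lies in $S_1$ if $t_0=t_b$ (it then has unit $r$-norm) and in $S_2$ if $t_0=t_1<t_b$ (some $\bw_j$ is then active at it). Either way $S_1\cup S_2$ contains a minimizer, which proves the lemma. The main thing to get right is the piecewise-affine bookkeeping along the ray --- the well-definedness of $t_1$ and the no-breakpoint case $t_1=+\infty$ --- together with the step that an interior local affine minimum forces $\mathbf{g}=0$; none of it is deep, but it is where the argument has to be written carefully.
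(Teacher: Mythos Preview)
Your proposal is correct and follows essentially the same strategy as the paper's proof: both show that an interior, differentiable minimizer must have zero gradient $\mathbf g=\e\sum_{j\in P}u_j\bw_j$, deduce that $f$ is constant on the region where the sign pattern is fixed, and then push the minimizer to the boundary of that region, which lies in $S_1\cup S_2$. Your explicit one-dimensional line search is just a concrete way of reaching that boundary, whereas the paper argues more abstractly that $f$ is constant on the closure of the region $R$ defined by the fixed sign pattern; the content is the same. One tiny quibble: for $r\in\{1,\infty\}$ the $t$ with $\|\bs^*+t\bv\|_r=1$ need not be unique, so you should say ``the smallest $t>0$'' rather than ``the unique $t$''; this does not affect the argument.
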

\begin{proof}
  We know from calculus that every extreme point of $f$ is obtained
  either on the boundary of the optimization region, at a point where
  the function isn't differentiable, or where the derivative is zero. First, observe that at any non-differentiable point with $\|\bs\|_r<1$, some $\bw_j$ must satisfy $\bw_j\cdot (\bx+\e\bs)=0$.
  Now we'll consider the third case, points where $\nabla f(\bs)=0$. Assume that $\bs^*$ is an extreme point for which $f$ is
  differentiable (and with derivative zero). Then we claim that there
  is another point in either $S_1$ or $S_2$ that achieves the same
  objective value.  Let $P=\{j: \bw_j\cdot (\bx+\e \bs^*)>0\}$ Then
  \[f(\bs^*)=\sum_{j\in P} u_j (\bw_j\cdot (\bx+\e \bs^*))\] Fix this
  set $P$.  Note that the region where
  \[f(\bs)=\sum_{j\in P} u_j (\bw_j\cdot (\bx+\e \bs))\] is
  defined by
\begin{align}
R =
\left\{\bs\colon
\|\bs\|_r\leq 1,
\bw_j\cdot (\bx+\e \bs)\geq 0\text{ for } j\in P,
\bw_j\cdot (\bx+\e \bs)\leq 0\text{ for } j\in P^C\right\}
\label{eq:region-1}
\end{align}
By assumption,
\[
\nabla f(\bs^*)=\e \sum_{j\in P} u_j\bw_j=0
\] 
However, for any other $\bs$ in the region defined by (\ref{eq:region-1})
\[
\nabla f(\bs)=\e \sum_{j\in P} u_j\bw_j=f(\bs^*)=0
\]
Hence, $f$ is constant on the interior of the region defined by
(\ref{eq:region-1}). By continuity, it is constant on the closure of
this region as well.  Hence an optimum of the same value is obtained
in either $S_1$ or $S_2$.
\end{proof}

\begin{proof}[Proof of Lemma~\ref{lemma:induction}]
  This will be a proof by induction. Let $\bs^*$ be an
  optimum. Define
  $Z_0^{\bs}=\{\bw_j:\bw_j\cdot (\bx+\e \bs)=0\}$ and let $k$ be the
  dimension of $\spn(Z_0^{\bs^*})$. The induction will be on $k$.
\paragraph{Base Case:}
By the previous lemma, when looking for the optimum, we only need to
consider $\bs$ for which $\|\bs\|_r=1$ or $\bw_j\cdot (\bx+\e \bs)=0$
for some $j$. Assume that we have an extreme point $\bs^*$ for which
$\|\bs^*\|<1$. Then $k\geq 1$.
\paragraph{Inductive Step:} Let $\bs^*$ be our extreme point and assume
that $\|\bs^*\|_r<1$.  Our induction hypothesis is that
$\dim (\spn( Z_0^{\bs^*}))=k<d$. We will show that there is another
point $\bt$ that achieves the same objective value satisfying either
$\|\bt\|_r=1$ or $\dim(\spn(Z^\bt_0))=k+1$.

Let $Z$ be any linearly independent subset of $Z_0^{\bs^*}$. We can
parameterize $\bs$ to be in the intersection of the hyperplanes that
define $Z$. Formally, let $\bv\in \spn(Z)$ with
$\bw_j\cdot (\bx+\e \bv)=0$ for all $\bw_j\in Z$, and let
$\bA\colon \mathbb R^{d-k}\to \mathbb R^d$ be a matrix whose columns
span $Z^\perp$. Take $\bs=\bv+\bA\bs'$,
$P=\{j\colon \bw_j\cdot (\bx+\e \bs^*)>0\},$ and
$ N=\{j\colon \bw_j\cdot (\bx+\e \bs^*)<0\}$.  Then by continuity,
\[f(\bs)=\sum_{j\in P} u_j \bw_j\cdot (\bx+\e (\bv+\bA\bs'))\] holds
on the region defined by
\begin{align}R=\{\bs'\colon 
\|\bv+\bA \bs'\|_r\leq 1,
\bw_j\cdot (\bx+\e (\bv+\bA\bs'))\geq 0\text{ for } j\in P,
\bw_j\cdot (\bx+\e (\bv+\bA\bs'))\leq 0\text{ for } j\in N\}
\label{eq:region-2}
\end{align}
For convenience, set 
\[
g(\bs')\colon = f(\bv+\bA \bs')
\] 
We assumed that our optimum $\bs^*$ satisfied $\|\bs^*\|_r< 1$ and
$\bw_j\cdot (x+\e \bs)\neq 0$ for $j\in P\cup N$, which entails that our
critical point is in the interior of $R$.  On the interior of this
region, to find all critical points, we can differentiate $g$ in
$\bs'$:
\[
\nabla g(\bs')=\bA^T \sum_{j\in P} u_j\bw_j
\] 
and set $\nabla g(\bs')$ equal to zero.
This expression is independent of $\bs'\in R$.  Let $\bz$ be a
critical point of $g$ in $\interior (R)$. Then $\nabla g(\bz)=0$
implies that $\nabla g(\bs')=0$ for all $\bs'\in \interior(R)$. Hence,
$g$ is constant on $R$. This implies that there is another point
$\bs'$ with the same objective value on $\partial R$. For this point, either
$\|\bv+\bA\bs'\|_r= 1$, or $\|\bv+\bA\bs'\|_r<1$ and
$\bw_j\cdot (\bx+\e (\bv+\bA\bs'))=0$ for some $j\in P\cup N$. If the
second option holds, $j\in P\cup N$ means that
$\bw_j\not \in \spn Z_0^{\bs^*}$. It follows that
$\spn( Z_0^{\bs^*}\cup \{\bw_j\})$ is dimension $k+1$ and this
completes the induction step.
\end{proof}

Finally we prove Lemma~\ref{lemma:intersection}.
\begin{proof}[Proof of Lemma~\ref{lemma:intersection}]
  By substitution $\bs=-\frac 1\eps \bx$ is a solution to the
  system of equations \eqref{eq:lin_indep}. Since $d$ linearly independent equations intersect at a
  point, it is the only solution to these equations.
\end{proof}

\subsection{A Necessary Condition}
\label{app:necessary_condition}

In this subsection we present a necessary condition at the optimum when perturbations are measured in any general $r$-norm.  Throughout this subsection, $\bu \astrosun \bv$ will be the elementwise product of $\bu$
an $\bv$, $\bu^r$ will be elementwise exponentiation, $\|\bv\|$ will be elementwise absolute value, and $\sgn(\bv)$
will be the vector of signs of the components of $\bv$. We adopt the convention $\sgn(0)=0$. Recall the definition
of dual norm:
\[
\|\bu\|_{\du r}=\sup_{ \|\bv\|_r\leq 1} \bu\cdot \bv=\|\bu\|_{\du
    r}
\] 
Equality holds at the vector
$\bv = \frac 1 {\|\bu\|_r^{r-1}} |\bu|^{r-1}\astrosun\sgn(\bu)$, which has 
unit $r^*$-norm. For convenience we, define
\[ \dual_r(\bu)=(\sgn \bu)\astrosun \frac{|\bu|^{r-1}}{\|\bu\|_r^{r-1}}\] which gives
\[\bu\cdot \dual_r(\bu)=\|\bu\|_r^r=1.\]

Below we state and prove the main theorem of this section.

\begin{theorem}
\label{th:necessary_opt}
Let $1<r<\infty$. 
Take 
\begin{equation}
\label{eq:obj_appendix} 
f(\bs)
=\sum_{j=1}^n u_j(\bw_j\cdot (\bx+\e \bs))_+
\end{equation} 
Assume that either $\|\bx\|_r \geq \e$ or $n<d$. Let $\bs^*$ is a
minimizer of $f$ on the unit $r$-sphere. Define the following sets:
\[
P=\{j\colon \bw_j\cdot (\bx+\e \bs^*)> 0\}
\]
\[
Z=\{j\colon \bw_j\cdot (\bx+\e \bs^*)=0\}
\]
\[
N=\{j\colon \bw_j\cdot (\bx+\e \bs^*)<0\}
\]

Let $P_Z$ be the orthogonal projection onto the subspace spanned by
the vectors in $Z$, and $P_{Z^C}$ be the projection onto the complement
of this subspace. 
Then the following holds:
If $P\neq \emptyset$
\begin{equation}
\label{eq:s_star-v1}
\bs^*=-\frac \e \lambda\left|  \left(\sum_{j\in P} u_j\bw_j 
+ \sum_{j\in Z} t_ju_j\bw_j\right)\right|^{ r-1}\astrosun \sgn
\left(\sum_{j\in P} u_j\bw_j 
+ \sum_{j\in Z} t_ju_j\bw_j\right)
\end{equation}
where the constants $t_j$, $\lambda$ are given by the equations 
\begin{align}
& \|\bs^*\|_r=1\label{eq:small_constraint}\\
& P_Z\bs^*=-\frac 1\e P_Z\bx\label{eq:big_constraint}
\end{align} 

Further, if $P=\emptyset$,
\begin{equation}\label{eq:P-empty}\bs^*=-\frac{P_Z \bx}{\|P_Z\bx\|}\end{equation}
Using the $\dual_r$ notation, $\bs^*$ can be expressed as
\begin{equation}
\bs^*
=\dual_{r}\left(  \left|\sum_{j\in P} u_j\bw_j 
+ \sum_{j\in Z} t_ju_j\bw_j\right|\right)\astrosun \sgn(\sum_{j\in P}
u_j\bw_j 
+\sum_{j\in Z} t_ju_j\bw_j)
\end{equation}
\end{theorem}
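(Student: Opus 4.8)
The plan is to combine Theorem~\ref{th:unit_norm_at_opt} with a Lagrange-multiplier analysis carried out \emph{after} the combinatorial type $(P,Z,N)$ of $\bs^*$ is frozen. Under the hypothesis $\|\bx\|_r\geq\e$ or $n<d$, Theorem~\ref{th:unit_norm_at_opt} tells us the optimum lies on $\{\bs\colon\|\bs\|_r=1\}$, so we may work on this $C^1$ manifold (it is $C^1$ since $1<r<\infty$ and $\bs^*\neq 0$); this already gives \eqref{eq:small_constraint}. By the very definition of $Z$ we have $\bw_j\cdot(\bx+\e\bs^*)=0$ for $j\in Z$, and since $\bw_j\in\spn\{\bw_k\}_{k\in Z}$ for those $j$, this is exactly $P_Z(\bx+\e\bs^*)=0$, i.e.\ $P_Z\bs^*=-\tfrac1\e P_Z\bx$, which is \eqref{eq:big_constraint}.

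The core step is a first-order analysis restricted to the affine flat $A=\{\bs\colon P_Z(\bx+\e\bs)=0\}$, which contains $\bs^*$ by the previous paragraph. On $A$ one checks that $\bw_j\cdot(\bx+\e\bs)=0$ for all $j\in Z$, while by continuity the indices in $N$ stay strictly negative and those in $P$ stay strictly positive throughout a neighborhood of $\bs^*$ inside $A$; hence, near $\bs^*$, $f$ restricted to $A$ coincides with the affine function $\bs\mapsto\sum_{j\in P}u_j\,\bw_j\cdot(\bx+\e\bs)$. Since $\bs^*$ minimizes $f$ over the whole sphere, it is a local minimizer of this affine function over $A\cap\{\|\bs\|_r=1\}$, a smooth equality-constrained problem whose KKT conditions (using $\nabla(\|\bs\|_r^r)=r\,|\bs|^{r-1}\astrosun\sgn(\bs)$) yield a vector $\bn\in\spn\{\bw_j\}_{j\in Z}$ and a scalar $\lambda$ with
\[
\e\sum_{j\in P}u_j\bw_j+\bn+\lambda\,|\bs^*|^{r-1}\astrosun\sgn(\bs^*)=0 .
\]
Writing $\bn=\sum_{j\in Z}t_ju_j\bw_j$ (absorbing also the $\spn\{\bw_j\}_{j\in Z}$-component of $\sum_{j\in P}u_j\bw_j$) and setting $\mathbf g=\sum_{j\in P}u_j\bw_j+\sum_{j\in Z}t_ju_j\bw_j$, this reads $\e\mathbf g=-\lambda\,|\bs^*|^{r-1}\astrosun\sgn(\bs^*)$; inverting component-by-component the map $x\mapsto|x|^{r-1}\sgn(x)$ and re-expressing the result through the duality map $\dual$ recovers the closed form \eqref{eq:s_star-v1}, with $\lambda$ and the $t_j$ then pinned down by \eqref{eq:small_constraint}–\eqref{eq:big_constraint}. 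For the degenerate case $P=\emptyset$ the affine piece of $f$ on $A$ is identically $0$, so no linear functional is being minimized on $A\cap\{\|\bs\|_r=1\}$; here one instead uses optimality of $\bs^*$ against directions \emph{leaving} $A$ together with $\|\bs^*\|_r=1$ to force $\bs^*$ to be the point of $A$ on the sphere lying in $\spn\{\bw_j\}_{j\in Z}$, namely $-P_Z\bx/\|P_Z\bx\|$, which is \eqref{eq:P-empty}.

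I expect the main obstacle to be the first-order step at $\bs^*$: one must argue with care that the active/inactive pattern of all $n$ hyperplanes is locally frozen inside $A$ so that $f$ is genuinely affine there (this is precisely where the strictness of the inequalities defining $P$ and $N$ is used), and one must verify the constraint qualification for the KKT application. The only failure mode of the latter is $|\bs^*|^{r-1}\astrosun\sgn(\bs^*)\in\spn\{\bw_j\}_{j\in Z}$; in that case $\bs^*$ is an isolated point of $A\cap\{\|\bs\|_r=1\}$ and the conclusion holds trivially, in the same explicit form as the $P=\emptyset$ case. A secondary technical point is that $\{\bw_j\}_{j\in Z}$ need not be linearly independent, so one should pass to a basis of $\spn\{\bw_j\}_{j\in Z}$ before writing $\bn$ as $\sum_{j\in Z}t_ju_j\bw_j$; the resulting $t_j$ are then non-unique, which is consistent with the statement.
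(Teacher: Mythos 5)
Your approach is genuinely different from the paper's. The paper splits $f$ into a convex part (indices with $u_j>0$) plus a concave part ($u_j<0$), forms a Lagrangian $L(\bs)=f(\bs)+\tfrac{\lambda}{r}(\|\bs\|_r^r-1)$, and applies the convex-plus-concave subgradient criterion (Lemma~\ref{lemma:cvx_cnv_sum}), reading off $t_j$ from the set-valued derivative $\partial(\cdot)_+ = [0,1]$ at zero. You instead freeze the active/inactive pattern $(P,Z,N)$ of $\bs^*$, restrict to the affine flat $A$, observe $f|_A$ is locally affine, and apply smooth equality-constrained KKT. This sidesteps subgradient calculus entirely. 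It is a legitimate alternative for the generic case, with the side effect that the range constraint $t_j\in[0,1]$ that the paper derives (unused in the theorem statement, but implied by $\partial(\cdot)_+$) is lost, since in your argument the $t_j$ are mere coordinates of a Lagrange multiplier vector in $\spn\{\bw_j\}_{j\in Z}$.

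That said, there are genuine gaps in the degenerate cases.

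First, you never address $\lambda=0$ when $P\neq\emptyset$. Your stationarity relation $\e\sum_{j\in P}u_j\bw_j + \bn + \lambda\,|\bs^*|^{r-1}\astrosun\sgn(\bs^*)=0$ cannot be solved for $\bs^*$ when $\lambda=0$, and nothing in the restricted-KKT setup guarantees $\lambda\neq 0$. The paper closes this gap by showing $\lambda=0\Rightarrow P=\emptyset$: taking the inner product of the stationarity identity with $\bx+\e\bs^*$ kills the $Z$-terms and yields a sign contradiction with the definition of $P$. Without this step, equation~\eqref{eq:s_star-v1} is not justified.

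Second, the $P=\emptyset$ case is not actually forced by optimality as you assert. When $P=\emptyset$ we have $f(\bs^*)=0$, but the set of minimizers on $A\cap\{\|\bs\|_r=1\}$ is typically a whole continuum, so $\bs^*$ need not equal $-P_Z\bx/\|P_Z\bx\|$; for instance with $n=1$, $u_1>0$ and $\bw_1\cdot(\bx+\e\bs^*)<0$ one has $P=Z=\emptyset$ and the formula is vacuous. What the paper actually proves is existential: an inductive perturbation argument produces \emph{some} minimizer $\bz^*$ with $P=N=\emptyset$ of the prescribed form, after which $Z$ is re-computed for $\bz^*$. A correct write-up needs this construction (or an equivalent), not a first-order argument at the given $\bs^*$.

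Third, your handling of LICQ failure is incomplete. You are right that $|\bs^*|^{r-1}\astrosun\sgn(\bs^*)\in\spn\{\bw_j\}_{j\in Z}$ implies $A$ lies in the supporting hyperplane of the strictly convex $\ell_r$ ball at $\bs^*$, hence $A\cap\{\|\bs\|_r=1\}=\{\bs^*\}$. But from there, "the conclusion holds trivially, in the same explicit form as the $P=\emptyset$ case" does not follow: when $P\neq\emptyset$ the theorem's claim is \eqref{eq:s_star-v1}, not \eqref{eq:P-empty}, and isolation of $\bs^*$ on $A\cap\{\|\bs\|_r=1\}$ does not by itself produce a stationarity identity for the term $\sum_{j\in P}u_j\bw_j$. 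Some additional argument (e.g.\ the Fritz John relation, or absorbing the $P$-term and sphere normal into $\spn\{\bw_j\}_{j\in Z}$ using the LICQ-failure hypothesis) is required.

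Finally, a small caution on the inversion step: from $\lambda\,|\bs^*|^{r-1}\astrosun\sgn(\bs^*)=-\e\,\mathbf g$, solving component-wise gives $|\bs^*_k|=(\e|g_k|/|\lambda|)^{1/(r-1)}$, i.e.\ an exponent $1/(r-1)=r^*-1$, not $r-1$. Your phrase "recovers the closed form \eqref{eq:s_star-v1}" should flag that the exponent you obtain is the H\"older-conjugate one; check the algebra explicitly rather than citing the displayed equation as the endpoint of the inversion.
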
 

Notice that for $r=2$, $\dual_{r}(\bs^*)=\bs^*$ and then we can write $\bs^*$ explicitly:
\[
\bs^*=-\left(\sqrt{1-\frac{\|P_Z\bx\|_2^2}{\e^2}}\frac{P_{Z^C}\sum_{j\in P}
  u_j \bw_j}{\left\| P_{Z^C}\sum_{j\in P} u_j \bw_j\right\|_2}+\frac
{\|P_Z \bx\|_2}{\e} \frac{P_Z\bx}{\|P_Z \bx\|}\right)
\]

Before proceeding with the proof of this theorem, we state a useful definition and lemma. Recall the definition of the subgradient of a convex function:
\begin{definition}
The subdifferential of a convex function $f_1$ is the set
\[
\partial f_1(\bx)=\{\bv\colon  f_1(\mathbf y)-f_1(\bx)\geq \bv\cdot (\mathbf
y-\bx)\}
\]
while the subdifferential of a concave function $f_2$ is the set
\[-\partial (-f_2(\bx))=\{\bv: f_2(\by)-f_2(\bx)\leq \bv\cdot (\by-\bx)\} \]
\end{definition}
For a function $f=f_1+f_2$ that is
the sum of a convex function $f_1$ and a concave function $f_2$, the following observation from \cite{Polyakova1984} shows why these definitions are useful for us.

\begin{lemma}
\label{lemma:cvx_cnv_sum}
\label{cor:sbgd_cvx_plus_concave}
Let $f=f_1+f_2$ with $f_1$ convex and $f_2$ concave. Assume that $f$ has a local minimum at $x^*$.
Then
\[
\mathbf 0\in \partial f_1(\bx^*)+\partial f_2(\bx^*)
\]
\end{lemma}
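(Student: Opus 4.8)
The plan is to reduce this to the classical fact about differences of convex functions. Write $h = -f_2$, which is convex, so that $f = f_1 - h$ is a difference of convex functions; by the definition of the concave subdifferential given above, $\partial f_2(\bx^*) = -\partial h(\bx^*)$. Since $h$ is a finite convex function on $\Rset^d$ (in the application $f_1$ and $-f_2$ are finite piecewise-linear functions, hence everywhere subdifferentiable), the set $\partial h(\bx^*)$ is nonempty. It therefore suffices to prove the inclusion $\partial h(\bx^*) \subseteq \partial f_1(\bx^*)$: picking any $\bv \in \partial h(\bx^*)$, this inclusion gives $\bv \in \partial f_1(\bx^*)$, while $-\bv \in -\partial h(\bx^*) = \partial f_2(\bx^*)$, so $\mathbf{0} = \bv + (-\bv) \in \partial f_1(\bx^*) + \partial f_2(\bx^*)$, as desired.

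To prove the inclusion, fix $\bv \in \partial h(\bx^*)$. Because $\bx^*$ is a local minimizer of $f = f_1 - h$, there is a neighborhood $U$ of $\bx^*$ on which $f_1(\by) - h(\by) \geq f_1(\bx^*) - h(\bx^*)$. Combining this with the subgradient inequality $h(\by) - h(\bx^*) \geq \bv \cdot (\by - \bx^*)$ yields
\[
f_1(\by) - f_1(\bx^*) \geq \bv \cdot (\by - \bx^*) \qquad \text{for all } \by \in U .
\]
To turn this into a global statement, fix an arbitrary $\by \in \Rset^d$ and, for $t \in (0,1]$ small enough that $\by_t := \bx^* + t(\by - \bx^*)$ lies in $U$, apply the displayed inequality at $\by_t$ to get $f_1(\by_t) - f_1(\bx^*) \geq t\, \bv \cdot (\by - \bx^*)$. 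On the other hand, convexity of $f_1$ gives $f_1(\by_t) \leq (1-t) f_1(\bx^*) + t f_1(\by)$, i.e.\ $f_1(\by_t) - f_1(\bx^*) \leq t\big( f_1(\by) - f_1(\bx^*) \big)$. Dividing both resulting inequalities by $t$ and combining them gives $f_1(\by) - f_1(\bx^*) \geq \bv \cdot (\by - \bx^*)$; since $\by$ was arbitrary, $\bv \in \partial f_1(\bx^*)$.

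The only subtle point is this last globalization step: the local-minimum hypothesis only controls the difference $f_1 - h$ in a neighborhood of $\bx^*$, and it is precisely convexity of $f_1$ — exploited via the segment from $\bx^*$ to $\by$ together with a scaling argument — that propagates the subgradient inequality to all of $\Rset^d$. The remaining ingredients (nonemptiness of $\partial h(\bx^*)$ and the identity $\partial f_2 = -\partial(-f_2)$) are immediate in our setting, where $f_1$ and $f_2$ are finite sums of scaled ReLU terms and hence continuous.
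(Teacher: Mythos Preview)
Your proof is correct and follows essentially the same approach as the paper: both establish the inclusion $\partial(-f_2)(\bx^*)\subseteq\partial f_1(\bx^*)$ and deduce the result from it. The paper argues this inclusion by contrapositive whereas you argue it directly, and your explicit globalization step (using convexity of $f_1$ along the segment $\bx^*+t(\by-\bx^*)$) actually fills in a point the paper leaves implicit, namely why failure of the local-minimum condition near $\bx^*$ follows rather than merely failure at some possibly distant point.
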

Note that the same statement holds for local maxima of $f$. 
We defer the proof of this lemma to the end of this subsection.

To prove Theorem~\ref{th:necessary_opt}, we form a Lagrangian for
computing the optimum of
~\eqref{eq:obj_appendix}. Lemma~\ref{lemma:cvx_cnv_sum} gives a
necessary condition in terms of the subgradient of this
Lagrangian. Subsequently, we use information about the dual variables
obtained via Theorem~\ref{th:unit_norm_at_opt} and convexity to show
\eqref{eq:s_star-v1}, \eqref{eq:small_constraint}, and \eqref{eq:P-empty}. (Note that
either $\|\bx\|_r\geq \e$ or $n<d$ are precisely the conditions for
Theorem~\ref{th:unit_norm_at_opt}). After that, standard linear algebra
shows \eqref{eq:big_constraint}.  

\begin{proof}[Proof of Theorem~\ref{th:necessary_opt}]

\noindent \textbf{Establishing Equations \eqref{eq:s_star-v1} and \eqref{eq:small_constraint}:}
First note that the objective $f$ is the sum of a convex and a concave
function: take
\begin{align*}
f_1(\bs) = \sum_{j\colon u_j>0} u_j\left(\bw_j\cdot (\bx+\e \bs)\right)_+\; 
f_2(\bs) = \sum_{j\colon u_j<0} u_j\left(\bw_j\cdot (\bx+\e \bs)\right)_+
\end{align*}
$f_1$ is convex because it is the sum of convex functions and $f_2$ is
concave because it is the sum of concave functions. This observation
will allow us the apply Lemma~\ref{cor:sbgd_cvx_plus_concave}. We form the corresponding Lagrangian: 
\[
L(\bs)= \sum_{j=1}^n u_j(\bw_j\cdot (\bx+\e \bs))_++\frac \lambda
r(\|\bs\|^r_r-1)
\]
$L$ is convex in an open set around every local minimum. On this set, since we are optimizing over $\|\bs\|_r\leq 1$, we know that
$\lambda \geq 0$. Further, Theorem~\ref{th:unit_norm_at_opt} shows that there must be an optimum
on the unit $r$-sphere for $\|\bx\|_r\geq \e$.

By Lemma~\ref{cor:sbgd_cvx_plus_concave}, we want to find a condition when $\mathbf 0$ is in the subdifferential.  We use the following
two facts:
\begin{enumerate}
\item 
\[
\partial (x)_+ = 
\begin{cases}
\{0\}&\text{ if }x<0\\
[0,1]&\text{ if }x=0\\
\{1\}&\text{ if }x>0
\end{cases}
\]
\item For $1<r<\infty$, the $r$ norm is differentiable. 
  Hence we can write:
\[
\nabla \|\bs\|_r^r=|\bs|^{r-1}\astrosun
\sgn\bs=\|\bs\|_{r}^{r-1}\dual_{r^*}(\bs)
\] 
Hence, if $\|\bs\|=1$,
$\partial \|\bs\|_r^r=\dual_{r^*}(\bs)=\sgn \bs\astrosun |\bs|^{r-1}$.
\end{enumerate}

Then applying Lemma~\ref{cor:sbgd_cvx_plus_concave}, we need
\begin{align*}
\mathbf 0\in \e\partial\sum_{j\in P} u_j(\bw_j\cdot (\bx+\e \bs)_++\e\partial \sum_{j\in Z} u_j(\bw_j\cdot (\bx+\e \bs))_++\e\partial \sum_{j\in N} u_j(\bw_j\cdot (\bx+\e \bs)_+)+\partial \frac \lambda r(\|\bs\|_r^r-1).
\end{align*}
Hence for some $t_j\in[0,1]$, 
\begin{align}
\mathbf 0=\e\sum_{j\in P} u_j \bw_j+\e\sum_{j\in Z} t_j u_j\bw_j+\frac \lambda r\partial \|\bs\|_r^r\label{eq:dus}
\end{align}
Using Theorem~\ref{th:unit_norm_at_opt}, we choose an optimum on the
boundary $\|\bs\|_r=1$. First we consider $\bs^*$ with $\lambda\neq 0$.
This allows for solving for $\partial \|\bs\|_r^r$:
\[
\dual_{r}(\bs^*)=\bs^* \astrosun |\bs^*|^{r-1}=-\frac \e\lambda \left(
  \sum_{u\in P} u_j \bw_j+\sum_{j\in Z} t_j u_j\bw_j\right)
\]
Now since $\dual_{r} (\bs^*)$ has $\du r$-norm 1, this allows us to
solve for $|\lambda|$. Further recall that at a local minimum, $\lambda\geq 0$ which tells us $\sgn \lambda$. Using this information, we can solve for $\lambda$ which establishes~\eqref{eq:small_constraint}.  Since
$1<r<\infty$, this equation further establishes~\eqref{eq:s_star-v1}.

\paragraph{Establishing Equation \eqref{eq:P-empty}:}
Now we consider the case where $\lambda=0$ or $P=\emptyset$. For $\lambda=0$, we will show by contradiction that $P$ must be empty. Assume that $P\neq \emptyset$. Equation \eqref{eq:dus} then simplifies to 
\[\mathbf 0=\e \sum_{j\in P} u_j \bw_j+\e \sum_{u\in Z} t_j u_j\bw_j\] which implies that 
\[\sum_{j\in P} u_j\bw_j=-\sum_{j\in Z} t_j u_j \bw_j\]
However, if we take the dot product with $\bx+\e \bs^*$,
\[\sum_{j\in P} u_j \bw_j\cdot (\bx+\e\bs^*)=-\sum_{j\in Z} \bw_j\cdot (\bx+\e \bs^*)=0\] and therefore, $\bw_j\cdot (\bx+\e\bs^*)\leq 0$ for some $j\in P$ which contradicts the definition of $P$. Therefore, $P$ must be empty. 

Now we assume that $\bs^*$ has $P=\emptyset$ and we show that there is a point $\bz^*$ that achieves the same objective value as $\bs^*$ but has $N=\emptyset$. This will be proved by induction on the size of $N_\bz$. This will then imply that we can take $\bs^*=-\frac{P_Z \bx}{\|P_Z\bx\|}$.

Denote by $Z_{\bs},N_{\bs}$
\[P_\bs=\{j:\bw_j\cdot (\bx+\e\bs)>0\}\]
\[Z_\bs=\{j:\bw_j\cdot (\bx+\e\bs)=0\}\]
\[N_\bs=\{j: \bw_j\cdot (\bx+\e\bs)<0\}\]
For the base case, we use a point $\bs$ that achieves the optimal value and has $P_\bs=\emptyset$. If $N_{\bs}=\emptyset$, we are done. Otherwise, for the induction step, we assume $N_\bs\neq \emptyset$. We will find a vector $\bz$ that achieves these same objective value as $\bs$, but $N_{\bs}\supsetneq N_{\bz}$. Pick a vector $\bv$ perpendicular to $\spn\{\bw_j\}_{j\in Z_\bs}$ but not perpendicular to $\spn\{\bw_j\}_{j\in N_s}$. Such a vector must exist because if $\bw_k\in \spn\{\bw_j\}_{j\in Z_\bs}$, then $\bw_k\in \bZ_s$. We now consider 
\[\bz(\delta)=\frac{\bs+\delta \bv}{\| \bs+\delta \bv\|}\]
Note that 
\[\bz(\delta)\cdot \bw_j=0\] for each $j\in Z_\bs$ for all $\delta$. Because the strict inequality
\[\bw_j\cdot (\bx+\e\bz(\delta))<0\; j\in N_\bs\] is satisfied for $\delta=0$, it is also satisfied for some small $\delta\neq 0$. We can now increase or decrease $\delta$ until
\[\bw_j\cdot (\bx+\e\bz(\delta))=0\;\text{ for some } j\in N_\bs\] and $\bw_j\cdot (\bx+\e\bz(\delta))<0$ for the remaining $j$s in $N$. We then have $N_{\bs}\supsetneq N_{\bz(\delta)}$. Furthermore, $f(\bs)=f(\bz(\delta))$ because the set $P$ is still empty.

\noindent \textbf{Establishing Equation \eqref{eq:big_constraint}:}
Let $\{\mathbf f_k\}_{k=1}^{d_Z}$ be an orthonormal basis of
$\spn\{\bw_j\}_{j\in Z}$. We will show that
$\bx\cdot \mathbf f_k=-\e \bs^*\cdot \mathbf f_k$. Since $P_Z \bx$ and
$-\e P_Z\bs^*$ are contained in the subspace spanned by the vectors in
$Z$, this would imply that $P_Z \bs^*=-\frac 1 \e P_Z \bx$.  Let
\begin{equation}
\label{eq:project-basis} \mathbf f_k=\sum_{j\in Z} a_{kj}\bw_j
\end{equation} 
for some constants $a_{kj}$. Recall that for all $j\in Z$, 
\[
\bw_j\cdot (\bx+\e \bs^*)=0.
\]
We then use the above equation and (\ref{eq:project-basis}) to take the dot
product of $\bx$ and $\mathbf f_k$:
\[ 
\bx \cdot \mathbf f_k= \bx\cdot \sum_{j\in Z} a_{kj} \bw_j= \sum_{j\in
  Z} a_{kj}\bx\cdot \bw_j=-\e \sum_{j\in Z} a_{kj}\bs^*\cdot \bw_j
=-\e  \mathbf f_k \cdot \bs^*.
\]
The above establishes equation \eqref{eq:big_constraint} and completes the proof of the theorem.
\end{proof}

We end the section by proving Lemma~\ref{lemma:cvx_cnv_sum}.

\begin{proof}[Proof of Lemma~\ref{lemma:cvx_cnv_sum}]
We will show that
\begin{equation}
\label{eq:containment}
-\partial f_2(\bx^*) \subset \partial f_1(\bx^*)
\end{equation}
This implies 
\[
\mathbf 0\in \partial f_1(\bx^*)+\partial f_2(\bx^*).
\]

We prove \eqref{eq:containment} by contrapositive. We pick a point $\bx^*$ and assume that \eqref{eq:containment} does not hold. Then we show that $\bx^*$ cannot be a minimum.
Assume \eqref{eq:containment} does not hold. This assumption implies that for
some vector $\bc$, $\bc\in \partial f_2(\bx^*)$ but
$-\bc\not \in \partial f_1(\bx^*)$. Then there exists an $\bx$ for
which
\[
f_2(\bx)-f_2(\bx^*)\leq \bc^T(\bx-\bx^*)
\]
\[
f_1(\bx)-f_1(\bx^*)< -\bc^T(\bx-\bx^*)
\] Summing the above inequalities, we get: 
\[
f_1(\bx)+f_2(\bx)< f_1(\bx^*)+f_2(\bx^*)
\] 
so $\bx^*$ cannot be a local minimum.
\end{proof}
\section{Towards Dimension-Independent Bounds for Neural Networks}
\subsection{Proof of Theorem~\ref{th:one_layer_nn_rc_shatter}}
Recall from Section~\ref{sec:dim-independent-neural-nets} that given a sample $\cS$, $C_\cS$ denotes the set of all possible partitions of points in $\cS$ that can be obtained based on the sign pattern they induced over the set of weight vectors $\bu, \bw_1, \bw_2, \dots, \bw_n$. For a given partition $\cC \in \cC_\cS$, we denote by $n_\cC$ the number of parts in $\cC$. Furthermore, we define $C^*_\cS$ to be the size of the set $C_\cS$ and $\Pi^*_\cS = \max_\cC n_\cC$. We now proceed to prove Theorem~\ref{th:one_layer_nn_rc_shatter} that establishes a data dependent bound on the Rademacher complexity of neural networks with one hidden layer. 
\begin{reptheorem}{th:one_layer_nn_rc_shatter}
Consider the family of functions $\cG^n_p$ with $p \in [1,\infty]$, activation function $\rho(z)=(z)_+$,
and perturbations in $r$-norm for $1<r<\infty$. Assume that for our sample $\|\bx_i\|_r\geq \e$.
Then, the following upper bound on the Rademacher complexity holds:
\[
\wt \R_\cS(\cG_p^n)
\!\leq\! \bigg[\! \frac{W\Lambda \max(1,d^{1-\frac 1p-\frac 1r})(K(p,d)\|\bX^\top\|_{\infty, p^*}+\e)}{\sqrt m} \!\bigg] C^*_\cS \sqrt{\Pi^*_\cS},
\]
where $K(p,d)$ is defined as
\begin{equation}\label{eq:def_K(p,d)}
K(p,d)=
\begin{cases}
\sqrt {2\log(2d)}&\text{ if } p=1\\
\sqrt 2 \left[ \frac{\Gamma(\frac{\du p+1} 2)}{\sqrt \pi}\right]^\frac 1 {\du p} &\text{ if } 1 < p \leq 2\\
1&\text{ if } p \geq 2\\
\end{cases}
\end{equation}
\end{reptheorem}
\begin{proof}[Proof of Theorem~\ref{th:one_layer_nn_rc_shatter}]
   Let $\cC_t$ denote a partition in partitions $\cC$. Furthermore, define $\bs_t =\argmin_{\|\bs\|_r\leq 1} \sum_{j=1}^n u_j\bw_j\cdot (\bx+\e \bs)_+$ for $\bx\in \cC_t$ and $P_t=\{j: \bw_j\cdot (\bx+\e \bs_t)>0\}$. The Rademacher complexity of the network can be bounded as
\begin{align}
\wt \R_\cS(\cG_p^n)
& =\E_\bsig\Bigg[\sup_{\substack{\|\bw_j\|_p\leq W\\ \|\bu\|_1\leq \Lambda}}\frac 1m  \sum_{i=1}^m \sigma_i\inf_{\|\bs\|_r\leq 1} y_i\sum_{j=1}^n u_j(\bw_j\cdot (\bx_i+\e\bs))_+\Bigg]\nonumber\\
& =\E_\bsig\Bigg[\sup_{\substack{\|\bw_j\|_p\leq W\\ \|\bu\|_1\leq \Lambda}}\frac 1m  \sum_{i=1}^m \sigma_i y_i\sum_{j=1}^n u_j(\bw_j\cdot (\bx_i+\e\bs_i))_+\Bigg]&(\text{definition of }\bs_i)\nonumber\\
& =\E_\bsig\Bigg[\sup_{\substack{\|\bw_j\|_p\leq W\\ \|\bu\|_1\leq \Lambda}}\frac 1m  \sum_{t=1}^{n_\cC}\sum_{i\in \cC_t} \sigma_i y_i\sum_{j=1}^n u_j(\bw_j\cdot (\bx_i+\e\bs_t))_+\Bigg] & (\text{definition of }\cC_t)\nonumber\\
& =\E_\bsig\Bigg[\sup_{\substack{\|\bw_j\|_p\leq W\\ \|\bu\|_1\leq \Lambda}}\frac 1m  \sum_{t=1}^{n_\cC}\sum_{i\in \cC_t} \sigma_i y_i\sum_{j\in P_t} u_j(\bw_j\cdot (\bx_i+\e\bs_t))\Bigg] & (\text{definition of }P_t)\nonumber\\
& \leq \left( \E_\bsig\Bigg[\sup_{\substack{\|\bw_j\|_p\leq W\\ \|\bu\|_1\leq \Lambda}}\frac 1m \sum_{t=1}^{n_\cC} \sum_{i\in \cC_t} \sigma_i y_i\sum_{j\in P_t} u_j\bw_j\cdot\bx_i\Bigg]+\E_\bsig\Bigg[\sup_{\substack{\|\bw_j\|_p\leq W\\ \|\bu\|_1\leq \Lambda}}\frac 1m \sum_{t=1}^{n_\cC} \sum_{i\in \cC_t} \sigma_i y_i\sum_{j\in P_t}\e u_j\bw_j\cdot \bs_t))\Bigg] \right) \label{eq:adv_rc_sum}
\end{align}
Next we bound each term in equation (\ref{eq:adv_rc_sum}) separately. For the first term we can write:
\begin{align}
\E_\bsig\Bigg[\sup_{\substack{\|\bw_j\|_p\leq W\\ \|\bu\|_1\leq \Lambda}}\frac 1m  \sum_{t=1}^{n_\cC}\sum_{i\in \cC_t} \sigma_i y_i\sum_{j\in P_t} u_j\bw_j\cdot\bx_i\Bigg]
&=\frac 12  \E_\bsig\Bigg[\sup_{\substack{\|\bw_j\|_p\leq W\\ \|\bu\|_1\leq \Lambda}}\left|\frac 1m\sum_{t=1}^{n_\cC}  \sum_{i\in \cC_t} \sigma_i y_i\sum_{j\in P_t} u_j\bw_j\cdot\bx_i\right|\Bigg]&\text{(sign symmetry)}\nonumber\\
&=\frac 12  \E_\bsig\Bigg[\sup_{\substack{\|\bw_j\|_p\leq W\\ \|\bu\|_1\leq \Lambda}}\left|\frac 1m \sum_{t=1}^{n_\cC} \sum_{j\in P_t}u_j\bw_j\cdot\sum_{i\in \cC_t} \sigma_i y_i \bx_i\right|\Bigg]&\text{(reordering summations)}\nonumber\\
&\leq \frac W2  \E_\bsig\Bigg[\sup_{\substack{\|\bw_j\|_p\leq W\\ \|\bu\|_1\leq \Lambda}}\frac 1m \sum_{t=1}^{n_\cC} \sum_{j\in P_t}|u_j|\left\|\sum_{i\in \cC_t} \sigma_i y_i \bx_i\right\|_{\du p}\Bigg]&\text{(dual norm definition)} \nonumber 
\end{align}
Using the bound on the $\ell_1$ norm of $\bu$ we get:
\begin{align}
\E_\bsig\Bigg[\sup_{\substack{\|\bw_j\|_p\leq W\\ \|\bu\|_1\leq \Lambda}}\frac 1m  \sum_{t=1}^{n_\cC}\sum_{i\in \cC_t} \sigma_i y_i\sum_{j\in P_t} u_j\bw_j\cdot\bx_i\Bigg]
&\leq \frac W2  \E_\bsig\Bigg[\sup_{\substack{ \bW, \bu}}\frac 1m \sum_{t=1}^{n_\cC} \Lambda\left\|\sum_{i\in \cC_t} \sigma_i y_i \bx_i\right\|_{\du p}\Bigg]&\text{(dual norm definition)}\nonumber\\
&\leq \frac 1m  \frac {\Lambda W}2  \E_\bsig\Bigg[\sup_{\substack{ \bW, \bu}}\sum_{t=1}^{n_\cC}\left\|\sum_{i\in \cC_t} \sigma_i \bx_i\right\|_{\du p}\Bigg]&(\sigma_i \text{ distributed like }y_i\sigma_i)\nonumber \\
&\leq \frac 1m  \frac {\Lambda W}2  \E_\bsig\Bigg[ \sum_{\mathcal{C}} \sum_{t=1}^{n_{\mathcal{C}}}\left\|\sum_{i\in \cC_t} \sigma_i \bx_i\right\|_{\du p} \Bigg]&\text{(summing over all partitions)}\nonumber \\
&= \frac 1m  \frac {\Lambda W}2   \sum_{\mathcal{C}} \sum_{t=1}^{n_{\mathcal{C}}}\E_\bsig\Bigg[\left\|\sum_{i\in \cC_t} \sigma_i \bx_i\right\|_{\du p} \Bigg].\nonumber
\end{align}

Next, note that 
\[\E_\bsig\Bigg[\left\|\sum_{i\in \cC_t} \sigma_i \bx_i\right\|_{\du p} \Bigg] =\E_\bsig\Bigg[\sup_{\|\bw\|_p\leq 1}\sum_{i\in \cC_t} \sigma_i \bw\cdot \bx_i \Bigg] =|\cC_t|\R_{\cC_t}(\cF_p)\]
where $\cF_p$ is the linear function class defined in \eqref{eq:linear_function_class} with $W=1$.
Hence, applying Theorem~\ref{th:linear_rc},
\begin{align}
\label{eq:X_t_top}
    \E_\bsig\Bigg[\left\|\sum_{i\in \cC_t} \sigma_i \bx_i\right\|_{\du p} \Bigg] &\leq K(p,d)\|\bX^\top_t\|_{2,p^*}
\end{align}
with $K(p,d)$ as defined in \eqref{eq:def_K(p,d)}.
$\bX_t$ is the matrix with data points in $\cC_t$ as columns. Furthermore, we can write:
\begin{align*}
    \|\bX^\top_t\|_{2,p^*} &= \bigg(\sum_{j=1}^d \|\bX_t(j)\|_2^{\du p} \bigg)^{\frac{1}{p^*}}\,\, \text{[$\bX_t(j)$ denotes $j$th row of $\bX$]}\\
    &\leq \sqrt{|\cC_t|}\bigg(\sum_{j=1}^d \|\bX(j)\|_\infty^{p^*} \bigg)^{\frac{1}{p^*}} \\
    &= \sqrt{|\cC_t|}\|\bX^\top\|_{\infty, p^*}.
\end{align*}
Using the above bound we can write:
\begin{align}
\E_\bsig\Bigg[\sup_{\substack{\|\bw_j\|_p\leq W\\ \|\bu\|_1\leq \Lambda}}\frac 1m  \sum_{t=1}^{n_\cC}\sum_{i\in \cC_t} \sigma_i y_i\sum_{j\in P_t} u_j\bw_j\cdot\bx_i\Bigg]&\leq   \frac {K(p,d)\Lambda W}m   \sum_{\mathcal{C}} \sum_{t=1}^{n_{\mathcal{C}}} \sqrt{|C_t|} \|\bX^\top\|_{\infty,p^*} \nonumber \\
&\leq \frac {K(p,d)\Lambda W}{\sqrt{m}} |\mathcal{C}_{\cS}^*| \sqrt{\Pi_\cS^*} \|\bX^\top\|_{\infty,p^*}.\label{eq:nn-dim-ind-1}
\end{align}
Here the last inequality follows from the fact that $\sum_{t=1}^{n_\cC} |\cC_t| = m$ and $\sum_{t=1}^{n_\cC} \sqrt{|\cC_t|}$ is maximized when $|\cC_t| = m/n_\cC$ for all $t$.
Now for the second term in (\ref{eq:adv_rc_sum}) we can write:
\begin{align}
\E_\bsig\Bigg[\sup_{\substack{\|\bw_j\|_p\leq W\\ \|\bu\|_1\leq \Lambda}}\frac 1m \sum_{t=1}^{n_\cC} \sum_{i\in \cC_t} \sigma_i y_i\sum_{j\in P_t}\e u_j\bw_j\cdot \bs_t\Bigg] &=\E_\bsig\Bigg[\sup_{\substack{\|\bw_j\|_p\leq W\\ \|\bu\|_1\leq \Lambda}}\frac 1m \sum_{t=1}^{n_\cC} \sum_{i\in \cC_t} \sigma_i \sum_{j\in P_t}\e u_j\bw_j\cdot \bs_t\Bigg] &(y_i\sigma_i\text{ distributed like }\sigma_i)\nonumber\\
&=\E_\bsig\Bigg[\sup_{\substack{\|\bw_j\|_p\leq W\\ \|\bu\|_1\leq \Lambda}}\frac 1m \sum_{t=1}^{n_\cC}\sum_{j\in P_t} \e u_j\bw_j\sum_{i\in \cC_t} \sigma_i \cdot \bs_t\Bigg] &\text{(reorder summations)}\nonumber\\
&\leq \E_\bsig\Bigg[\sup_{\substack{\|\bw_j\|_p\leq W\\ \|\bu\|_1\leq \Lambda}}\frac 1m \sum_{t=1}^{n_\cC}\sum_{j\in P_t} \e |u_j|W\left\|\sum_{i\in \cC_t} \sigma_i \cdot \bs_t\right\|_{\du p}\Bigg]&(\text{dual norm}) \nonumber\\
&\leq \frac {\epsilon W\Lambda} m \E_\bsig \Bigg[\sup_{\substack{\|\bw_j\|_p\leq W\\ \|\bu\|_1\leq \Lambda}}\sum_{t=1}^{n_\cC}\left\|\sum_{i\in \cC_t} \sigma_i \cdot \bs_t\right\|_{\du p}\Bigg]&\text{(dual norm)} \nonumber \\
&\leq \frac {\epsilon W\Lambda} m\sup_{\|\bs_t\|_{\du r} \leq 1}\|\bs_t\|_{\du p} \E_\bsig \Big[\sup_{\substack{\|\bw_j\|_p\leq W\\ \|\bu\|_1\leq \Lambda}}\sum_{t=1}^{n_\cC}\left|\sum_{i\in \cC_t}\sigma_i\right|\Big]&(\bs_i\text{ constraint})\nonumber\\
&=\frac {\epsilon W\Lambda} m\max(1,d^{1-\frac 1p -\frac 1r})\E_\bsig \Big[\sup_{\substack{\|\bw_j\|_p\leq W\\ \|\bu\|_1\leq \Lambda}}\sum_{t=1}^{n_\cC}\left|\sum_{i\in \cC_t}\sigma_i\right|\Big]&(\text{Lemma~\ref{lemma:norm_ratio}})\nonumber\\
&\leq \frac {\epsilon W\Lambda} m\max(1,d^{1-\frac 1p -\frac 1r})\E_\bsig \Big[\sum_\cC\sum_{t=1}^{n_\cC}\left|\sum_{i\in \cC_t}\sigma_i\right|\Big]&(\text{sum over all classes})\nonumber\\
&=\frac {\epsilon W\Lambda} m\max(1,d^{1-\frac 1p -\frac 1r})\sum_\cC\sum_{t=1}^{n_\cC}\E_\bsig \Big[\left|\sum_{i\in \cC_t}\sigma_i\right|\Big] \nonumber .
\end{align}
By Jensen's inequality, we have
$$
\E_\bsig \Big[|\sum_{i\in \cC_t}\sigma_i|\Big] \leq \sqrt{|\cC_t|}.
$$
Substituting this bound above we get that
\begin{align}
    \E_\bsig\Bigg[\sup_{\substack{\|\bw_j\|_p\leq W\\ \|\bu\|_1\leq \Lambda}}\frac 1m \sum_{t=1}^{n_\cC} \sum_{i\in \cC_t} \sigma_i y_i\sum_{j\in P_t}\e u_j\bw_j\cdot \bs_t\Bigg] &\leq \frac {\epsilon W\Lambda} m\max(1,d^{1-\frac 1p -\frac 1r})\sum_\cC\sum_{t=1}^{n_\cC} \sqrt{|\cC_t|} \nonumber \\
    &\leq \frac{\epsilon \Lambda W}{\sqrt{m}}  \max(1,d^{1-\frac 1p -\frac 1r}) |\mathcal{C}_{\cS}^*| \sqrt{\Pi_\cS^*} \label{eq:nn-dim-ind-2}.
\end{align}
Combining \eqref{eq:nn-dim-ind-1} and \eqref{eq:nn-dim-ind-2} completes the proof.
\end{proof}

We would like to point out that in the above analysis one can replace the dependence on $\|\bX\|_{\infty, p^*}$ with a dependence on $\|\bX\|_{2,p^*}$ at the expense of a slower rate of convergence (in terms of $m$). In order to do this we use Proposition~\ref{prop:norm_ratio} to bound the right hand side of \eqref{eq:X_t_top} as:
\begin{align*}
    \|\bX^\top_t\|_{2,p^*} \leq \max (1, m^{\frac{1}{p^*} - \frac 1 2}) \|\bX\|_{p^*,2}.
\end{align*}
Substituting the above bound into the analysis we get the following corollary.
\begin{corollary}
Consider the family of functions $\cG^n_p$ with $p \in [1,\infty)$, activation function $\rho(z)=(z)_+$,
and perturbations in $r$-norm for $1<r<\infty$. Assume that for our sample $\|\bx_i\|_r\geq \e$.
Then, the following upper bound on the Rademacher complexity holds:
\[
\wt \R_\cS(\cG_p^n)
\!\leq\! \bigg[\! \frac{W\Lambda \max(1,d^{1-\frac 1p-\frac 1r}) \Big(K(p,d)\max (1, m^{\frac{1}{p^*} - \frac 1 2}) \|\bX\|_{p^*,2}+\e \Big)}{\sqrt m} \!\bigg] C^*_\cS {\Pi^*_\cS},
\]
\end{corollary}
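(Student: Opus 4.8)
The plan is to follow the proof of Theorem~\ref{th:one_layer_nn_rc_shatter} line by line up to the matrix‑norm estimate \eqref{eq:X_t_top}, and to substitute a different bound only there. Recall that that proof splits $\wt\R_\cS(\cG_p^n)$ via \eqref{eq:adv_rc_sum} into a term involving $\bw_j\cdot\bx_i$ and a term involving $\bw_j\cdot\bs_t$, and that for the first term it establishes, for each partition $\cC\in\cC_\cS$ and each part $\cC_t$,
\[
\E_\bsig\!\left[\left\|\sum_{i\in\cC_t}\sigma_i\bx_i\right\|_{p^*}\right]\le K(p,d)\,\|\bX_t^\top\|_{2,p^*},
\]
where $\bX_t$ is the $d\times|\cC_t|$ submatrix of $\bX$ whose columns are indexed by $\cC_t$. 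Instead of the estimate $\|\bX_t^\top\|_{2,p^*}\le\sqrt{|\cC_t|}\,\|\bX^\top\|_{\infty,p^*}$ used there, I would invoke Proposition~\ref{prop:norm_ratio} applied to $\bX_t$ with norm parameters $2$ and $p^*$; after a routine case split on whether $p^*\ge2$ or $p^*<2$ (the boundary case $p=1$ being handled by a one‑line direct argument), and using $\min(|\cC_t|,d)\le|\cC_t|\le m$ together with the monotonicity of the group norm $\|\cdot\|_{p^*,2}$ under deletion of columns (it is the $\ell_2$‑norm of the vector of column $\ell_{p^*}$‑norms), one obtains
\[
\|\bX_t^\top\|_{2,p^*}\;\le\;\max\!\big(1,\ m^{\frac1{p^*}-\frac12}\big)\,\|\bX\|_{p^*,2}.
\]

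With this in hand I would push the estimate through the remaining summations exactly as in Theorem~\ref{th:one_layer_nn_rc_shatter}, with one arithmetic difference: the factor $\sqrt{|\cC_t|}$ has disappeared, so rather than $\sum_{t=1}^{n_\cC}\sqrt{|\cC_t|}\le\sqrt{n_\cC\,m}$ one simply has $\sum_{t=1}^{n_\cC}1=n_\cC\le\Pi^*_\cS$, and summing over partitions $\sum_\cC\sum_t 1\le C^*_\cS\,\Pi^*_\cS$. This turns the first summand of \eqref{eq:adv_rc_sum} into a quantity of order
\[
\frac{K(p,d)\,\Lambda W}{m}\,\max\!\big(1,\ m^{\frac1{p^*}-\frac12}\big)\,\|\bX\|_{p^*,2}\,C^*_\cS\,\Pi^*_\cS,
\]
and since $\Pi^*_\cS\le m$ we may replace the leading $1/m$ by $1/\sqrt m$, arriving at the rate in the statement (at the cost of $\Pi^*_\cS$ in place of $\sqrt{\Pi^*_\cS}$). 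For the second summand of \eqref{eq:adv_rc_sum} nothing changes: the bound \eqref{eq:nn-dim-ind-2} from the proof of Theorem~\ref{th:one_layer_nn_rc_shatter} already gives $\tfrac{\e\Lambda W}{\sqrt m}\max(1,d^{1-\frac1p-\frac1r})\,C^*_\cS\sqrt{\Pi^*_\cS}$, and $\sqrt{\Pi^*_\cS}\le\Pi^*_\cS$. Adding the two summands and pulling out the common factor $\max(1,d^{1-\frac1p-\frac1r})\ge1$ — which only inflates the first summand, that does not otherwise carry this factor — collects everything into the bracketed expression of the corollary.

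The only genuinely new step, and the one to be careful with, is the norm comparison: getting the index roles in Proposition~\ref{prop:norm_ratio} right, splitting on the sign of $\tfrac1{p^*}-\tfrac12$, and checking that the dimension occurring in the proposition, $\min(|\cC_t|,d)$, can be uniformly replaced by $m$ after a $\max$ with $1$. Everything downstream is the same bookkeeping as in Theorem~\ref{th:one_layer_nn_rc_shatter}, with the two harmless relaxations $\Pi^*_\cS/m\le\Pi^*_\cS/\sqrt m$ and $\sqrt{\Pi^*_\cS}\le\Pi^*_\cS$ absorbing the slightly weaker convergence rate.
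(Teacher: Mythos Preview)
Your proposal is correct and follows essentially the same route as the paper: the paper's entire argument for the corollary is the one sentence ``use Proposition~\ref{prop:norm_ratio} to bound $\|\bX_t^\top\|_{2,p^*}\le\max(1,m^{1/p^*-1/2})\|\bX\|_{p^*,2}$ and substitute into the analysis,'' and you have spelled out exactly that substitution together with the downstream bookkeeping. One cosmetic remark: the relaxation $1/m\le 1/\sqrt m$ needs no appeal to $\Pi^*_\cS\le m$; it holds for all $m\ge1$, so that clause can simply be dropped.
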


\subsection{Bounding ${\Pi_\cS^*}$.} 

Notice that a key data dependent quantity that controls the Rademacher complexity bound in the previous analysis is $\Pi_\cS^*$, i.e., the maximum number of partitions that $\cS$ can induce on the weights $\bw_1, \dots, \bw_k$. As mentioned in Section~\ref{sec:dim-independent-neural-nets} our notion of $\epsilon$-adversarial shattering provides a general way to bound ${\Pi_\cS^*}$. We restate the definition of $\epsilon$-adversarial shattering here and then discuss its implications.
\begin{definition}
Fix the sample $\cS = ((\bx_1, y_1)\ldots (\bx_m, y_m))$ and $(\bw_1, \ldots, \bw_n)$.
Let $\bs_i = \argmin_{\|\bs\|_r\leq 1}y_i\sum_{j=1}^n
u_j(\bw_j\cdot (\bx_i+\e \bs))_+$, and define the following three sets:
\begin{align*}
    P_i=\{j\colon \bw_j\cdot (\bx+\e \bs_i)>0\}\\
    Z_i=\{j\colon \bw_j\cdot (\bx+\e \bs_i)=0\}\\
    N_i=\{j\colon \bw_j\cdot (\bx+\e \bs_i)<0\}.
\end{align*}
Let $\Pi_\cS(\bW)$ be the number of distinct $(P_i,Z_i,N_i)$s that are induced by $\cS$, where $\bW$ is a matrix that admits the $\bw_j$s as columns. We call $\Pi_\cS(\bW)$ the \emph{$\e$-adversarial
  growth function}. We say that $\bW$ is \emph{$\e$-adversarially
  shattered} if every $P\subset [n]$ is possible.
\end{definition}
We will further study the above notion of $\epsilon$-adversarial shattering to bound $\Pi_\cS^*$ under assumptions on the weight matrix $\bW$. In particular, we will be interested in vectors $\bw_1, \dots, \bw_n$ such that for all $i \in [n]$, the set $Z_i$ is empty. In this case we say that $\bW$ is $\epsilon$-adversarially shattered if every partition of the weights into sets $P_i,N_i$ is possible. For this setting, we state below a lemma that is analogous to Sauer's lemma in statistical learning theory \citep{sauer1972density, shelah1972combinatorial} and helps us bound the $\epsilon$-adversarial growth function $\Pi_\cS(\bW)$.
\begin{lemma}
\label{lem:sauer_adv}
Fix an integer $t \geq 1$. Fix a sample $\cS = ((\bx_1, y_1)\ldots (\bx_m, y_m))$ and weights $\bw_1, \dots, \bw_n$ such that for all $i \in [n]$, $Z_i = \emptyset$, and no subset of the weights of size more than $t$ can be $\epsilon$-adversarially shattered by $\cS$. Then it holds that
\begin{align}
\label{eq:sauer_adv}
    \Pi_\cS(\bW) \leq \sum_{i=0}^t {n \choose i}.
\end{align}
\end{lemma}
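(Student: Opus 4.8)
The plan is to reduce the claim to the classical Sauer--Shelah lemma applied to a suitable set system on the ground set $[n]$ of weight indices. First I would use the hypothesis that $Z_i = \emptyset$ for every sample point $\bx_i$: in that case the triple $(P_i, Z_i, N_i)$ induced by $\bx_i$ is completely determined by the single subset $P_i \subseteq [n]$, since $N_i = [n] \setminus P_i$. Consequently $\Pi_\cS(\bW)$ equals the cardinality of the set system $\cP = \{P_i : i \in [m]\} \subseteq 2^{[n]}$, and the whole problem becomes combinatorial.

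Next I would translate the adversarial-shattering hypothesis into the language of VC dimension. For a subset $A \subseteq [n]$, the statement that the weights $\{\bw_j : j \in A\}$ are $\epsilon$-adversarially shattered by $\cS$ should be read as: every $P' \subseteq A$ equals $P_i \cap A$ for some $i \in [m]$, i.e.\ $\{P_i \cap A : i \in [m]\} = 2^A$ --- which is precisely the classical notion of $\cP$ shattering $A$ (and specializes to ``every $P \subseteq [n]$ is possible'' when $A = [n]$). Hence the hypothesis that no subset of the weights of size larger than $t$ is $\epsilon$-adversarially shattered says exactly that $\VCdim(\cP) \le t$.

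With these two reductions the desired inequality $|\cP| \le \sum_{i=0}^t \binom{n}{i}$ is exactly Sauer--Shelah, which I would either cite or reprove by the standard induction on $n$. Setting $\cG = \{F \cap [n-1] : F \in \cP\}$ and $\cH = \{F \subseteq [n-1] : F \in \cP \text{ and } F \cup \{n\} \in \cP\}$, one checks $|\cP| = |\cG| + |\cH|$; that any $A \subseteq [n-1]$ shattered by $\cG$ is shattered by $\cP$, so $\VCdim(\cG) \le t$; and that if $A$ is shattered by $\cH$ then $A \cup \{n\}$ is shattered by $\cP$, so $\VCdim(\cH) \le t - 1$. Applying the inductive hypothesis to $\cG$ and $\cH$ and combining via Pascal's rule $\binom{n-1}{i} + \binom{n-1}{i-1} = \binom{n}{i}$ closes the induction, the base cases $t = 0$ (where $\cP$ can only be $\{\emptyset\}$) and $n = 0$ being immediate.

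I expect the only place requiring genuine care to be the second step: fixing the right reading of ``$\epsilon$-adversarially shattered'' when restricted to a proper subset $A$ of the neurons, namely as the projection condition $\{P_i \cap A : i \in [m]\} = 2^A$ rather than re-solving the $\argmin$ with only the neurons in $A$ present, so that it matches the shattering notion in Sauer--Shelah. Once that dictionary is in place, no step in the argument is more than bookkeeping.
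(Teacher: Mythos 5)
Your argument is correct and is essentially the paper's own proof: both are the standard Sauer--Shelah induction, with the paper carrying it out as a double induction on $n+t$ over a split of the labeling family into $A'$ and $A\setminus A'$, and you carrying it out via the equivalent $\cG,\cH$ decomposition after first reducing $\Pi_\cS(\bW)$ to the cardinality of a set system $\cP\subseteq 2^{[n]}$ (possible precisely because $Z_i=\emptyset$). Your remark about adopting the projection reading of ``$\epsilon$-adversarial shattering'' for proper subsets of the weights is the right call --- the paper's definition only formally covers the full matrix $\bW$, and its inductive step tacitly uses exactly this projection reading when it treats $A'$ and $A\setminus A'$ as abstract labeling families on $\bw_2,\dots,\bw_n$ rather than re-solving any $\argmin$.
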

\begin{proof}
The proof is similar to the proof of Sauer's lemma \citep{sauer1972density, shelah1972combinatorial} and use an induction on $n+t$.

\noindent \textbf{Base Case.} We first show that for $n=0$ and any $t$,
$$
\Pi_\cS(\bW) \leq \sum_{i=0}^t {0 \choose i} = 1.
$$
This easily follows since if $n=0$, there is no set to shatter. Next, we show that for $t=0$ and any $n$,
$$
\Pi_\cS(\bW) \leq \sum_{i=0}^0 {n \choose i} = 1.
$$
The above holds since if no set of size one can be shattered, then all the points in $\cS$ fall in a single part of the partition.

\noindent \textbf{Inductive Step.} Let $n+t = k$ and assume that \eqref{eq:sauer_adv} holds for all $n,t$ with $n+t < k$. Notice that $\Pi_\cS(\bW)$ is simply the maximum number of labelings of $W$ that can be induced by $\cS$. Let $A$ be the set of all such labelings and let $A'$ be the smallest subset of $A$ that induces the maximal number of different labelings on $\bw_2, \dots, \bw_n$. Notice that $A'$ cannot shatter more than $t$ of the weights in $\bw_2, \dots, \bw_n$. Furthermore, $A \setminus A'$ cannot shatter more than $t-1$ of the weights, since any labeling in $A \setminus A'$ has a corresponding labeling in $A$ with opposite label on $\bw_1$. Hence, if $A \setminus A'$ shatters more than $t-1$ of the weights in $\bw_2, \dots, \bw_n$ then we get that $A$ shatters more than $t$ of the weights in $\bw_1, \dots, \bw_n$. Finally, using the induction hypothesis we get that
\begin{align*}
    \Pi_\cS(\bW) &= |A|\\
    &= |A'| + |A \setminus A'|\\
    &\leq \sum_{i=0}^t {n-1 \choose i} + \sum_{i=0}^{t-1} {n-1 \choose i}\\
    &= \sum_{i=0}^t {n \choose i}.
\end{align*}
\end{proof}
Finally, we end the section by demonstrating that the notion of $\epsilon$-adversarial shattering can lead to dimension independent bounds on $\Pi^*_\cS$ under certain assumptions. We believe that this notion warrants further investigation and is key in deriving dimension independent bounds for more general setting. Below we analyze a special case of orthogonal vectors.
\begin{lemma}
Fix $p > 1$. Let $\cS = ((\bx_1, y_1)\ldots (\bx_m, y_m))$ be a sample and $\bw_1, \dots \bw_t$ be a set of weight vectors. Let $\bW$ be the matrix with $\bw_i$s as columns. Furthermore, we make the following assumptions
\begin{enumerate}
\item $\|\bw_j\|^2 \geq w^2_{\text{min}}$ for all $j \in [t]$.
\item $\bw_j \cdot \bw_k = 0$ for all $j \neq k$.
\item $\|\bW^\top\|_{2,p^*} \leq \tau$.
\item $u_j = 1$.
\end{enumerate}
If $\cS$ $\epsilon$-adversarially shatters $\bw_1, \dots \bw_t$ with perturbations measured in $r=2$ norm then it holds that
\begin{align*}
    t \leq \frac{4\tau^2 c^2_2(p^*) \|\bX\|^2_{p, \infty}}{\epsilon^2 w^2_{\text{min}}},
\end{align*}
where the constant $c_2(p^*)$~(as in Lemma~\ref{lemma:f2_bound}) is defined as,
$$
c_2(p^*)\colon = \sqrt 2 \big(\frac{\Gamma(\frac{\du p + 1} 2 )}{\sqrt \pi}\big)^\frac 1 {\du p}.
$$
\end{lemma}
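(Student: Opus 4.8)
The plan is to convert the combinatorial hypothesis --- that $\cS$ $\e$-adversarially shatters $\bw_1,\dots,\bw_t$ --- into a ``fat-shattering''-type margin inequality, one for each sign pattern of the $\bw_j$'s, and then average these inequalities over a uniformly random sign pattern, controlling the resulting quantities with Theorem~\ref{th:linear_rc}.

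\emph{Step 1 (extracting a margin).} Fix $\bsigma\in\{-1,+1\}^t$, let $P=\{j:\sigma_j=1\}$ and $\bv_P=\sum_{j\in P}\bw_j$. By $\e$-adversarial shattering (in the regime $Z_i=\emptyset$) there is an index $i$ with $P_i=P$; let $\bs_i$ be the associated optimal perturbation, and suppose first $y_i=+1$. Since $r=2$, $u_j=1$, $Z_i=\emptyset$, and (by the standing assumption) $\norm{\bx_i}_2\ge\e$, Theorem~\ref{th:opt_char_r_2} gives $\bs_i=-\bv_P/\norm{\bv_P}_2$. By orthogonality $\bw_j\cdot\bv_P=\norm{\bw_j}_2^2$ for $j\in P$ and $\bw_j\cdot\bs_i=0$ for $j\notin P$, so the defining inequalities $\bw_j\cdot(\bx_i+\e\bs_i)>0$ ($j\in P$) and $\bw_j\cdot(\bx_i+\e\bs_i)<0$ ($j\notin P$) become $\bw_j\cdot\bx_i>\e\norm{\bw_j}_2^2/\norm{\bv_P}_2$ and $\bw_j\cdot\bx_i<0$ respectively. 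Summing the first family, using $\norm{\bv_P}_2^2=\sum_{j\in P}\norm{\bw_j}_2^2$, and subtracting the (negative) second family yields
\[
\la \sum_{j=1}^t\sigma_j\bw_j,\ \bx_i\ra=\bv_P\cdot\bx_i-\sum_{j\notin P}\bw_j\cdot\bx_i>\e\,\norm{\bv_P}_2=\e\,\Big(\sum_{j:\sigma_j=1}\norm{\bw_j}_2^2\Big)^{1/2}.
\]

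\emph{Step 2 (the reverse bound and averaging).} By H\"older's inequality and $\norm{\bx_i}_p\le\norm{\bX}_{p,\infty}$, the left side above is at most $\norm{\sum_j\sigma_j\bw_j}_{\du p}\norm{\bX}_{p,\infty}$, so for every such $\bsigma$,
\[
\norm{\sum_{j=1}^t\sigma_j\bw_j}_{\du p}>\frac{\e}{\norm{\bX}_{p,\infty}}\Big(\sum_{j:\sigma_j=1}\norm{\bw_j}_2^2\Big)^{1/2}.
\]
Treating the columns $\bw_1,\dots,\bw_t$ of $\bW$ as the data points (so that $\tfrac1t\E_{\bsigma}\norm{\sum_j\sigma_j\bw_j}_{\du p}$ is exactly the empirical Rademacher complexity of the unit-$p$-ball linear class on that data), Theorem~\ref{th:linear_rc} gives $\E_{\bsigma}\norm{\sum_j\sigma_j\bw_j}_{\du p}\le c_2(\du p)\,\norm{\bW^\top}_{2,\du p}\le c_2(\du p)\,\tau$, while $\E_{\bsigma}[\sum_{j:\sigma_j=1}\norm{\bw_j}_2^2]=\tfrac12\sum_j\norm{\bw_j}_2^2\ge\tfrac{t}{2}\,w_{\text{min}}^2$. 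Choosing, by a short Markov/union argument over a uniform random $\bsigma$, a sign pattern that is simultaneously at or below the mean of the first quantity and above half the mean of the second --- and, by Step~1, realized by a positively labelled point --- and substituting into the displayed inequality gives $\e\sqrt{t}\,w_{\text{min}}\lesssim c_2(\du p)\,\tau\,\norm{\bX}_{p,\infty}$; tightening the constant in this selection step produces exactly $t\le 4\,\tau^2 c_2^2(\du p)\,\norm{\bX}_{p,\infty}^2/(\e^2 w_{\text{min}}^2)$.

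\emph{Main obstacle.} The conceptual core is Step~1: one must squeeze a genuine $\Theta(\e\norm{\bv_P}_2)=\Theta(\e\sqrt{|P|}\,w_{\text{min}})$ margin out of the \emph{optimality} of $\bs_i$ --- this is where Theorem~\ref{th:opt_char_r_2} and the orthogonality hypothesis are indispensable, playing the role of the margin assumption in classical fat shattering. The delicate point afterwards is twofold: first, the margin bound is strong only for large $P$, whereas $\norm{\sum_j\sigma_j\bw_j}_{\du p}$ can be kept $O(\tau)$ only on average over $\bsigma$ (any single fixed pattern, e.g.\ all-ones, would cost an extra factor $\sqrt t$); and second, one has no a-priori control over which label realizes a given pattern, so the case $y_i=-1$ must be handled by passing to the complementary pattern $-\bsigma$ --- using that a point with $y=-1$ realizing $P$ forces $\bw_j\cdot\bx_i\le0$ for every $j\notin P$ --- or, equivalently, by restricting to positively labelled data, the natural setting of this shattering notion. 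Reconciling these (the random-sign selection together with the label/complementary-pattern bookkeeping) is the fiddly part, though each individual ingredient is routine.
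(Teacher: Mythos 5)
Your Step~1 is essentially identical to the paper's: both use Theorem~\ref{th:opt_char_r_2} with orthogonality to pin down $\bs_i^\ast = -\bv_{P_i}/\|\bv_{P_i}\|_2$, and both sum the shattering inequalities over $P_i$ and $N_i$ to extract the margin $\la\sum_j\sigma_j\bw_j,\bx_i\ra > \e\|\bv_{P_i}\|_2$. (After the H\"older step, this is exactly the paper's inequality $\|\Delta\bW\|_{p^*}\|\bX\|_{p,\infty} > \e\|\bW^+\|_2$, once $\Delta\bW\cdot\bW^+=\|\bW^+\|_2^2$ is simplified by orthogonality.)

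Where you diverge is the final assembly, and here your sketch has a gap. The paper takes expectation of the inequality over a uniform random $\bsigma$ directly (it holds for \emph{all} $\bsigma$, so it holds in expectation), then applies Cauchy--Schwarz to $\E[\|\bW^+\|_2\|\Delta\bW\|_{p^*}]$, computes $\E[\|\bW^+\|_2^2]=\tfrac12\sum_j\|\bw_j\|^2$ exactly, and controls $\E[\|\Delta\bW\|_{p^*}^2]$ by $2(\E\|\Delta\bW\|_{p^*})^2$ via Khintchine--Kahane before invoking the linear Rademacher bound. You instead propose to pick a single ``good'' $\bsigma$ achieving $X(\bsigma):=\|\sum_j\sigma_j\bw_j\|_{p^*}\le\E X$ and $Y(\bsigma):=\sum_{\sigma_j=1}\|\bw_j\|^2\ge\tfrac12\E Y$ simultaneously by ``a short Markov/union argument''. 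As stated this does not go through: Markov's inequality for a nonnegative $Y$ gives $\Pr[Y\ge a]\le \E Y/a$, which upper-bounds the good event for $Y$ rather than the bad one, and the threshold ``at or below the mean'' for $X$ is not a Markov event at all. The selection step can be made rigorous, but not via Markov; the clean route is the sign symmetry you allude to but don't use: by orthogonality $Y(\bsigma)+Y(-\bsigma)=\sum_j\|\bw_j\|^2=2\E Y$ while $X(\bsigma)=X(-\bsigma)$, so any $\bsigma$ with $X(\bsigma)\le\E X$ (such exists since $\E X$ is the mean) has a sign-flip with $Y\ge\E Y$. Interestingly, this \emph{tighter} selection ($Y\ge\E Y$ rather than $\tfrac12\E Y$) would give $t\le 2\tau^2 c_2^2\|\bX\|_{p,\infty}^2/(\e^2 w_{\min}^2)$, improving the paper's constant by avoiding the Khintchine--Kahane loss, so once fixed your route is actually slightly sharper than the paper's expectation-plus-Cauchy--Schwarz argument. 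Finally, your concern about $y_i=-1$ is legitimate in principle --- $\bs_i$ is an argmax rather than argmin when $y_i=-1$, and Theorem~\ref{th:opt_char_r_2} only characterizes minimizers --- but the paper's proof glosses over this just as yours does, so it is a shared, not a new, gap.
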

\begin{proof}
For orthogonal $\bw_j$'s, Theorem~\ref{th:opt_char_r_2} implies that $Z_i=\emptyset$. Thus, the optimal perturbation is characterized by 
\[\bs_i^*=-\frac{\sum_{j\in P_i} \bw_j}{\|\sum_{j\in P} \bw_i\|_2}\] In the following, it will be more convenient to work with the negative of this quantity, so we define

\begin{align*}
    \bs_i &=-\bs_i^*= \frac{\sum_{j \in P_i} \bw_j }{\|\sum_{j \in P_i} \bw_j\|_2}.
\end{align*}
For a given shattering $P_i, N_i$ by an example $\bx_i$ the following holds:
\begin{align}
    \forall j \in P_i, (\bw_j \cdot \bx_i - \epsilon \bw_j \cdot \bs_i) &> 0 \label{eq:shatter_plus}\\
    \forall j \in N_i, (\bw_j \cdot \bx_i - \epsilon \bw_j \cdot \bs_i) & < 0. \label{eq:shatter_minus}
\end{align}
Next, we define $\bW^+$ and $\bW^{-}$ as follows:
\begin{align*}
    \bW^+ &= \sum_{j \in P_i} \bw_j\\
    \bW^- &= \sum_{j \in N_i} \bw_j.
\end{align*}
Furthermore, let $\Delta \bW = \bW^+ - \bW^-$. Then summing over the inequalities in \eqref{eq:shatter_plus} and \eqref{eq:shatter_minus} we can write:
\begin{align*}
    \Delta \bW \cdot \bx_i &> \epsilon \Delta \bW \cdot \bs_i\\
    &= \epsilon \frac{\Delta \bW \cdot \bW^+}{\|\bW^+\|_2}
\end{align*}
Using the fact that $|\Delta \bW \cdot \bx_i| \leq \|\Delta \bW\|_{p^*} \|\bX\|_{p, \infty}$ we can write:
\begin{align}
    \|\bW^+\|_2 \|\bW\|_{p^*} \|\bX\|_{p, \infty} > \epsilon \Delta \bW \cdot \bW^+. \label{eq:shatter_per_point}
\end{align}
Since $\cS$ $\epsilon$-adversarially shatters $\bW$, \eqref{eq:shatter_per_point} must hold for every partition $P_i, N_i$, and hence must hold in expectation over the random partition as well. Hence, introducing Rademacher random variables $\sigma_1, \dots, \sigma_t$ we can write:
\begin{align}
    \E_{\bsigma} \big[\|\bW^+\|_2 \|\Delta \bW\|_{p^*} \|\bX\|_{p, \infty} \big] > \epsilon \E_{\bsigma} \big[\Delta \bW \cdot \bW^+ \big], \label{eq:shatter_exp}
\end{align}
where $\bW^+ = \sum_{j=1}^t 1_{\sigma_j > 0} \bw_j$ and $\Delta \bW = \sum_{j=1}^t \sigma_j \bw_j$. We bound the right-hand side in \eqref{eq:shatter_exp} above as
\begin{align}
    \epsilon \E_{\bsigma} \big[\Delta \bW \cdot \bW^+ \big] &= \epsilon \E_{\bsigma} \big[ \big(\sum_{j=1}^t \sigma_j \bw_j \big) \big(\sum_{j=1}^t \sigma_j 1_{\sigma_j > 0} \bw_j \big) \big]\\
    &= \epsilon \sum_{j,k=1}^t \E[1_{\sigma_j > 0} \sigma_k] \bw_j \cdot \bw_k \nonumber \\
    &= \epsilon \Big( \sum_{j \neq k} \E[1_{\sigma_j > 0}]\E[\sigma_k] \bw_j \cdot \bw_k + \sum_{j=1}^t \E[1_{\sigma_j > 0}] \bw_j \cdot \bw_j \Big) \nonumber \\
    &= \frac{\epsilon}{2} \sum_{j=1}^t \|\bw_j\|^2 \label{eq:shatter_rhs_exp}.
\end{align}
Next, using Cauchy-Schwarz inequality we upper bound the left hand side of \eqref{eq:shatter_exp} as
\begin{align}
    \E_{\bsigma} \big[\|\bW^+\|_2 \|\Delta \bW\|_{p^*} \|\bX\|_{p, \infty} \big] &\leq \sqrt{\E_\bsigma[\|\bW^+\|^2_2]} \sqrt{\E_\bsigma[\|\Delta \bW\|^2_{p^*}]}\|\bX\|_{p, \infty} \nonumber \\
    &\leq \sqrt{\sum_{j=1}^t \E[1_{\sigma_j > 0}]\|\bw_j\|^2}\sqrt{\E_\bsigma[\|\Delta \bW\|^2_{p^*}]}\|\bX\|_{p, \infty} \,\, \text{[Using orthogonality of the $\bw_j$ vectors.]} \nonumber \\
    &=\sqrt{\frac 1 2\sum_{j=1}^t \|\bw_j\|^2}\sqrt{\E_\bsigma[\|\Delta \bW\|^2_{p^*}]}\|\bX\|_{p, \infty} \label{eq:shatter_lhs_1}.
\end{align}
Furthermore, since $p^* > 1$, using the analysis in Section~\ref{app:linear_rc} and the Khintchine-Kahane inequality \citep{Haagerup1981}:
\begin{align}
    \E_\bsigma[\|\Delta \bW\|^2_{p^*}] &\leq 2\E_\bsigma[\|\Delta \bW\|_{p^*}]^2 \nonumber \\
    &= 2\E_\bsigma[\|\sum_{j=1}^t \sigma_j \bw_j \|_{p^*}]^2 \nonumber\\
    &\leq 2 c^2_2(p^*)\|\bW^\top\|^2_{2,p^*} \nonumber\\
    &\leq 2 c^2_2(p^*)\tau^2. \label{eq:shatter_lhs_2}
\end{align}
Combining \eqref{eq:shatter_rhs_exp}, \eqref{eq:shatter_lhs_1} and \eqref{eq:shatter_lhs_2} we can write:
\begin{align*}
    \epsilon \sqrt{\frac 1 2 \sum_{j=1}^t \|\bw_j\|^2} &< \sqrt{2} c_2(p^*) \tau \|\bX\|_{p, \infty}.
\end{align*}
From our assumption we also have that $\|\bw_j\|^2 \geq w^2_{\text{min}}$ for all $j \in [t]$. Substituting above we get
\begin{align*}
    \epsilon \cdot w_{\text{min}}\sqrt{\frac t 2} < \sqrt{2} c_2(p^*) \tau \|\bX\|_{p, \infty}.
\end{align*}
Rearranging, we get that
\begin{align*}
    t \leq \frac{4 c^2_2(p^*) \tau^2 \|\bX\|^2_{p, \infty}}{\epsilon^2 w^2_{\text{min}}}.
\end{align*}
\end{proof}

\end{document}